\newtheorem{thm}{Theorem}
\newtheorem{remark}{Remark}
\newtheorem{proposition}{Proposition}
\newtheorem{lemma}{Lemma}
\title{PRESCRIBE: Predicting Single-Cell Responses with \\ Bayesian Estimation}
\author{%
  Jiabei Cheng$^{1}$, 
  Changxi Chi$^{2}$, 
  Jingbo Zhou$^{2}$, 
  Hongyi Xin$^{1}$\thanks{Corresponding authors.}~, 
  Jun Xia$^{3,4}$\footnotemark[1] \\
  $^{1}$Shanghai Jiao Tong University, 
  $^{2}$Westlake University, \\ 
  $^{3}$AIMS Lab, The Hong Kong University of Science and Technology (Guangzhou), \\ 
  $^{4}$The Hong Kong University of Science and Technology \\
  \texttt{\{jiabei\_cheng, hongyi.xin\}@sjtu.edu.cn} \\
  \texttt{junxia@hkust-gz.edu.cn}
}
\begin{document}
\maketitle
\addtocontents{toc}{\protect\setcounter{tocdepth}{-1}}

\begin{abstract}
In single-cell perturbation prediction, a central task is to forecast the effects of perturbing a gene unseen in the training data. The efficacy of such predictions depends on two factors: (1) the similarity of the target gene to those covered in the training data, which informs model (epistemic) uncertainty, and (2) the quality of the corresponding training data, which reflects data (aleatoric) uncertainty. Both factors are critical for determining the reliability of a prediction, particularly as gene perturbation is an inherently stochastic biochemical process. In this paper, we propose \textbf{PRESCRIBE} (\textbf{PRE}dicting \textbf{S}ingle-\textbf{C}ell \textbf{R}esponse w\textbf{I}th \textbf{B}ayesian \textbf{E}stimation), a multivariate deep evidential regression framework designed to measure both sources of uncertainty jointly. Our analysis demonstrates that PRESCRIBE effectively estimates a confidence score for each prediction, which strongly correlates with its empirical accuracy. This capability enables the filtering of untrustworthy results, and in our experiments, it achieves steady accuracy improvements of over 3\% compared to comparable baselines. 
Code is available at \url{https://github.com/Bunnybeibei/PRESCRIBE}.
\end{abstract}

\section{Introduction}
Predicting the effects of perturbations is crucial for advancing biological understanding and the development of targeted genetic therapies.
Recent years have seen significant progress in machine learning models~\cite{roohani_predicting_2024, weinberger_isolating_2023, cheng_gexmolgen_2024}, data generation~\cite{norman_exploring_2019, replogle_mapping_2022}, and benchmarking~\cite{li_benchmarking_2024, wu_perturbench_2024} for this task. However, a critical challenge remains less explored: quantifying prediction uncertainty for individual predictions, especially for perturbations of genes that are not seen during training and are functionally distant to any of the genes in the training set.
Fig.~\ref{fig1-a} illustrates this issue, showing that even models with high average accuracy can make substantial errors on specific predictions.

The first step towards estimating prediction uncertainty is to understand its sources.
Prediction uncertainty arises from the interaction between two primary sources. 
First, \textbf{data (aleatoric) uncertainty} arises from the inherent stochasticity of biological systems, where perturbing a single gene can yield a diverse spectrum of cellular outcomes. 
Second, \textbf{model (epistemic) uncertainty} reflects the model's unfamiliarity with a given input, which is particularly high for out-of-distribution perturbations. 
A practical framework needs to account for both. For example, a prediction is the least reliable when the model's output is far from a highly consistent biological outcome (high model uncertainty, low data uncertainty).
Equally, a prediction is intrinsically uncertain when the biological outcome itself is highly variable (high data uncertainty).

Here, we propose a unified, data-driven metric to capture both data and model uncertainty, inspired by the energy distance (E-distance)~\cite{peidli_scperturb_2024}. E-distance quantifies the similarity between two cell populations by balancing the distance \textit{between} them against the dispersion \textit{within} them. 
In the context of perturbation prediction,
E-distance reconciles both the model uncertainty and the data uncertainty (Fig.~\ref{fig1-b}).
If the ground truth of a perturbation is known a priori,
then the E-distance between the predicted post-perturbation population and the ground truth,
composed of an inter-group distance compensated by a negative intra-group distance term,
reflects the accuracy of the prediction (Fig.~\ref{fig1-c}).
Apparently, the ground truth post-perturbation cell fate distribution of an unseen gene cannot be obtained.
Therefore, E-distance, in its original form, is not directly applicable to the perturbation prediction uncertainty estimation task.
In this work, we adapt the idea of E-distance by jointly modeling the prediction error and the intrinsic post-perturbation dispersion.
We name our perturbation uncertainty estimation metric pseudo E-distance.

To estimate the pseudo E-distance for unseen perturbations, we introduce \textbf{PRESCRIBE} (\textbf{PRE}dicting \textbf{S}ingle-\textbf{C}ell \textbf{R}esponse w\textbf{I}th \textbf{B}ayesian \textbf{E}stimation). PRESCRIBE is a multivariate extension of a deep evidential learning framework (Natural Posterior Network~\cite{charpentier_natural_2021}) that, instead of outputting a single expression profile, simultaneously predicts the post-perturbation state and estimates the prediction's confidence. PRESCRIBE has two key elements: a posterior distribution over the transcriptomic landscape and an \textbf{evidence score} derived from a learned latent density of the perturbation space. The pseudo E-distance (Fig.~\ref{overview}) is defined by combining these terms: the spread of the predicted distribution (measured by its entropy) quantifies data uncertainty, while the evidence score quantifies model uncertainty. 
Intuitively, the evidence score measures the distribution density of the training data in the latent perturbation space within a close vicinity of an unseen target gene or gene combination.
A high evidence score indicates that a prediction is grounded by multiple functionally related gene perturbation instances in the training data, and a low evidence score indicates otherwise.
We model the post-perturbation transcriptomic spread using a Normal-Wishart conjugate Bayesian framework, allowing the variance to be inferred from the perturbation embedding via a trained decoder. For perturbations far from the training data, the predicted distribution defaults to a null state (typically the unperturbed control cell population). This fallback ensures a safe output for unreliable predictions. 

Our main contributions can be summarized as follows:
\begin{itemize}[leftmargin=*]
    \item We propose PRESCRIBE, a novel framework that uses a predicted pseudo E-distance as a unified surrogate for both data and model uncertainty in single-cell perturbation prediction.
    \item We introduce a multivariate extension of the Natural Posterior Network, utilizing an Inverse-Wishart prior to effectively model predictive distributions over multi-dimensional gene expression states.
    \item We demonstrate through comprehensive experiments that PRESCRIBE generates well-calibrated uncertainty scores that improve predictive accuracy by enabling the filtering of unreliable results.
\end{itemize}

\begin{figure*}[t!]
    \centering
    \subfigure[]{
        \label{fig1-a}
        \includegraphics[width=0.3\textwidth]{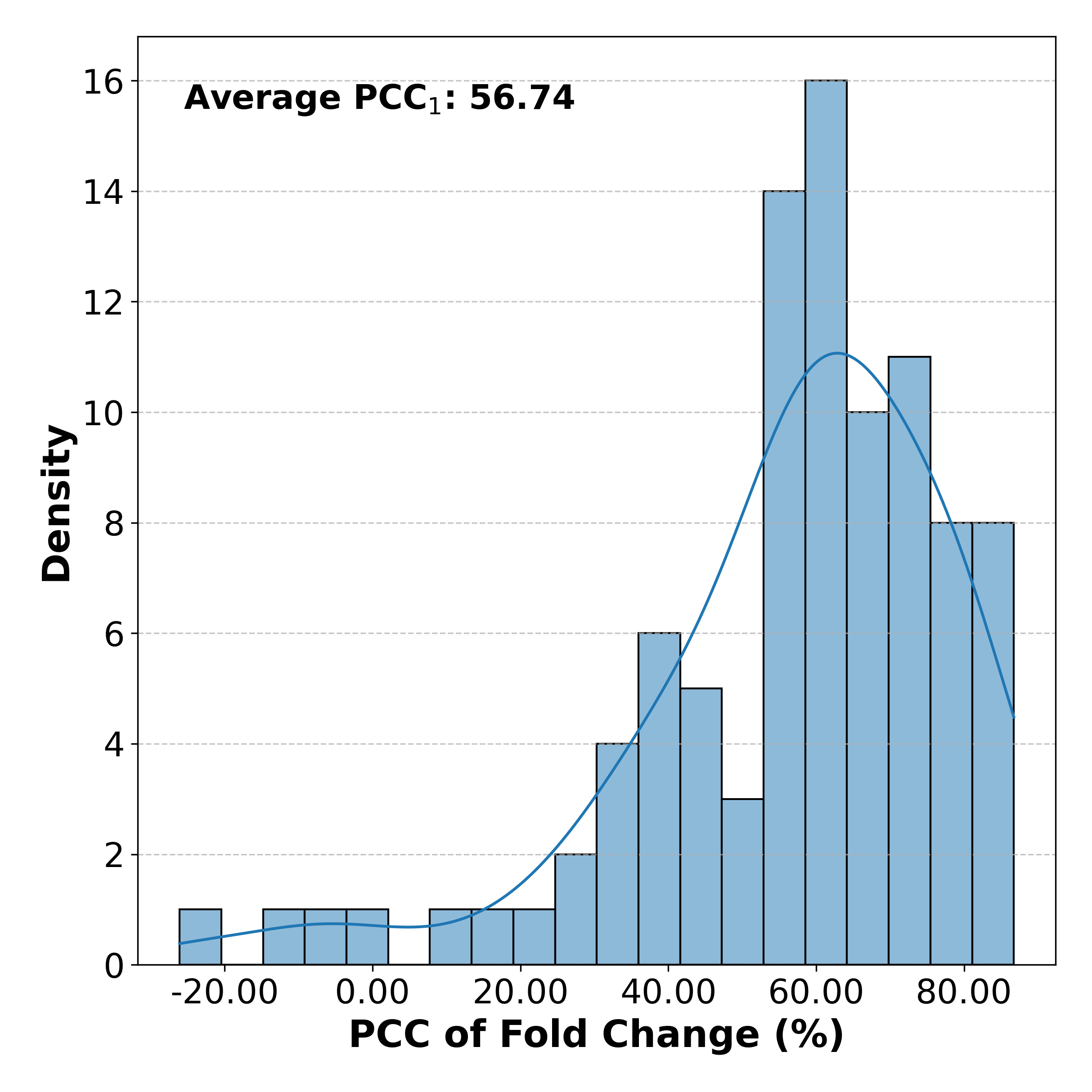}
    }
    \subfigure[]{
        \label{fig1-b}
        \includegraphics[width=0.3\textwidth]{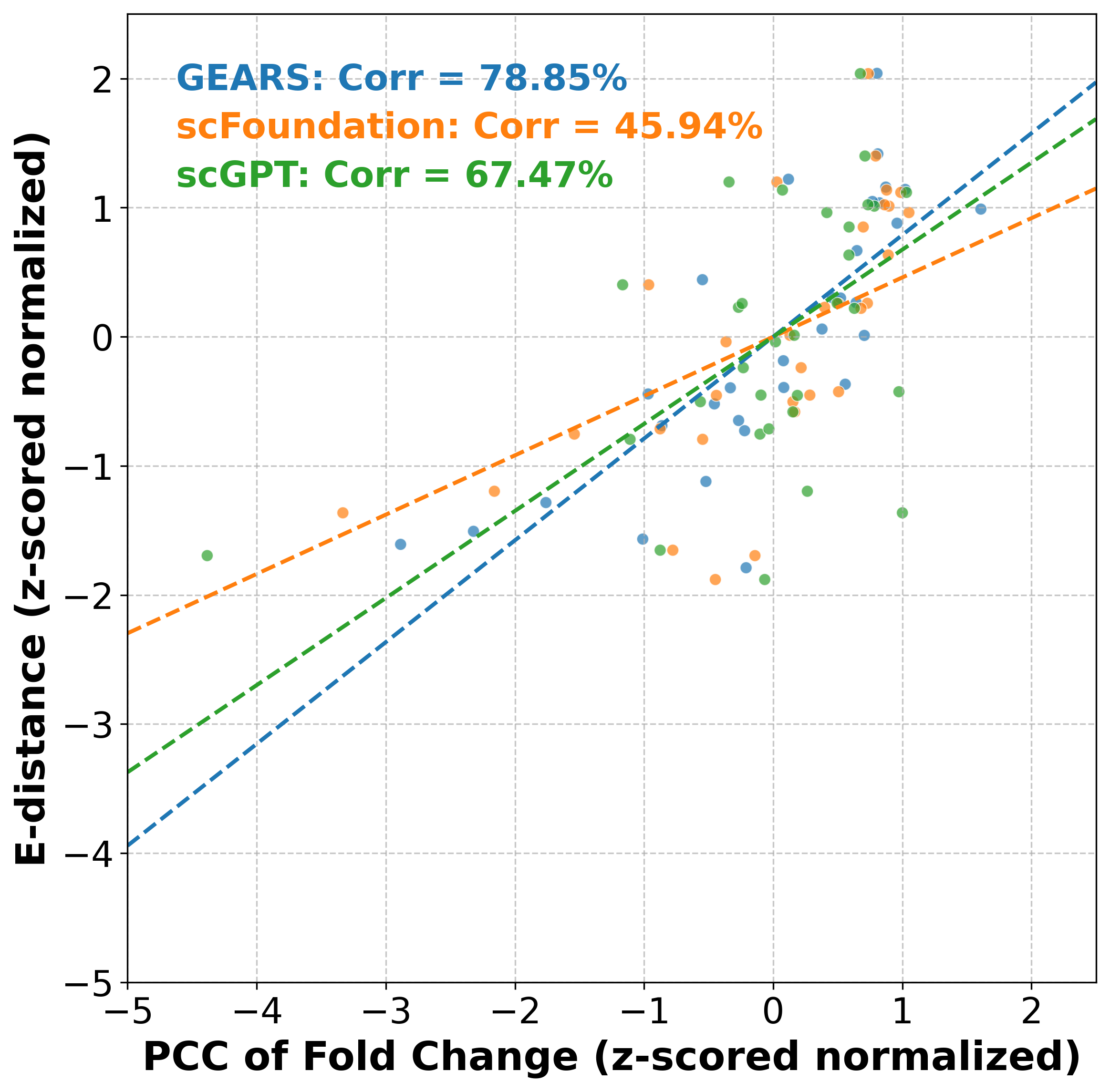}
    }
    \subfigure[]{
        \label{fig1-c}
        \includegraphics[width=0.3\textwidth]{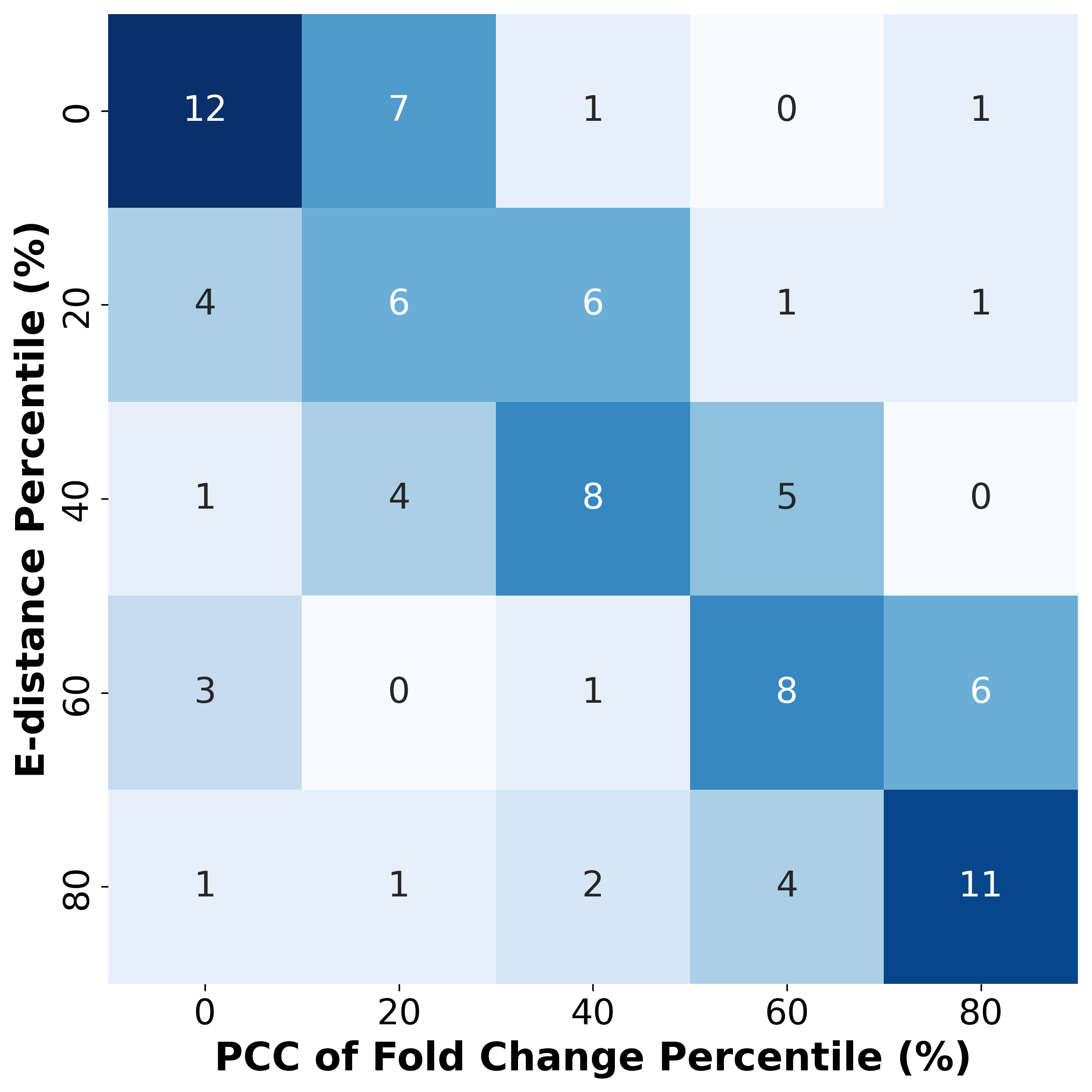}
    }
    \caption{Preliminary experiments on the Norman dataset using three representative models: (a) High overall predictive accuracy does not ensure individual prediction reliability. (b) E-distance exhibits a strong correlation with prediction accuracy across these models. (c) Calibration analysis demonstrates that E-distance percentiles effectively stratify prediction accuracy.}
    \vspace{-13pt}
\end{figure*}

\vspace{-8pt}
\section{Related Work}

\subsection{Predicting Single-Cell Responses}\label{related_single-cell}

In-silico prediction of single-cell responses offers an efficient alternative to costly single-cell perturbations in the wet lab. Current methods generally fall into two main categories: direct matching or disentanglement~\cite{wu_perturbench_2024}. Direct matching methods, such as GEARS~\cite{roohani_predicting_2024} and scGPT~\cite{cui_scgpt_2024}, map control cell gene expressions to perturbed expressions to predict responses to new perturbations. Disentanglement methods, like CPA~\cite{lotfollahi_compositional_2021}, isolate perturbation effects from cellular features (e.g., cell type, dosage) to enable predictions under diverse conditions. Our work aligns with the direct matching strategy, focusing on the challenge of predicting responses to novel perturbations.

A key assumption in matching methods is the ignorability condition. It posits that, conditional on an adequate set of observed covariates, no unmeasured confounding factors would bias the comparison between control and treated cell populations. Within this framework, given a data set $\mathcal{D} = \{\boldsymbol{X},\boldsymbol{y},\boldsymbol{c}\}$, representing $\boldsymbol{M}$ types of perturbation and $\textbf{G}$ genes, where $\boldsymbol{X}=\{x_1,x_2,...,x_{M}\}$ denotes the set of perturbations, $\boldsymbol{y}\in\mathcal{R}^{G}$ is the post-perturbation gene expression profile, and $\boldsymbol{c}$ is the pre-perturbation (control) gene expression profile, the predicted post-perturbation expression $\hat{\boldsymbol{y}}_{{x_{i}}}$ can be modeled as:
\begin{equation}
    \hat{\boldsymbol{y}}_{{x_{i}}} = c+f(x_{i}).
\end{equation} where $f(x_{i})$ is the learned effect under the perturbation $x_{i}$.

Examples of direct matching models include CellOracle~\cite{kamimoto_dissecting_2023}, which uses scRNA-seq and scATAC-seq data to infer gene networks for simulating linear perturbation effects. GEARS pioneered predictions for unseen perturbations using gene embeddings and Gene Ontology (GO) based perturbation embeddings. 
GraphVCI~\cite{wu_predicting_2022} employs counterfactual concepts from causal inference to enhance gene regulatory network learning. 
sams-VAE~\cite{bereket_modelling_2023}, uses a sparse additive mechanism shift variational autoencoder to disentangle specific perturbation effects. 
More recently, single-cell foundation models like scGPT and scFoundation~\cite{hao_large-scale_2024} also perform single-cell response prediction as a downstream task.

Despite these advances, robust uncertainty is rarely reported. Some methods, such as GEARS (which uses Monte Carlo dropout~\cite{gal_dropout_2015}), do not fully account for the uncertainty from pair-wise distance to controls. Moreover, the variance of GEARS estimates is not well-calibrated, as it is inconsistent with generalization difficulty and actual prediction accuracy. These limitations underscore the need for more practical and specifically tailored uncertainty quantification frameworks for this task.

\subsection{Natural Posterior Network for Uncertainty Quantification}\label{bayesian}

The Natural Posterior Network (NatPN)~\cite{charpentier_natural_2021} is an evidential deep learning method (Appx.$\S$~\ref{DER}) for quantifying uncertainty. Within a single forward process, it estimates epistemic uncertainty (model's confidence) through the distance between the learned posterior and the prior in the latent space, while aleatoric uncertainty (data randomness) is measured via predictive entropy. 

NatPN updates beliefs using the Bayesian posterior theory (Appx.$\S$~\ref{bayesian_theory}) for exponential family distributions. Briefly, for a likelihood $\mathbb{P}(\boldsymbol{y} \mid \boldsymbol{\omega})$ and its conjugate prior $\mathbb{Q}(\boldsymbol{\omega} \mid \boldsymbol{\chi}^{\text{prior}},n^{\text{prior}})$, observing M data points  $\left\{\textbf{y}_j\right\}_{j=1}^M$ leads to a posterior $\mathbb{Q}\left(\boldsymbol{\omega} \mid \boldsymbol{\chi}^{\text{post}}, n^{\text{post}}\right)$ with parameters are updated as:
\begin{equation}
\begin{cases}
\boldsymbol{\chi}^{\text{post}} = \frac{n^{\text{prior}} \boldsymbol{\chi^{\text{prior}}} + \sum_j^M \boldsymbol{u}(y_j)}{n^{\text{prior}} + M} \\
n^{\text{post}} = n^{\text{prior}} + M
\end{cases}.\label{natpn}
\end{equation}

NatPN measures confidence with $n^{\text{post}} - \mathbb{H}[\mathbb{P}\left(\boldsymbol{y} \mid \boldsymbol{\omega}\right)]$, where $n^{\text{post}}$ (total evidence) captures epistemic uncertainty and entropy represents aleatoric uncertainty. The design of its density estimation architecture has been proven to drive the estimated evidence towards zero under out-of-distribution (OOD) conditions (e.g., far from the training set) (See Appx.$\S$~\ref{NatPN} for details).

This confidence score shares conceptual similarities with E-distance calculations (Fig.~\ref{overview}(\uppercase\expandafter{\romannumeral4})). However, standard NatPN cannot be directly applied to single-cell perturbation prediction. It lacks mechanisms to incorporate pair-wise distances and does not offer a readily available multivariate extension suitable for complex gene expression data. Our work tries to resolve these limitations.
\begin{figure}[t!]
    \centering
    \includegraphics[width=0.9\linewidth]{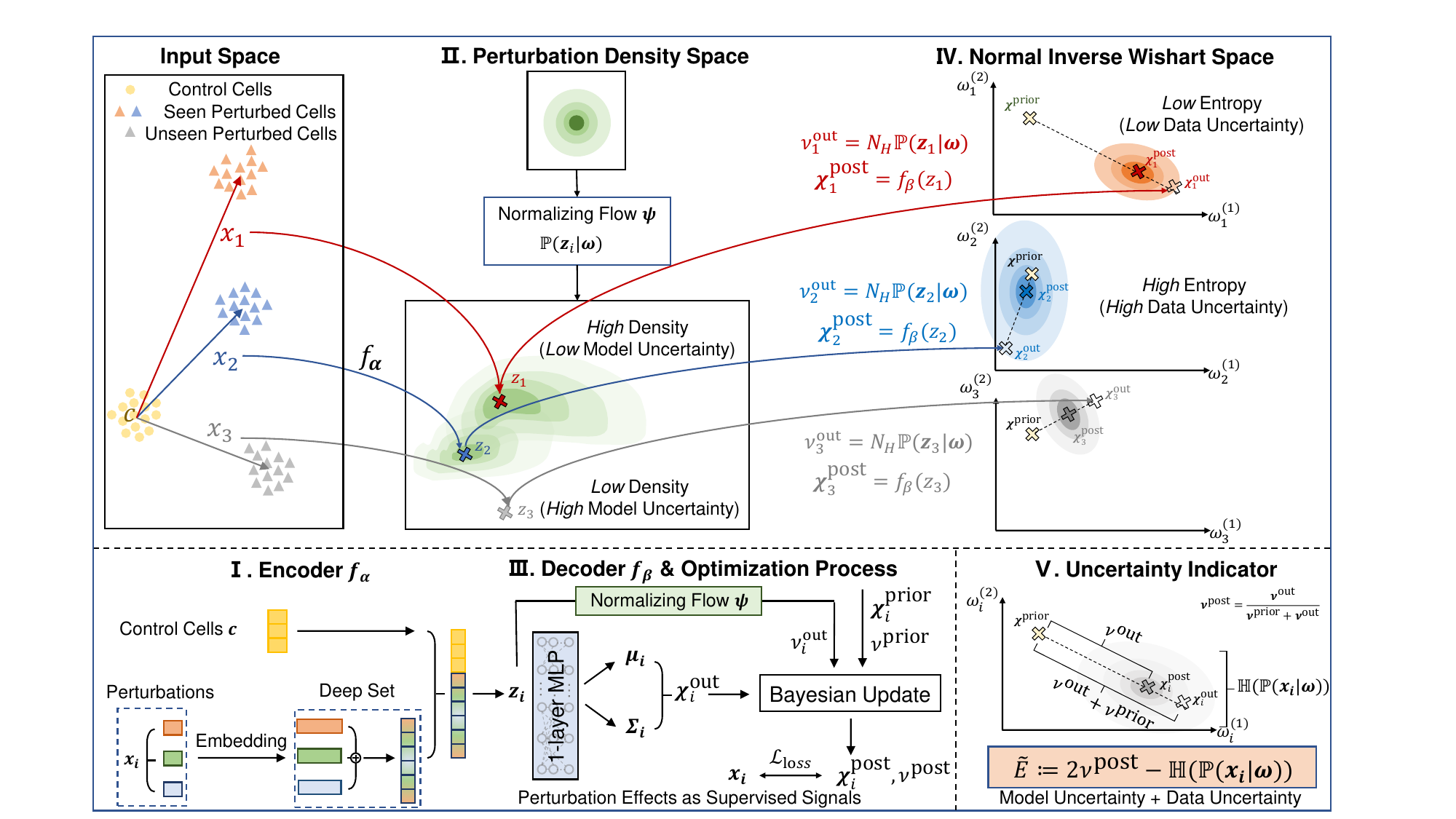}
    \caption{Overview of PRESCRIBE. Each perturbation $X_{i}$ under condition $c$ is first mapped onto a perturbation density space $z_{i}$ by the encoder $f_{\alpha}$. From $z_{i}$, the decoder $f_{\beta}$ derives the parameter update $\boldsymbol{\chi}_{i}^{\text{out}}$ while a normalizing flow $f_{\psi}$ yields the evidence update $\nu_{i}^{\text{out}}$. Posterior parameters $\boldsymbol{\omega}_{i}$ are obtained from a weighted combination of prior $\nu^{\text{prior}}_{i}$and updated parameters according to $\nu^{\text{out}}_{i}$.}
    \label{overview}
\end{figure}
\section{Methods}
Our core contribution is the pseudo E-distance, a metric from the model's outputs that unifies epistemic and aleatoric uncertainty. 
In this section, we first describe the model's probabilistic foundation, then define the pseudo E-distance and its calculation. Finally, we outline the network architecture and the optimization objective employed during training. 
For reference, detailed notation and hyperparameters are provided in Appx. Tabs.~\ref{notation} and \ref{setting}, and the core algorithm in Appx. Algo.~\ref{algo}.

\subsection{Probabilistic Modeling of Gene Expression}
To model both the distribution of gene expression and predictive uncertainty, we adopt a Bayesian approach. For a given perturbation category $x_i$, we assume the gene expression vector $\boldsymbol{y}_{i}$ follows a multivariate Gaussian distribution. Following related work, we place a Normal-Wishart conjugate prior on the parameters of this Gaussian, which enables analytical Bayesian updates. Thus, a posterior distribution is defined by four parameters $\boldsymbol{\omega}_{i}=\{\boldsymbol{\mu}_{0x_{i}}\in\mathbb{R}^{N}, \kappa_{i}\in\mathbb{R}^{+}, \nu_{i}\geq N, \boldsymbol{L}_{i}\}$:

\begin{equation} \label{eq:Sigma_prior_single_label}
\begin{split}
&\mathbb{P}\left(\boldsymbol{y}_{i} \mid \boldsymbol{\mu}_{i}, \boldsymbol{\Lambda}_{i}\right)
 \sim \mathbb{N}\left(\boldsymbol{\mu}_{i}, \boldsymbol{\Lambda}_{i}^{-1}\right), \qquad
\mathbb{P}(\boldsymbol{\mu}_{i} \mid \boldsymbol{\Lambda}_{i})
= \mathbb{N}\left(\boldsymbol{\mu}_{0x_{i}}, (\kappa_{i} \boldsymbol{\Lambda}_{i})^{-1}\right), \\
&\mathbb{P}(\boldsymbol{\Lambda}_{i})
= \mathbb{W}\left( \nu_{i}, \boldsymbol{\Psi}_{i}^{-1}\right) = \mathbb{P}(\boldsymbol{\Sigma}^{-1}_{i}), \qquad
\mathbb{P}(\boldsymbol{\Sigma}_{i})
= \mathbb{W}^{-1}\left(\nu_{i}, \boldsymbol{\Psi}_{i} \right),
\end{split}
\end{equation}

where $\boldsymbol{\Sigma}_{i}=\boldsymbol{\Lambda}_{i}^{-1}$ is the covariance matrix and $\boldsymbol{\Lambda}_{i}$ is its corresponding precision matrix. The scale matrix $\boldsymbol{\Psi}_{i}$ is defined using a lower triangular matrix $\boldsymbol{L}_{i}$ such that $\boldsymbol{\Psi}_{i}^{-1}=\nu_{i}\boldsymbol{L}_{i}\boldsymbol{L}^{T}_{i}$. Here, $\mathbb{N}$, $\mathbb{W}$, and $\mathbb{W}^{-1}$ denote the Normal, Wishart, and Inverse Wishart distributions, respectively. $N$ is the rank of the gene-gene interaction matrix, estimated via Principal Component Analysis (PCA).

The decoder outputs the effect of a perturbation $x_i$ as a set of sufficient statistics and evidence counters:
\begin{equation}\label{suffic_train}
\left\{
\begin{aligned}
\boldsymbol{\chi}^{\text{out}}_{i} &= \begin{pmatrix}\chi^{\text{out}}_{1} \\ \chi^{\text{out}}_{2} \end{pmatrix} = \begin{pmatrix}\boldsymbol{\mu}^{\text{out}}_{0x_{i}} \\ \boldsymbol{\mu}^{\text{out}}_{0x_{i}} (\boldsymbol{\mu}^{\text{out}}_{0x_{i}})^{T} + \frac{1}{(\nu^{\text{out}}_{i})^{2}} (\boldsymbol{L}^{\text{out}}_{i})^{-T} (\boldsymbol{L}^{\text{out}}_{i})^{-1} \end{pmatrix} \\
n^{\text{out}}_{i} &= \kappa^{\text{out}}_{i} = 2 \nu^{\text{out}}_{i}.
\end{aligned}
\right.
\end{equation}
The network's outputs (denoted by the ``out'' superscript). Since the evidence-related parameters are proportionally linked ($n_i = \kappa_i = 2\nu_i$), we use $\nu$ to refer to them collectively for simplicity.

\subsection{Pseudo E-distance as a Unified Uncertainty Surrogate}
\paragraph{Definition.} To capture both sources of uncertainty, we define pseudo E-distance (Fig.~\ref{overview}, Panel~V) as:
\begin{equation}
    \tilde{E} = 2\tilde{\nu}_{i}^{\text{post}} - \tilde{\mathbb{H}}[\mathbb{P}(\boldsymbol{y}_{i} \mid \boldsymbol{\omega}_{i})].
\end{equation}
Here, $\tilde{\cdot}$ denotes normalization. This metric comprises two key terms. The first, $\tilde{\nu}_{i}^{\text{post}}$, represents the \textbf{posterior evidence}, which quantifies the model's epistemic uncertainty. High evidence indicates the prediction is well-supported by training data, while low evidence suggests an out-of-distribution input. The second term, $-\tilde{\mathbb{H}}[\cdot]$, is the negative normalized \textbf{entropy} of the predictive distribution, which reflects aleatoric uncertainty or inherent output variability. As shown in Appendix~\ref{rank preseve}, $\tilde{E}$ provably preserves the rank-ordering of the true E-distance under fixed prior conditions.

\paragraph{Calculation.} We initialize a base prior using the control cell profile $\boldsymbol{c}$ and fixed hyperparameters ($\kappa_{i}^{\text{prior}}=1, \nu_{i}^{\text{prior}}=0.5$). The posterior parameters are then obtained by combining this base prior with the model's outputs through the Bayesian update (Eq.~\ref{natpn}). From this posterior, the evidence $\nu_{i}^{\text{post}}$ and the predictive entropy $\mathbb{H}[\cdot]$ are normalized to the range $[N, 2N]$. This operation serves two purposes: it places both components on a comparable scale and it ensures the degrees of freedom of the resulting Student's $t$-distribution remain in a regime that preserves its heavy-tailed properties, which is crucial for capturing sparse regions (low evidence) of the perturbation space.

\begin{remark}
    Our model can distinguish low-confidence, out-of-distribution predictions from high-confidence predictions of perturbation with little to no effect. Although both may predict an outcome similar to the control state, they are separated by their significantly different evidence scores.
\end{remark}

\subsection{Model Architecture}
As illustrated in Fig.~\ref{overview}, our model comprises three core modules: an \textbf{encoder} ($f_\alpha$; Panel I) that generates latent representations, a \textbf{normalizing flow} ($f_\psi$; Panel II) that estimates evidence, and a \textbf{decoder} ($f_\beta$; Panels III--IV) that produces the sufficient statistics for the predictive distribution.

\paragraph{Encoder $f_{\alpha}$.} The encoder processes a perturbation $x$ and the cell's basal state $\boldsymbol{c}$ into a latent embedding $\boldsymbol{z}\in\mathbb{R}^{D}$ that captures functional similarities. Assuming additive effects~\cite{gaudelet_season_2024}, the encoder comprises two components: $f_{\alpha_1}$ for individual perturbation effects and $f_{\alpha_2}$ for non-linear interactions. For a single perturbation $x_i$, the embedding is calculated as follows:
\begin{equation}
\boldsymbol{z}_i = f_{\alpha_1}(x_i) = f_{\alpha_{13}}(f_{\alpha_{11}}(x_i) + f_{\alpha_{12}}(\boldsymbol{c})). \label{encoder1}
\end{equation}
For a set of perturbations $\boldsymbol{x}$, the model aggregates their individual effects via summation. This operation ensures the resulting embedding is invariant to the order of the perturbations:
\begin{equation}
\boldsymbol{z} = f_{\alpha}(\boldsymbol{x}) = \sum_i f_{\alpha_1}(x_i) + f_{\alpha_2}\left(\sum_i f_{\alpha_1}(x_i)\right).
\end{equation}
Here, $f_{\alpha_{11}}$ is a linear layer that uses pre-trained gene embeddings, while $f_{\alpha_{12}}$ and $f_{\alpha_{2}}$ are multilayer perceptrons (MLPs) with LeakyReLU activation. While the encoder design is agnostic to the choice of gene embeddings, we use those from scGPT~\cite{cui_scgpt_2024} by default.
\paragraph{Normalizing Flow $f_\psi$.} The normalizing flow~\cite{rezende_variational_2015} module estimates the density of the training data in the latent space, which directly informs epistemic uncertainty. It takes the latent embedding $\boldsymbol{z}$ as input and outputs the evidence $\nu$. High-density regions (familiar inputs) yield high evidence, while low-density regions (novel inputs) yield low evidence, forcing the prediction to revert toward the pre-defined null state. Specifically, the evidence $\nu_i$ is calculated from the latent embedding $\boldsymbol{z}$, and the posterior evidence $\tilde\nu^\text{post}_{i}$ is then updated and normalized as follows:
\begin{equation}
    \nu_i = \exp(f_{\psi}(\mathbf{z}_i) + \ln N_H), \quad\quad\tilde\nu^\text{post}_{i} = \frac{N \nu_i}{\nu_i + \nu^\text{prior}} + N \in [N, 2N].\label{tilde nu}
\end{equation}
Here, $N_H$ is the total certainty budget, which we set to $N$. The update operation for $\tilde\nu^\text{post}_{i}$ scales the posterior evidence to the range $[N, 2N]$ and balances the influence of the base prior (when evidence $\nu_i$ is low) against the data-driven prediction (when evidence $\nu_i$ is high).
\paragraph{Decoder $f_\beta$.} The decoder is one linear layer that maps the latent embedding $\boldsymbol{z}_i$ to the sufficient statistics $\boldsymbol{\chi}^{\text{out}}_{i}$. The outputs from the decoder $\boldsymbol{\chi}^{\text{out}}_{i}$ and the normalizing flow $\nu^{\text{out}}_{i}$ are then combined with the prior through Bayesian update to form the final posterior distribution and evidence.
\subsection{Optimization Objective}
PRESCRIBE is trained by minimizing a composite loss function $\mathcal{L}$ designed to encourage both accurate predictions and meaningful perturbation density estimates:
\begin{equation}\label{loss}
\begin{split}
    \mathcal{L} = &-\underbrace{\mathbb{E}_{\boldsymbol{\omega}_{i} \sim \mathbb{W}^{-1, \text{post}}_{i}}\left[\ln \mathbb{P}\left(\boldsymbol{y}_{i} \mid \boldsymbol{\omega}_{i}\right)\right]}_{\mathcal{L}_1}
    -\lambda_1 \|\boldsymbol{y}_{i}-\boldsymbol{\mu}_{0x_{i}}\|_1 \underbrace{\mathbb{H}\left[\mathbb{W}^{-1, \text{post}}_{i}\right]}_{\mathcal{L}_2} \\
    &-\lambda_2\underbrace{\tfrac{1}{B}\sum_{i=1}^B \ln \left(\tfrac{e^{\tilde E_{(\text{sorted}, x_i)}}}{\sum_{j=1}^B e^{\tilde E_{(\text{sorted}, x_j)}}}\right)}_{\mathcal{L}_3}
    -\lambda_3 \|\boldsymbol{y}_{i}-\boldsymbol{\mu}_{0x_{i}}\|_1 \underbrace{\ln\left(\tfrac{N}{2N-\nu_{i}}-1\right)}_{\mathcal{L}_4}.
\end{split}
\end{equation}
Specifically, $\mathcal{L}$ consists of a primary objective for prediction accuracy ($\mathcal{L}_1$) and three auxiliary terms, weighted by hyperparameters $\lambda_1, \lambda_2, \lambda_3$ (see $\S$~\ref{hyper search} for grid search details):
\paragraph{Expected Log-Likelihood ($\mathcal{L}_1$).} This term maximizes the likelihood of the observed data under the posterior predictive distribution, driving accurate predictions. The details can be found in Appx.~\ref{A1}.
\paragraph{Entropy Regularization ($\mathcal{L}_2$).} This term, weighted by prediction error, acts as a prior that favors uninformative distributions with high entropy. The details can be found in Appx.~\ref{A2}.
\paragraph{E-distance Ranking Loss ($\mathcal{L}_3$).} To supervise the model's uncertainty estimates, we introduce a ranking loss based on ListMLE~\cite{10.1145/1390156.1390306}. This loss enforces consistency between the predicted and reference rankings within each training batch. Specifically, the predicted pseudo E-distances, $\tilde E$, are reordered according to the descending order of their corresponding reference E-distances, $E$:
\begin{equation}
\tilde E_{\text{sorted}} = \tilde E[\operatorname{argsort}(E, \text{descending})].
\end{equation}
The term $\tilde E_{(\text{sorted}, x_i)}$ refers to the $i$-th element of this sorted list. This objective encourages the model's predicted ranking of uncertainties to match the reference ranking.
\paragraph{Uncertainty Regularization Loss ($\mathcal{L}_4$).}\label{loss4} This term addresses the problem of vanishing gradients in low-evidence regions. The details can be found in Appx.~\ref{A3}.

\section{Results}
This section details PRESCRIBE's empirical evaluation. The experiments were designed to: (i) assess the quality of its uncertainty ($\S$~\ref{Estimation of E-distance}-~\ref{Accuracy-Based Calibration Analysis})). (ii) demonstrate its utility in prediction accuracy ($\S$~\ref{Prediction Accuracy with Uncertainty-Guided Filtering}), and (iii) explore contributions of its core components and initialization strategies ($\S$~\ref{Ablation Studies on Filtering and Key Model Components}-~\ref{hyper search}).

\subsection{Experimental Setup}\label{Experimental Setup}
\paragraph{Datasets.} We evaluated PRESCRIBE on three widely recognized benchmark datasets: Norman~\cite{norman_exploring_2019}, Replogle2022\_Rep1, and Replogle2022\_K562~\cite{replogle_mapping_2022}. (Details are provided in Appx.$\S$~\ref{dataset details}.)

\paragraph{Comparison Baselines.} To evaluate PRESCRIBE, we conducted a comprehensive benchmark against several existing methods. The baseline methods were divided into two main groups: recent methods without uncertainty estimation (\textbf{AverageKnown}, \textbf{Linear}, \textbf{Linear scGPT}~\cite{ahlmann-eltze_deep_2024}, \textbf{CellOracle}~\cite{kamimoto_dissecting_2023}, \textbf{samsVAE}~\cite{bereket_modelling_2023}, \textbf{GraphVCI}~\cite{wu_predicting_2022}, \textbf{scFoundation}~\cite{hao_large-scale_2024}, and \textbf{scGPT}~\cite{cui_scgpt_2024}) and those with variance-based uncertainty estimation (\textbf{GEARS}~\cite{roohani_predicting_2024} and \textbf{GEARS-Drop}). The evaluation also encompassed several adaptations and ablations of our approach and related models: \textbf{GEARS-ens}, an ensemble of five GEARS-Drop models; \textbf{Ours-MLPs}, a variant that employed the our model's encoder/decoder architecture but specifically used Multi-Layer Perceptrons (MLPs) to regress E-distance from the latent embedding; \textbf{Ours-Null}, representing our model's untrained state; \textbf{Ours-Ens}, an ensemble of five models using the our model's encoder and an MC dropout~\cite{gal_dropout_2015} decoder; and \textbf{Ours-NOINFO}, a variant that removed prior information by using zero vectors for the prior mean and covariance. The complete settings of these baselines can be found in Appx.~\S\ref{baseline}.

\subsection{Evaluation Metrics}
Our evaluation employed two categories of metrics, each serving a different role. 
\paragraph{Prediction Accuracy.} These metrics are assessed by measuring the Pearson Correlation Coefficient ($r_{\text{pred,truth}}$) and directional accuracy ($\text{ACC}_{\text{pred,truth}}$) between predicted and true gene expression. We also calculated these metrics on the top 20 differentially expressed genes (DEGs), denoted as $r^{\text{DEG}}_{\text{pred,truth}}$ and $\text{ACC}^{\text{DEG}}_{\text{pred,truth}}$. All accuracy evaluations were performed on log-fold change values.
\paragraph{Calibration Quality.} These metrics assess if a model's confidence scores are meaningful indicators of its predictive performance. We measure this using several metrics: \textbf{(1)}~the Pearson ($r_{\text{perf,conf}}$) and Spearman's ($r^{s}_{\text{perf,conf}}$) correlation between confidence scores and actual performance (as measured by $r_{\text{pred,truth}}$); \textbf{(2)}~a percentile-based classification accuracy ($\text{ACC}_{\text{perf,conf}}$) to determine if low-confidence predictions are less accurate; and \textbf{(3)}~the Expected Calibration Error (ECE)~\cite{kuleshov_accurate_2018}, denoted as $\epsilon_{\text{perf,conf}}$.

\subsection{Estimation of E-distance}\label{Estimation of E-distance}
Our model is designed to efficiently estimate E-distance without requiring post-perturbation profiles. We therefore validated the practical effectiveness of our estimated ``pseudo E-distance'' ($\hat{E}$) in two dimensions. First, across all datasets, $\hat{E}$ exhibited a consistent positive correlation with a reference E-distance ($E$), which was computed using post-perturbation profiles (Tab.~\ref{r1}(a)). Second, this positive correlation strengthened significantly as the number of samples ($N$) used for computing the reference E-distance ($E$) increased (Tab.~\ref{r1}(b)). Given that $E$ calculated with a larger $N$ more accurately reflects the true underlying E-distance, this observed trend suggests that $\hat{E}$ can serve as an effective, and potentially asymptotic, approximation of the true E-distance's ranking.
\begin{table*}[h]
    \caption{Comparison of pseudo $\hat{E}$ with reference E-distance ($E$): (a) across various datasets; (b) across increasing sample size ($N$) used for reference $E$ calculation on the Norman dataset.}
    \label{r1}
    \centering
    \resizebox{\textwidth}{!}{
        \begin{tabular}{c|ccccccccc}
            \toprule
            \multicolumn{10}{c}{(a) Cross dataset performance}\\
            \multirow{3.5}{*}{\text {Method}}
            &\multicolumn{3}{c}{\textbf{Norman.}} &\multicolumn{3}{c}{\textbf{Rep1.}} &\multicolumn{3}{c|}{\textbf{K562.}} \\
            \cmidrule(lr){2-4}
            \cmidrule(lr){5-7}
            \cmidrule(lr){8-10}
            &$r^{s}_{\text{E,conf}} \uparrow$&$\text{ACC}_{\text{E,conf}} \uparrow$&$r_{\text{E,conf}} \uparrow$ &$r^{s}_{\text{E,conf}} \uparrow$&$\text{ACC}_{\text{E,conf}} \uparrow$&$r_{\text{E,conf}} \uparrow$&$r^{s}_{\text{E,conf}} \uparrow$&$\text{ACC}_{\text{E,conf}} \uparrow$&$r_{\text{E,conf}} \uparrow$  \\
            \midrule
            Ours-Null& 0.76 & 16.13 & -3.28 & -3.61 & 19.10 & -2.34 & -9.58 & 17.65 & -10.96 \\
            Ours-MLP & 27.49 & 25.03 & 29.56 & -45.64 & 22.81 & -50.59 & -3.61 & 18.89 & -2.34 \\
            \rowcolor{gray!30}
            Ours & 35.56 & 25.81 & 33.84 & 12.18 & 23.53 & 21.80 & 24.74 & 31.58 & 16.86 \\
            \midrule
            \multicolumn{10}{c}{(b) Cross sample size $N$ performance}\\
            \multirow{3.5}{*}{\text {$N$}}
            &\multicolumn{3}{c}{Ours} &\multicolumn{3}{c}{Ours-MLPs} &\multicolumn{3}{c|}{Ours-Null} \\
            \cmidrule(lr){2-4}
            \cmidrule(lr){5-7}
            \cmidrule(lr){8-10}
            &$r^{s}_{\text{E,conf}} \uparrow$&$\text{ACC}_{\text{E,conf}} \uparrow$&$r_{\text{E,conf}} \uparrow$ &$r^{s}_{\text{E,conf}} \uparrow$&$\text{ACC}_{\text{E,conf}} \uparrow$&$r_{\text{E,conf}} \uparrow$&$r^{s}_{\text{E,conf}} \uparrow$&$\text{ACC}_{\text{E,conf}} \uparrow$&$r_{\text{E,conf}} \uparrow$  \\
            \midrule
            54 & 35.56 & 25.81 &33.84 & 29.03 & 27.49 & 29.56 & 16.13 & 0.76 & -3.28 \\
            200 &  41.37 & 33.33 &45.44 & 33.33 & 32.71 & 31.52 & 7.69 & -14.68 & -9.42 \\
            500 &  80.00 & 50.00 &67.55 & 50.00 & 20.00 & 24.22 & NaN & 33.33 & NaN \\
            \bottomrule
        \end{tabular}
    }
    \vspace{-10pt} 
\end{table*}

\subsection{Confidence Scaling with Generalization Difficulty}

We utilized the Norman combination perturbation dataset to assess our model's performance under varying levels of generalization difficulty. We simulated increasing difficulty by varying the number of unseen perturbations in combinations of two (0, 1, or 2 unseen). To remove the confounding variables from E-distance, we controlled for the average E-distance across these scenarios, approximately $55$. As illustrated in Fig.~\ref{generalize}, our model significantly reduced its confidence scores as the generalization difficulty increased. In contrast, others show a minorly decrease or inverse trend. This phenomenon demonstrates our model's ability to be more aware of epistemic uncertainty than other baselines.
\begin{figure*}[t!]
    \centering
    \subfigure[]{
    \label{generalize}
    \includegraphics[width=0.45\textwidth]{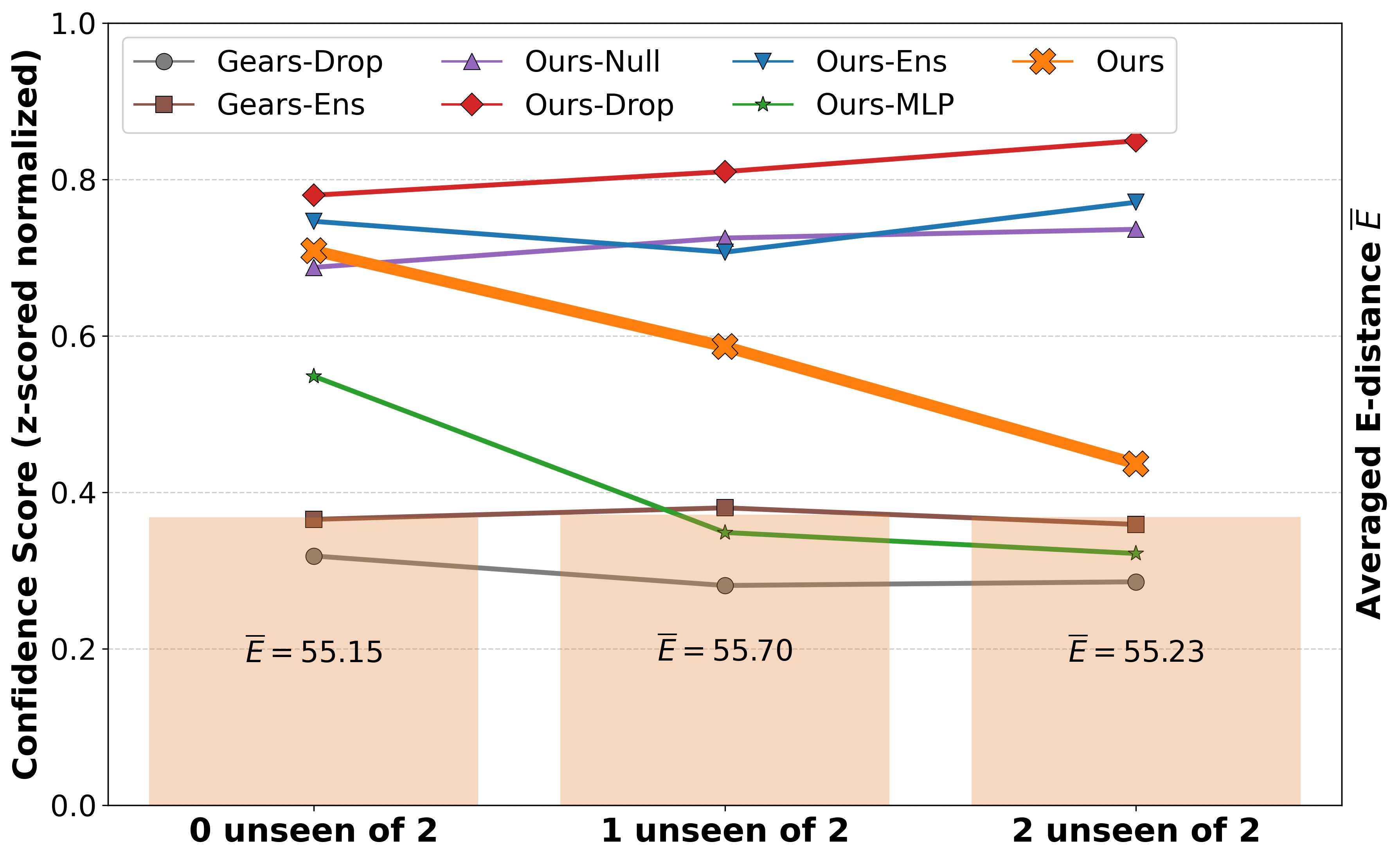}}
    \subfigure[]{
    \label{emb}
    \includegraphics[width=0.45\textwidth]{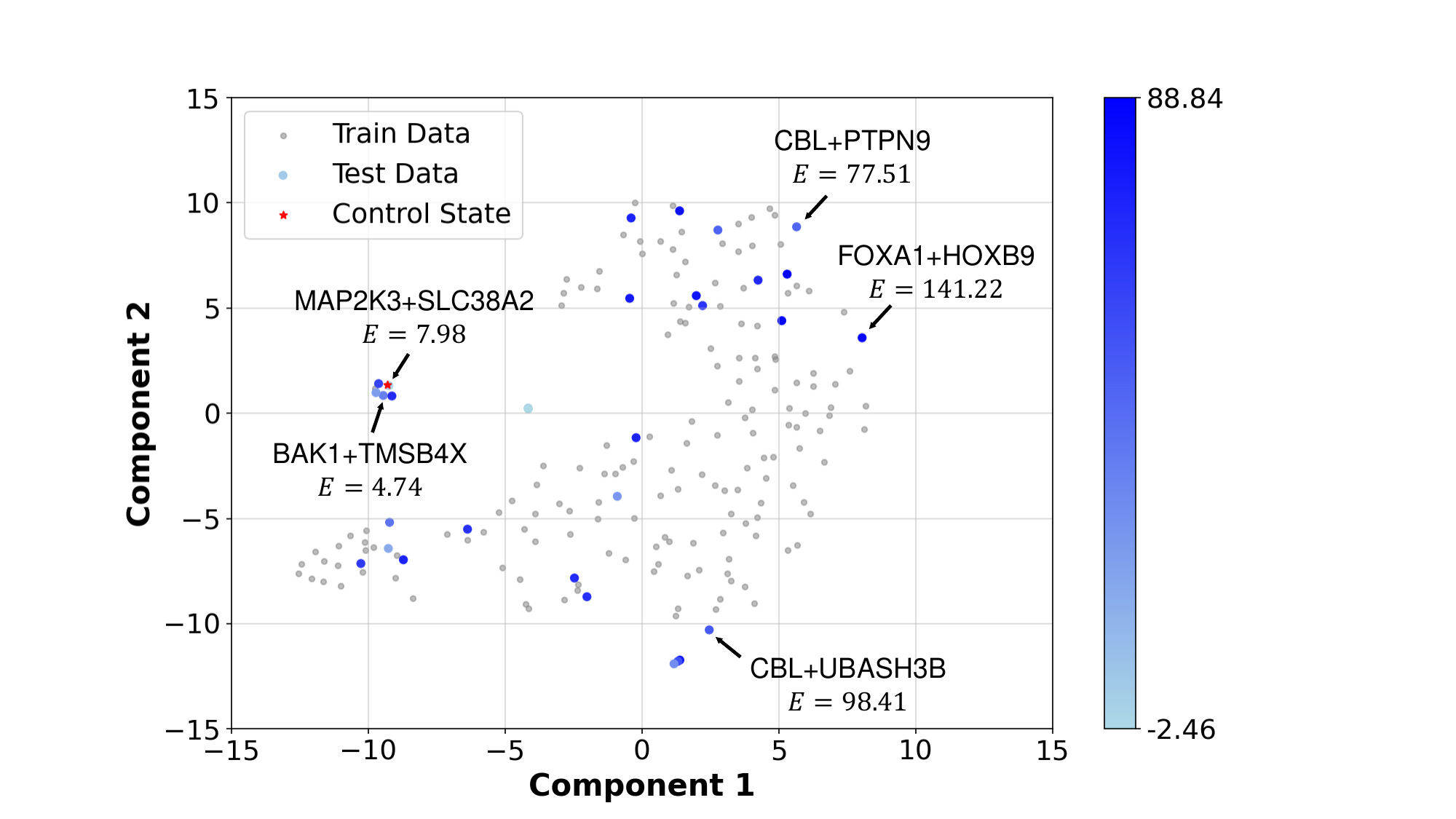}}
    \caption{(a) The line plot depicts confidence scores from different models plotted against generalization difficulty. An accompanying bar plot displays the average E-distance for corresponding levels of generalization difficulty. (b) t-SNE visualization of the Normal-Inverse Wishart space.}
     
\end{figure*}
\begin{table*}[t!]
    \centering
    \caption{Perturbation Prediction Performance Comparison (in \%)} 
    \resizebox{\textwidth}{!}{ 
    \begin{tabular}{lcccccccccccc}
        \toprule
        & \multicolumn{4}{c}{\textbf{Norman.}} & \multicolumn{4}{c}{\textbf{Rep1.}} & \multicolumn{4}{c}{\textbf{K562.}} \\
        \cmidrule(r){2-5} \cmidrule(r){6-9} \cmidrule(r){10-13}
        \textbf{Models} & \(r_{\text{pred,truth}} \uparrow\) & \(r^{\text{DEG}}_{\text{pred,truth}}\uparrow\) & \(\text{ACC}_{\text{pred,truth}}\uparrow\) & \(\text{ACC}^{\text{DEG}}_{\text{pred,truth}}\uparrow\) & \(r_{\text{pred,truth}}\uparrow\) & \(r^{\text{DEG}}_{\text{pred,truth}}\uparrow\) & \(\text{ACC}_{\text{pred,truth}}\uparrow\) & \(\text{ACC}^{\text{DEG}}_{\text{pred,truth}}\uparrow\) & \(r_{\text{pred,truth}}\uparrow\) & \(r^{\text{DEG}}_{\text{pred,truth}}\uparrow\) & \(\text{ACC}_{\text{pred,truth}}\uparrow\) & \(\text{ACC}^{\text{DEG}}_{\text{pred,truth}}\uparrow\) \\
        \midrule
        AverageKnown & 39.64 & 58.98 & 27.23 & 61.94 & 54.53 & 57.89 & 53.66 & 32.33 & 36.86 & 46.11 & 59.18 & 56.14 \\
        Linear & 37.68 & 55.54 & 26.87 & 61.94 & 38.01 & 40.70 & 47.65 & 30.15 & 25.70 & 32.42 & 52.54 & 52.36 \\
        Linear-scGPT & 39.20 & 58.66 & 27.23 & 61.94 & 50.09 & 53.95 & 49.69 & 31.31 & 33.86 & 42.97 & 54.79 & 54.37 \\
        CellOracle & 9.80 & 12.48 & 19.20 & 16.35 & 39.91 & 7.40 & 37.55 & 23.70 & 4.44 & 5.89 & 41.41 & 41.15 \\
        samsVAE & 12.48 & 32.05 & 37.42 & 49.63 & 12.59 & 36.45 & 33.04 & 25.08 & 8.51 & 29.03 & 36.44 & 43.55 \\
        GraphVCI & 12.02 & 30.66 & 27.95 & 33.95 & 14.39 & 36.30 & 41.41 & 25.08 & 9.73 & 28.91 & 45.66 & 43.55 \\
        scFoundation & 60.79 & 65.65 & 35.66 & 62.26 & 47.60 & 59.46 & 53.38 & 43.96 & 25.15 & 47.30 & 57.11 & 57.32 \\
        scGPT           & 61.48 & 65.87 & 61.96 & 74.43 & 50.32 & 65.54 & 61.72 & 67.07 & 32.72 & 43.15 & 57.44 & 57.32 \\ 
        \midrule
        GEARS            & 45.30 & 63.19 & 29.09 & 69.06 & 48.18 & 53.59 & 51.08 & 32.33 & 32.57 & 42.68 & 56.33 & 56.14 \\ 
        GEARS-Drop & 44.96 & 60.38 & 29.92 & 68.28 & 46.49 & 51.05 & 52.25 & 37.08 & 31.26 & 42.88 & 56.34 & 57.67 \\
        GEARS-Drop-5\% & 45.05 & 59.61 & 29.72 & 66.83 & 47.01 & 51.27 & 52.25 & 36.79 & 31.66 & 43.54 & 56.28 & 57.44 \\
        GEARS-Drop-10\% & 49.72 & 65.23 & 30.57 & 70.18 & 45.34 & 48.68 & 52.09 & 36.93 & 31.94 & 43.65 & 56.48 & 58.22 \\
        GEARS-Ens & 45.94 & 62.44 & 29.21 & 69.38 & 47.87 & 49.92 & 51.91 & 34.30 & 30.58 & 42.99 & 56.22 & 56.36 \\
        GEARS-Ens-5\% & 45.95 & 61.85 & 29.01 & 68.00 & 48.32 & 49.72 & 51.91 & 33.94 & 30.99 & 43.60 & 56.16 & 56.12 \\
        GEARS-Ens-10\% & 50.91 & 67.42 & 29.86 & 71.43 & 48.05 & 50.07 & 51.99 & 34.06 & 31.29 & 43.84 & 56.42 & 56.91 \\
        \midrule
        PRESCRIBE-Null      & 14.40 & 43.84 & 51.19 & 65.97 & 8.50  & 16.95 & 51.83 & 55.89 & 10.96 & 21.80 & 52.38 & 56.30 \\
        PRESCRIBE   & 58.38 & 64.44 & 63.24 & 74.68 & 59.18 & 65.50 & 67.36 & 79.81 & 36.20 & 44.36 & 60.27 & 69.69 \\
        PRESCRIBE-5\%      & 61.58 & 66.36 & 64.08 & 75.69 & 60.20 & 66.07 & 67.76 & 79.94 & 38.28 & 46.63 & 60.99 & 71.15 \\
        \rowcolor{gray!30}
        PRESCRIBE-10\%     & 64.32 & 68.61 & 64.73 & 75.93 & 60.28 & 66.13 & 67.89 & 80.03 & 38.58 & 47.52 & 61.04 & 71.21 \\ 
        \bottomrule
    \end{tabular}
    }
    \label{result1}
    \vspace{-13pt}
\end{table*}
\begin{figure*}[t!]
    \centering
    \subfigure[]{
        \includegraphics[width=0.225\textwidth]{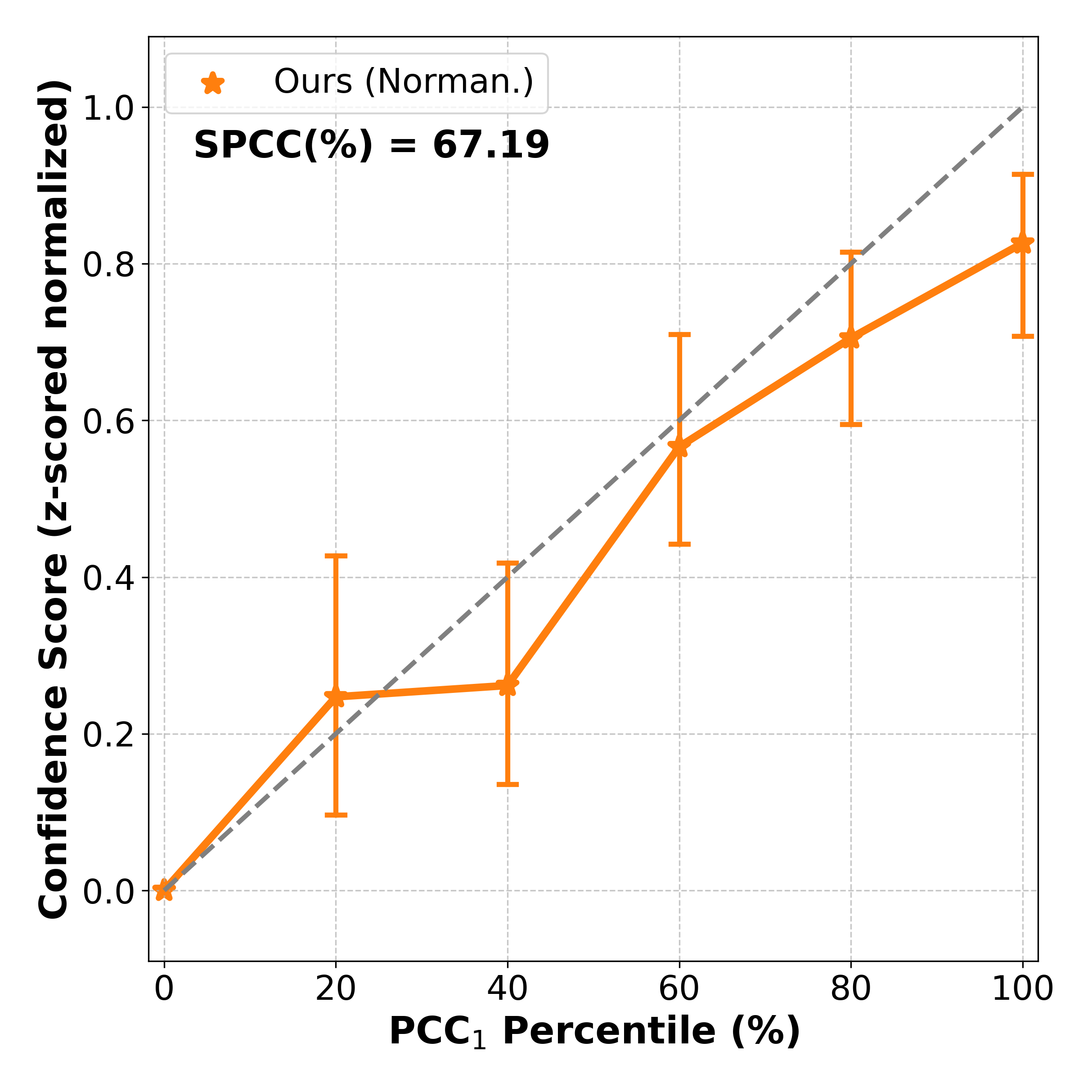}
    }
    \subfigure[]{
        \includegraphics[width=0.225\textwidth]{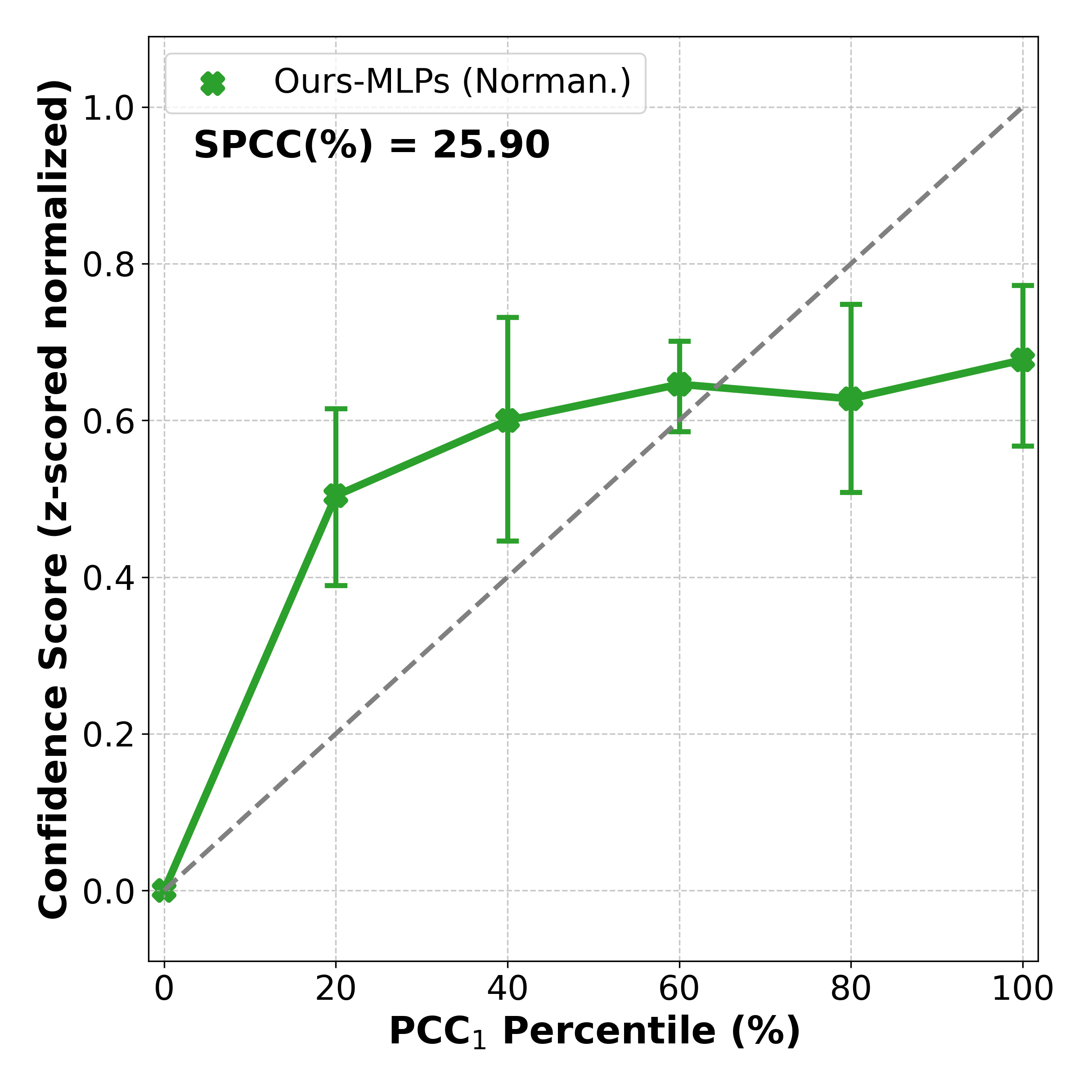}
    }
    \subfigure[]{
        \includegraphics[width=0.225\textwidth]{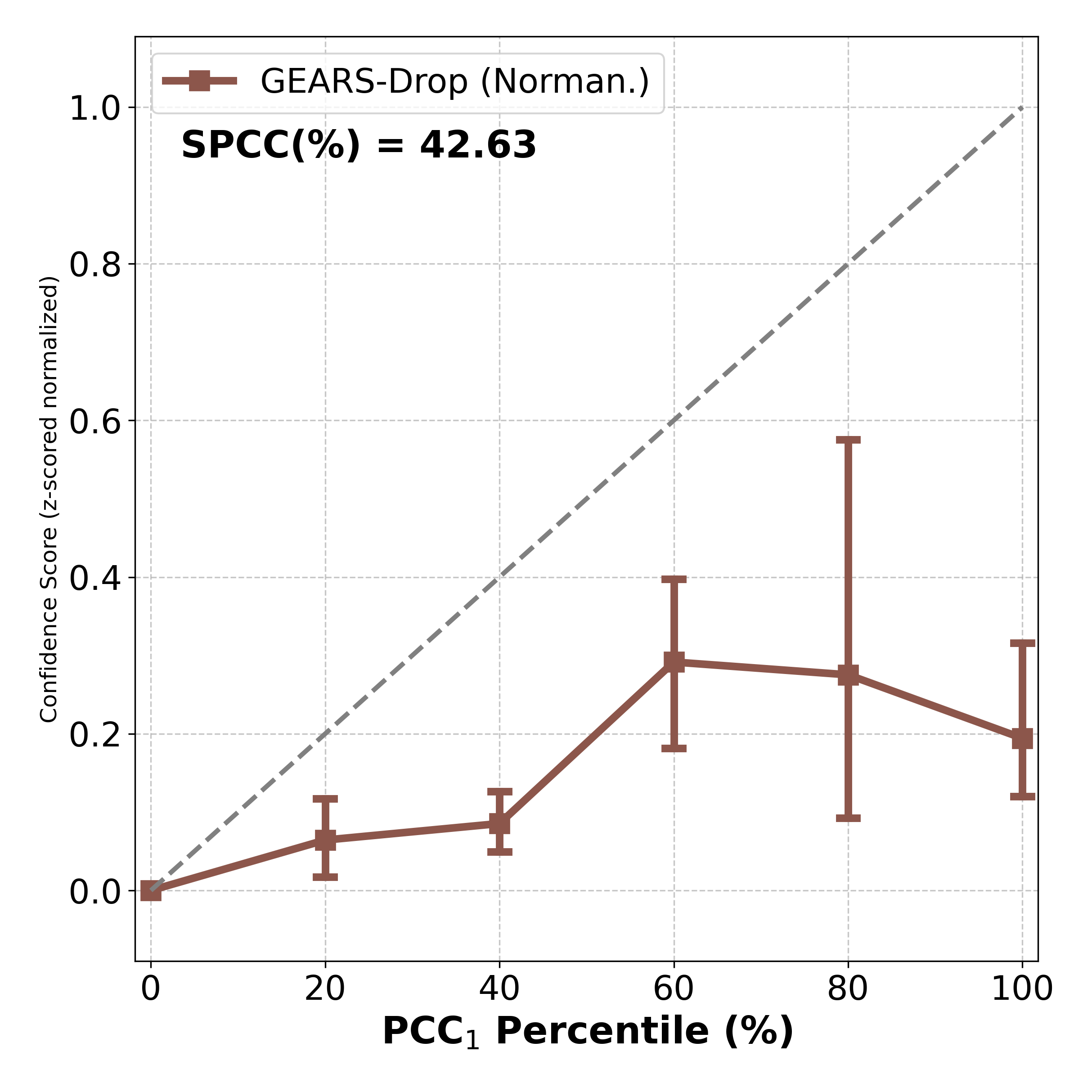}
    }
    \subfigure[]{
        \includegraphics[width=0.225\textwidth]{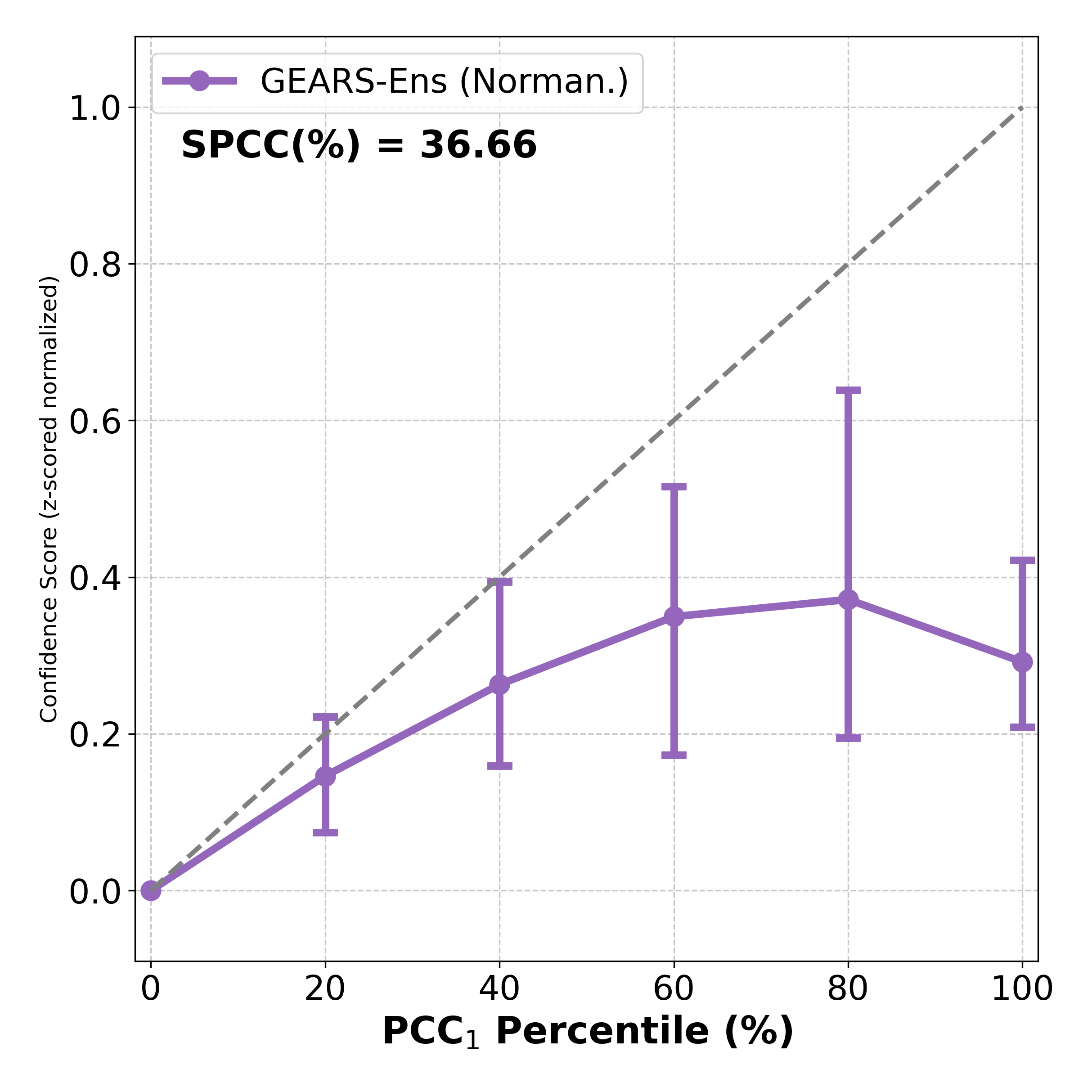}
    }
    
    \subfigure[]{
        \includegraphics[width=0.225\textwidth]{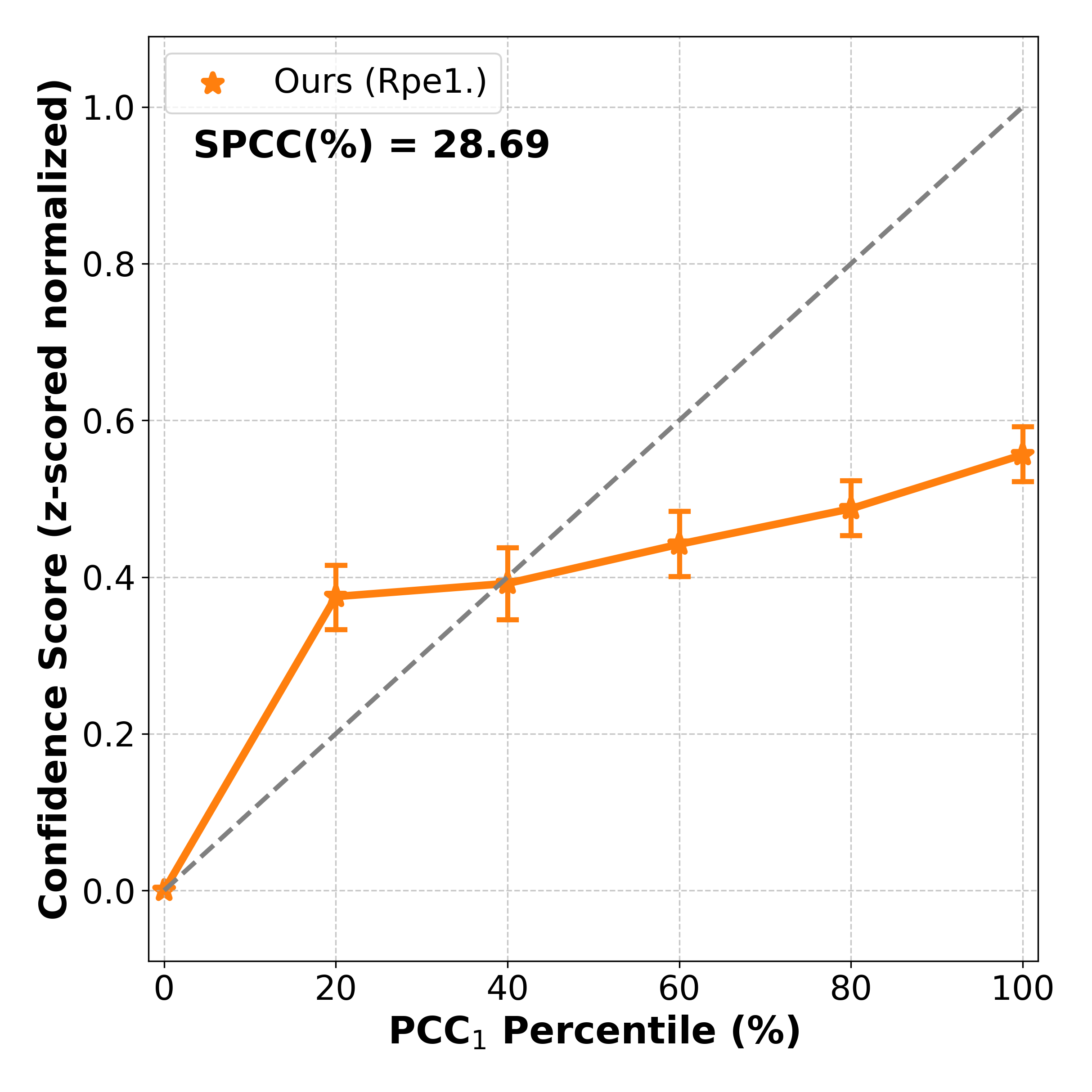}
    }
    \subfigure[]{
        \includegraphics[width=0.225\textwidth]{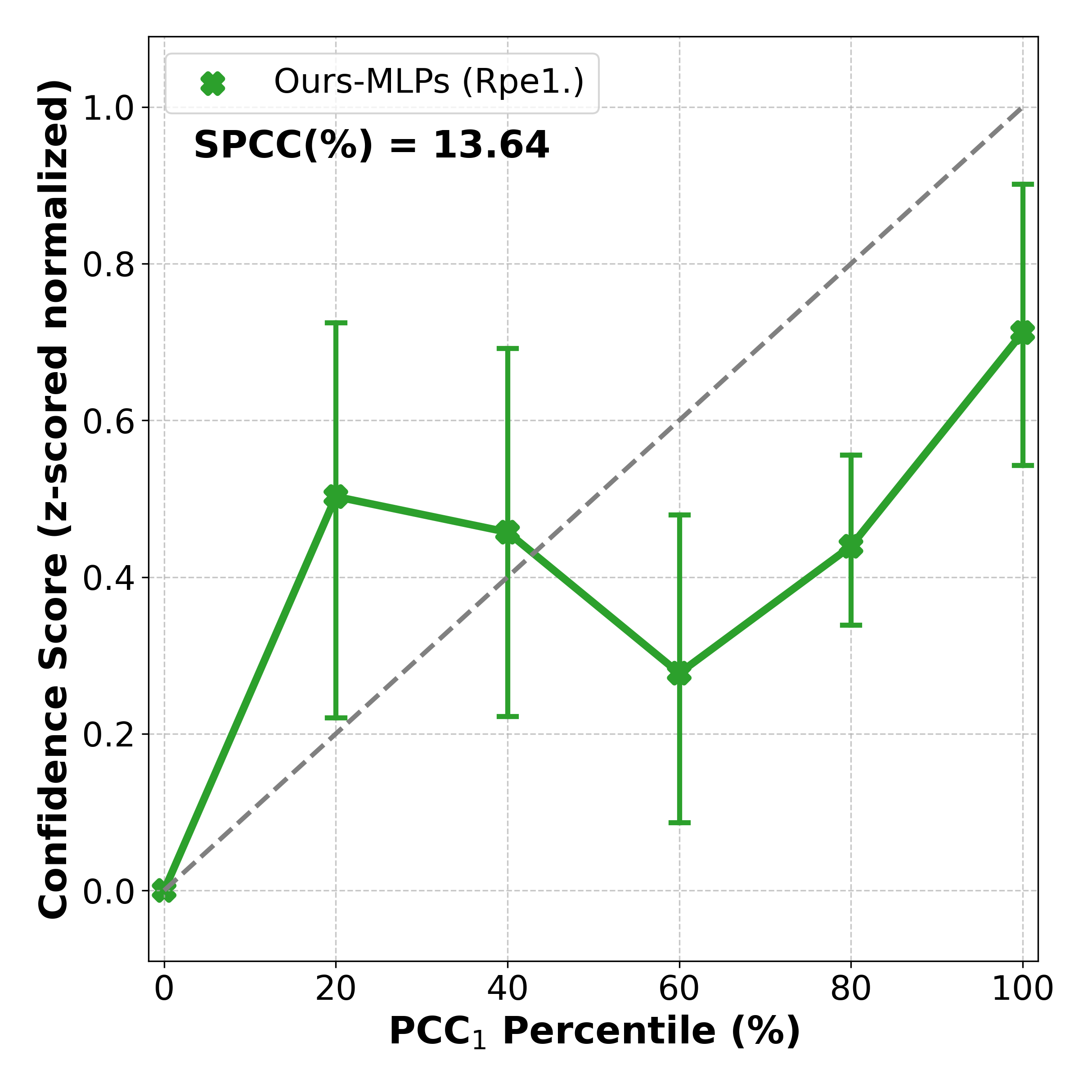}
    }
    \subfigure[]{
        \includegraphics[width=0.225\textwidth]{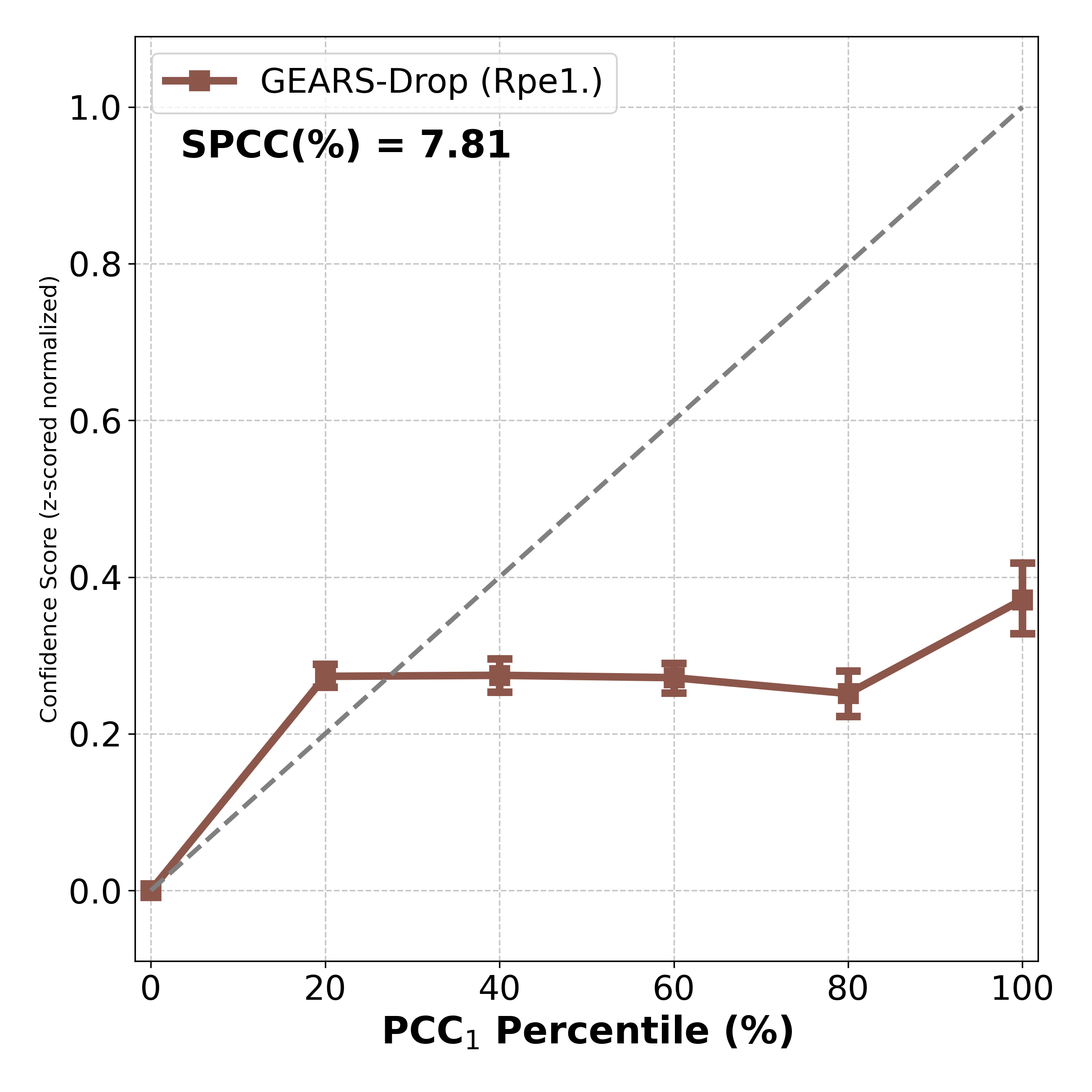}
    }
    \subfigure[]{
        \includegraphics[width=0.225\textwidth]{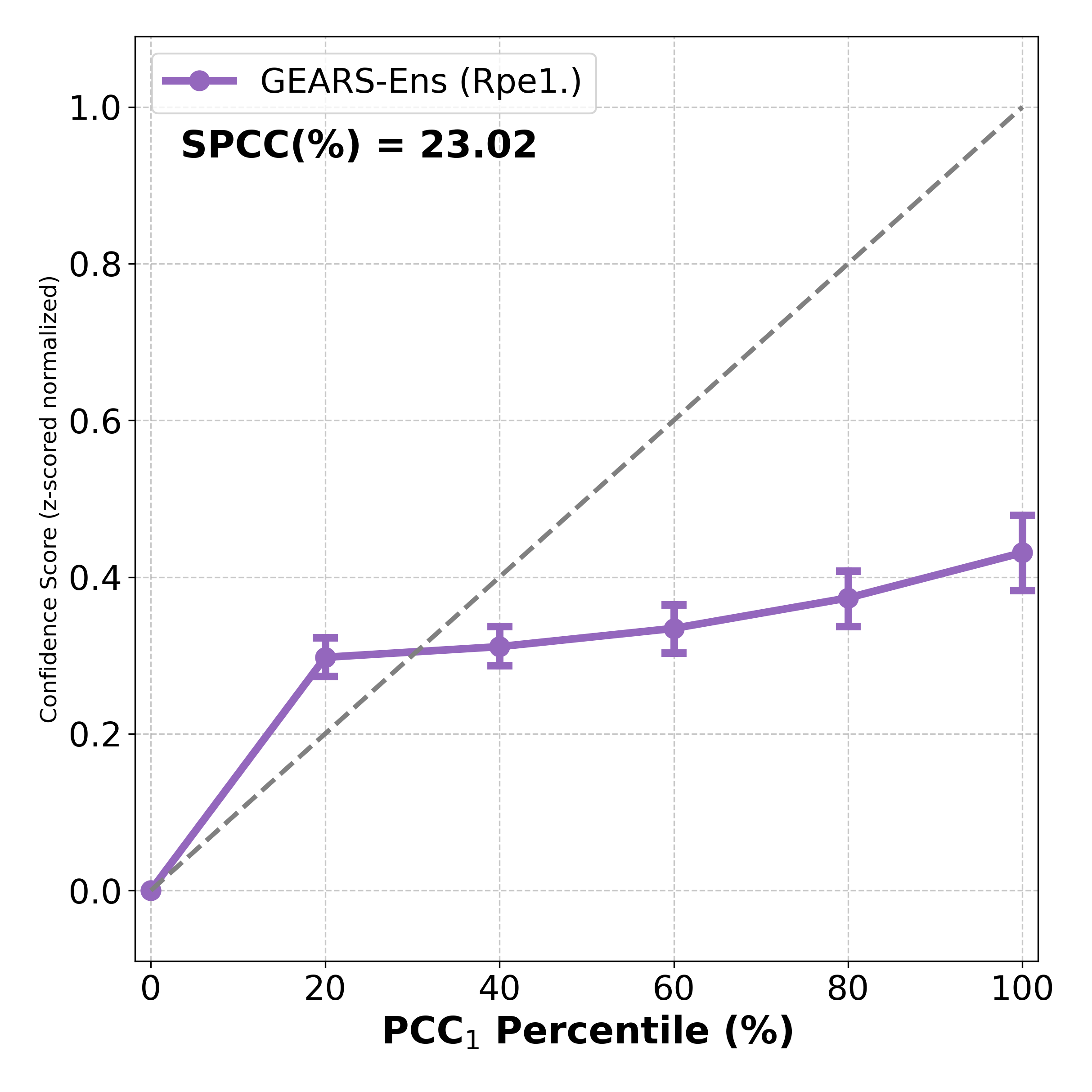}
    }
    
    \subfigure[]{
        \includegraphics[width=0.225\textwidth]{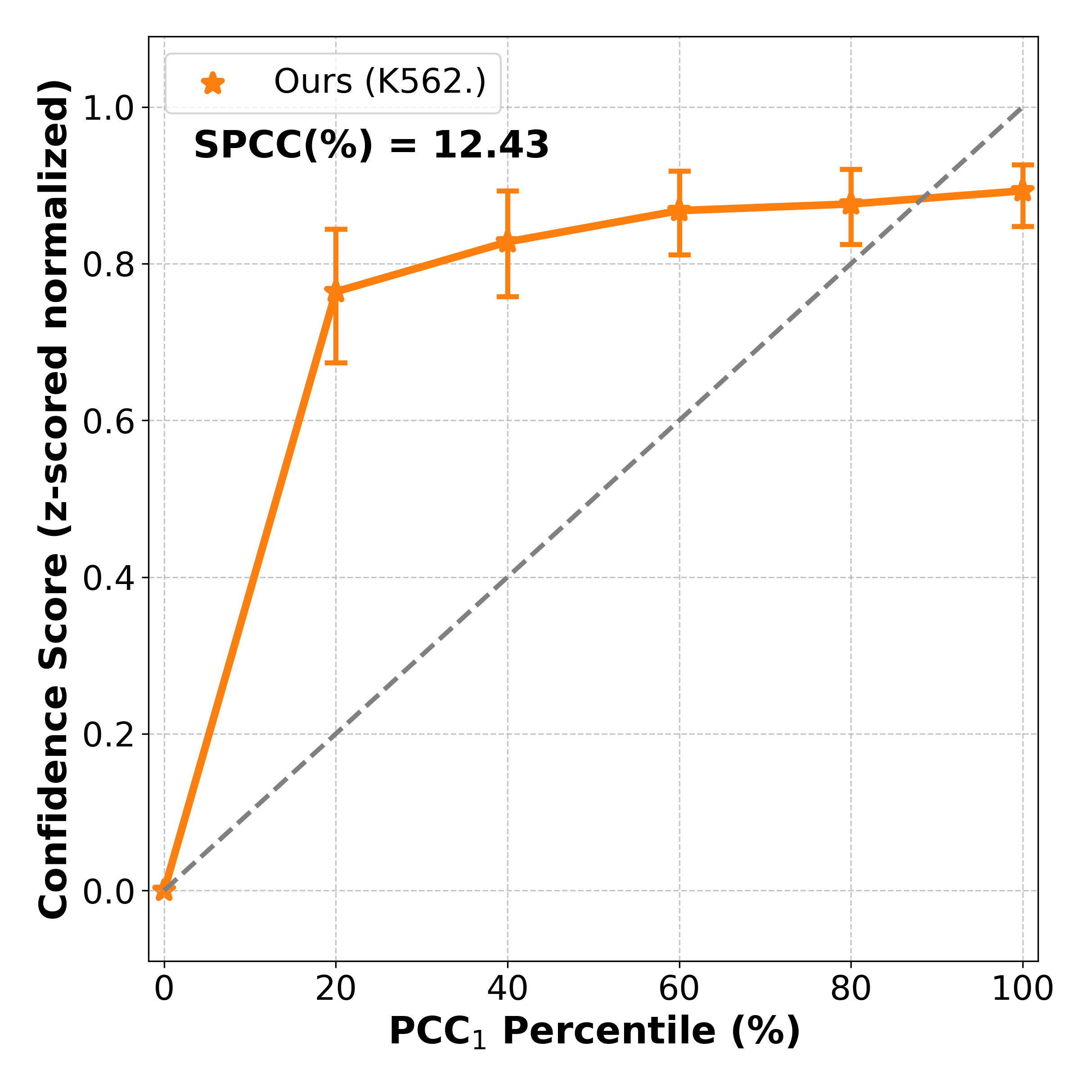}
    }
    \subfigure[]{
        \includegraphics[width=0.225\textwidth]{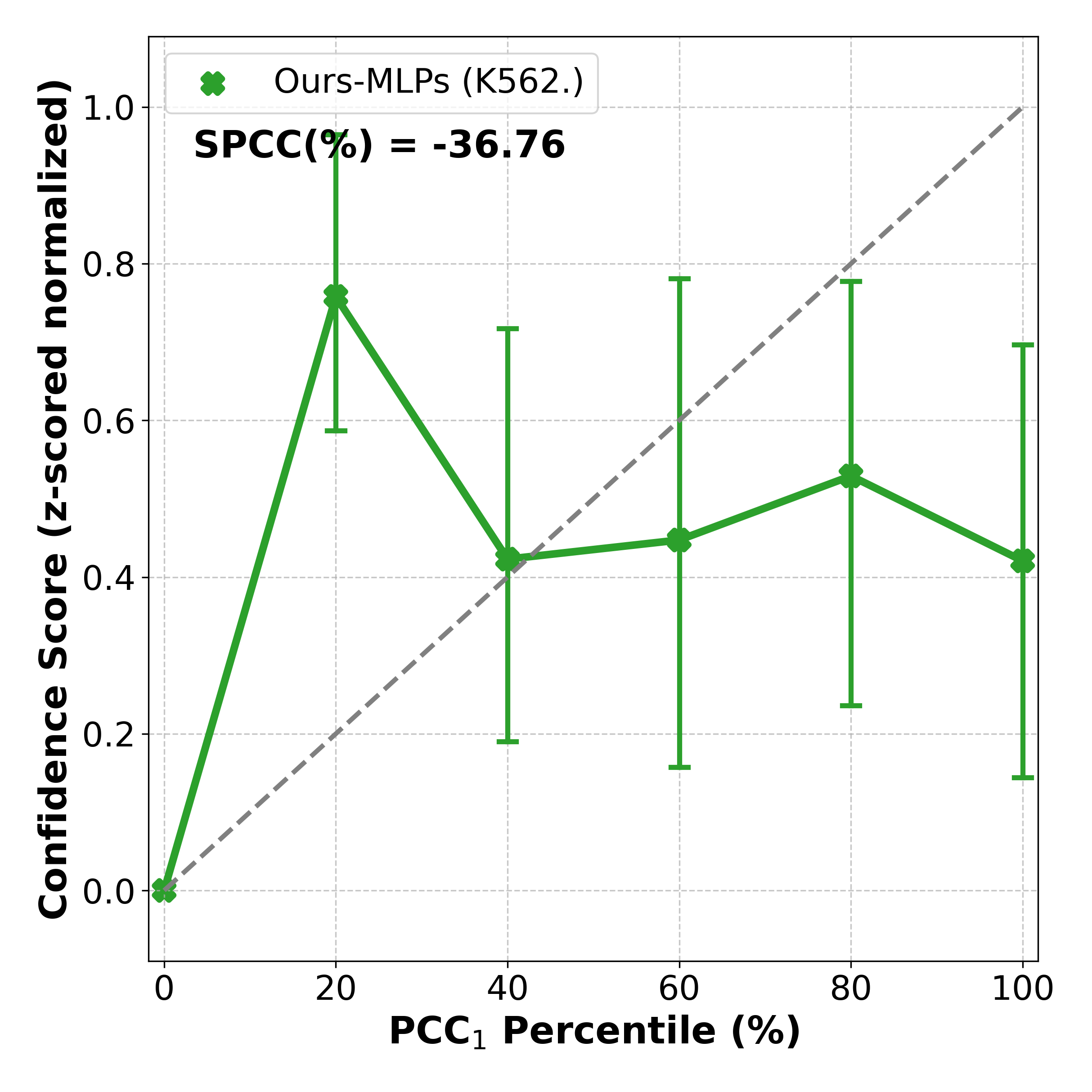}
    }
    \subfigure[]{
        \includegraphics[width=0.225\textwidth]{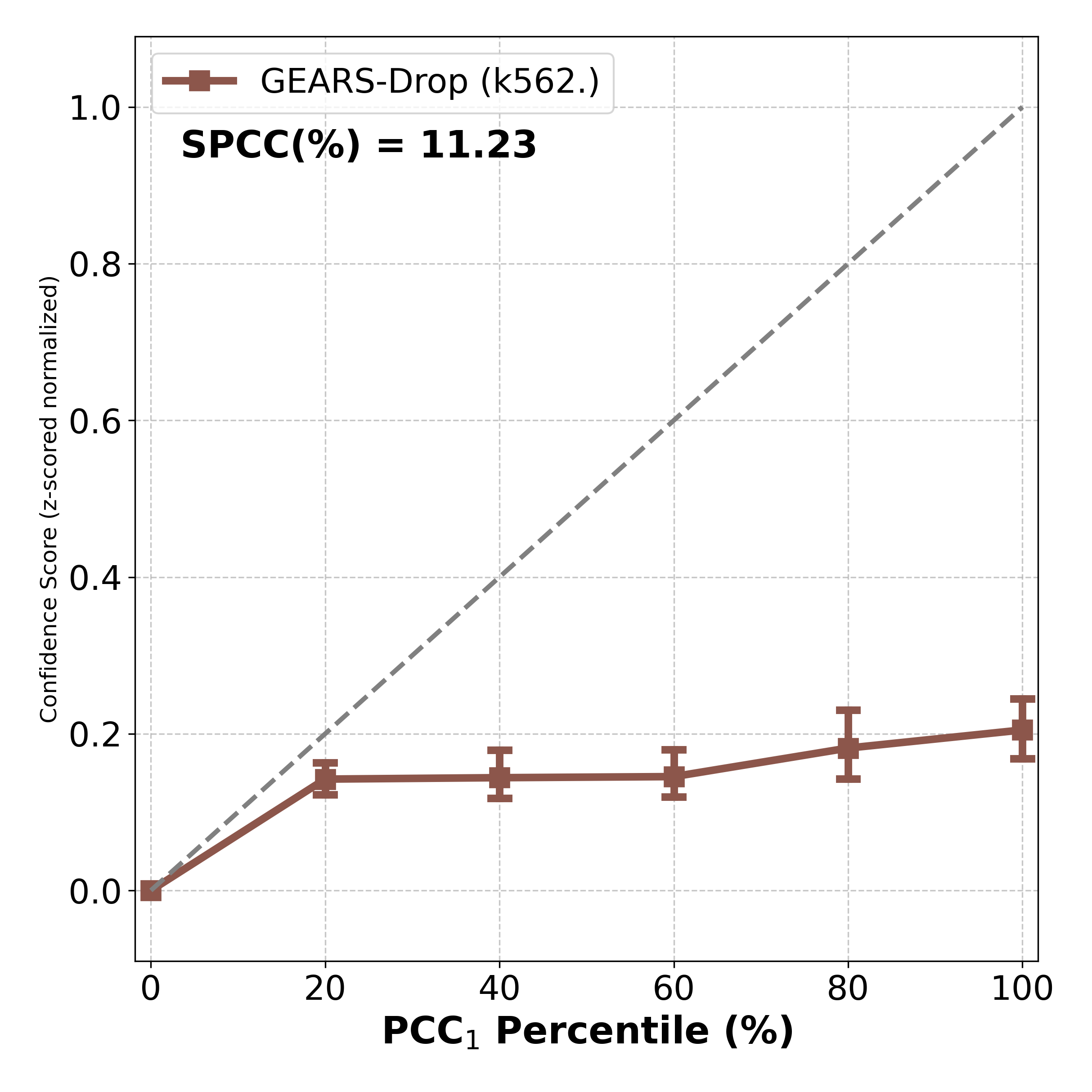}
    }
    \subfigure[]{
        \includegraphics[width=0.225\textwidth]{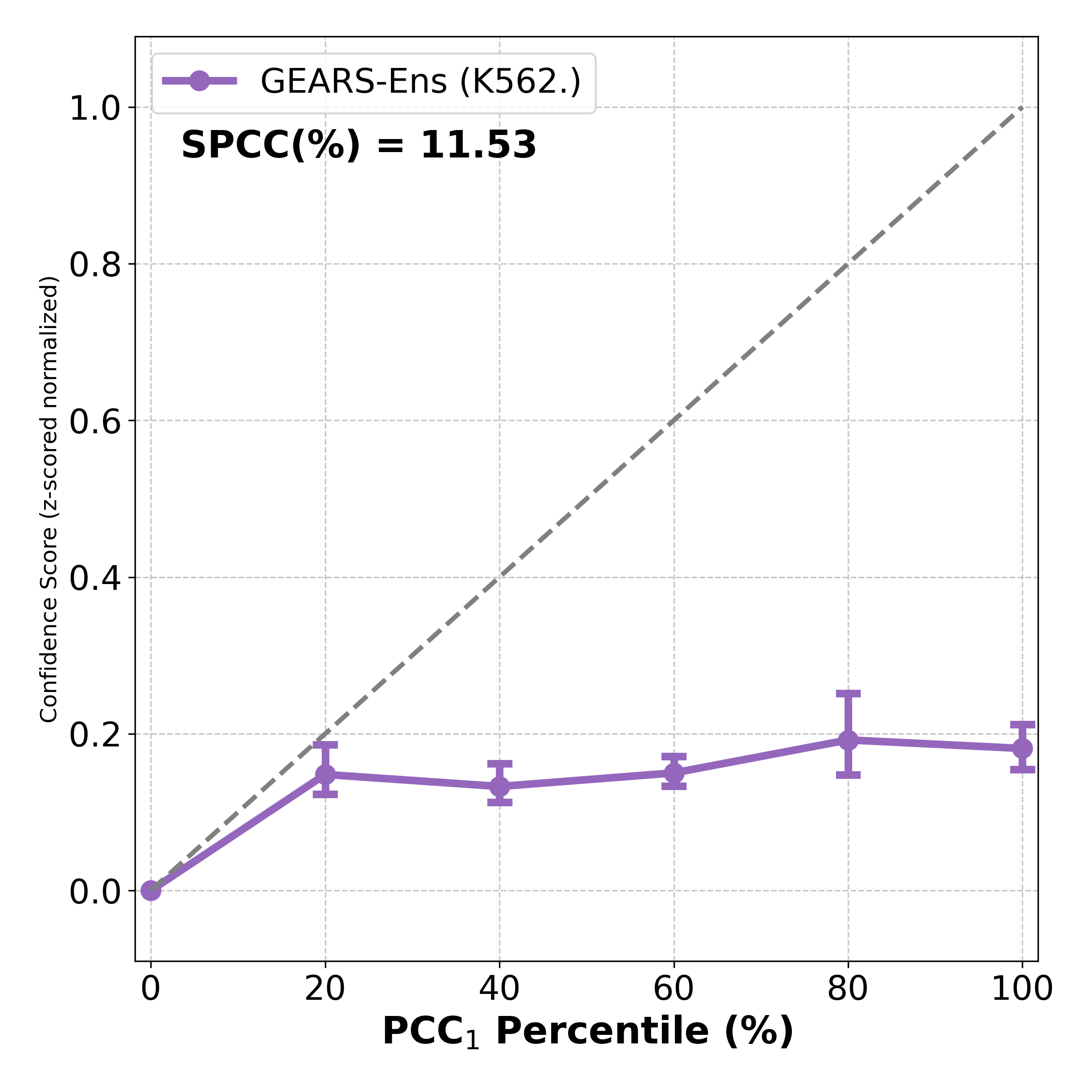}
    }
    \caption{Calibration curves relating model-estimated confidence scores to prediction accuracy ($r_{\text{pred,truth}}$). Each row corresponds to a dataset, while columns and colors distinguish different models (in order: Ours, Ours-MLPs, GEARS-Drop, and GEARS-Ens). Text in each subplot indicates the Spearman Correlation Coefficient ($r^{s}_{\text{perf,conf}}$) between $r_{\text{pred,truth}}$ and confidence scores.}
    \label{accuracy_based}
    
\end{figure*}

\subsection{Qualitative Visualization}

We visualized the Normal Inverse Wishart latent space learned by our model using t-SNE~\cite{maaten_visualizing_2008} dimensionality reduction on the Norman dataset (Fig.~\ref{emb}). This visualization represents control (prior), training, and test perturbations by red, gray, and blue, respectively. The intensity of the blue color corresponds to the $r_{\text{pred,truth}}$ prediction accuracy, with darker shades indicating higher quality predictions.
The visualization aligns with expectations: training data points and large $E$ values are generally located further from the control prior; the model assigns them higher confidence, reflecting their dominance over the prior, and indeed, most of them exhibit high prediction accuracy $r_{\text{pred,truth}}$.

\subsection{Accuracy-Based Calibration Analysis}\label{Accuracy-Based Calibration Analysis}

We examined accuracy-based calibration curves for a more granular understanding of our model's practical utility. Furthermore, we computed the Spearman correlation ($r^{s}_{\text{perf,conf}}$) between prediction accuracy and the confidence scores produced by our model and baseline methods (for GEARS-Drop and GEARS-Ens, inverse values were used as their confidence scores represent variance). From Fig.~\ref{accuracy_based}, our model achieved the highest $r^{s}_{\text{perf,conf}}$ across all datasets, and the visualizations of the calibration curves demonstrated that our model exhibits a consistent trend where confidence increased monotonically with accuracy, indicating its confidence score is well-calibrated overall. 
The expected calibration error ($\epsilon_{\text{perf,conf}}$) results are presented in the supplementary materials (Appendix~\ref{ece}).
\vspace{-10pt}
\subsection{Prediction Accuracy with Uncertainty-Guided Filtering}\label{Prediction Accuracy with Uncertainty-Guided Filtering}

A practical use of reliable uncertainty estimation is identifying and potentially excluding low-confidence predictions, thereby improving overall performance. We investigated this by comparing our model's accuracy after filtering out the 5\% and 10\% least confident predictions (Ours 5\% and Ours 10\%) against baseline models (Tab.~\ref{result1}).
Ours-10\% significantly outperformed all baselines, while Ours-5\% also showed strong results. It is important to note that modeling distributions (as our model does) is inherently more complex than direct value prediction. Thus, models like GEARS-Drop and Ours, which learn an additional function for confidence evaluation, did not exhibit standout accuracy initially. However, only our model demonstrated stable, significant, consistent improvements across metrics after different degrees of filtering unreliable predictions. 

\subsection{Ablation Studies on Filtering and Key Model Components}\label{Ablation Studies on Filtering and Key Model Components}

We conducted several ablation studies further to understand the sources of our model's performance.
\begin{table*}[h!]
    \centering
    \begin{minipage}{0.45\textwidth}
    \caption{Ablation Studies on Filtering (in \%)}
        \resizebox{\textwidth}{!}{
    \begin{tabular}{@{}lcc@{}}
    \toprule
            \textbf{Model}               & \(r_{\text{pred,truth}} \uparrow\)       & \(\text{ACC}_{\text{pred,truth}} \uparrow\)    \\ 
    \midrule
    Ours & 58.38 & 63.24 \\
    Random Filtering-5\% & 53.35$\pm$3.16 & 60.56$\pm$0.97 \\
    Random Filtering-10\% & 58.28$\pm$1.59 & 57.49$\pm$2.25 \\
    \bottomrule
    \end{tabular}
        }
        \label{filter}
    \end{minipage}\hfill
    \begin{minipage}{0.45\textwidth}
    \caption{Ablation Studies on Modules (in \%)}
    \resizebox{\textwidth}{!}{
        \begin{tabular}{@{}lcc|c@{}}
            \toprule
            \textbf{Model}               & \(r_{\text{pred,truth}} \uparrow\)       & \(\text{ACC}_{\text{pred,truth}} \uparrow\)       & \(r^{s}_{\text{perf,conf}} \uparrow\)      \\ 
            \midrule
            Ours-NOINFO & 43.25 & 57.80  & 36.65  \\
            w/o \(\mathcal{L}_3\)        & 55.61 & 63.61  & 27.74  \\
            w/o \(\mathcal{L}_4\)        & 22.41 & 54.73  & 29.83  \\
            \bottomrule
        \end{tabular}}
        \label{ablation}
    \end{minipage}
    
\end{table*}

\paragraph{Impact of Filtering.}We performed random filtering ($10$ repetitions) to assess whether the observed performance gains were merely an artifact of data removal. As shown in Tab.~\ref{filter}, randomly filtering 5\% of predictions significantly decreased accuracy. Filtering 10\% randomly brought accuracy back to approximately the initial unfiltered level, but did not yield significant improvements. This confirms that the performance gains of our model are not attributable to the filtering operation.

\paragraph{Impact of Key Components.} As demonstrated in Tab.~\ref{ablation}, evaluating specific architectural and loss components revealed their distinct contributions. The Ours-NOINFO variant, despite comparable prediction performance to our model, showed significantly lower $r^{s}_{\text{perf,conf}}$ values, underscoring the importance of informative prior settings for well-calibrated uncertainty. The loss term $\mathcal{L}_3$, which supervises learning from E-distance, proved critical; its removal led to a slight $\text{ACC}_{\text{pred,truth}}$ increase, potentially due to the adversarial nature of optimizing accuracy versus evidential uncertainty, but at the cost of substantially reduced uncertainty calibration. Finally, ablating the loss term $\mathcal{L}_4$ resulted in a suboptimal model due to zero gradients, as mentioned in $\S$~\ref{loss4}.
\begin{figure*}[t!]
    \centering
    \subfigure[]{
    \label{lambda2}
    \includegraphics[width=0.45\textwidth]{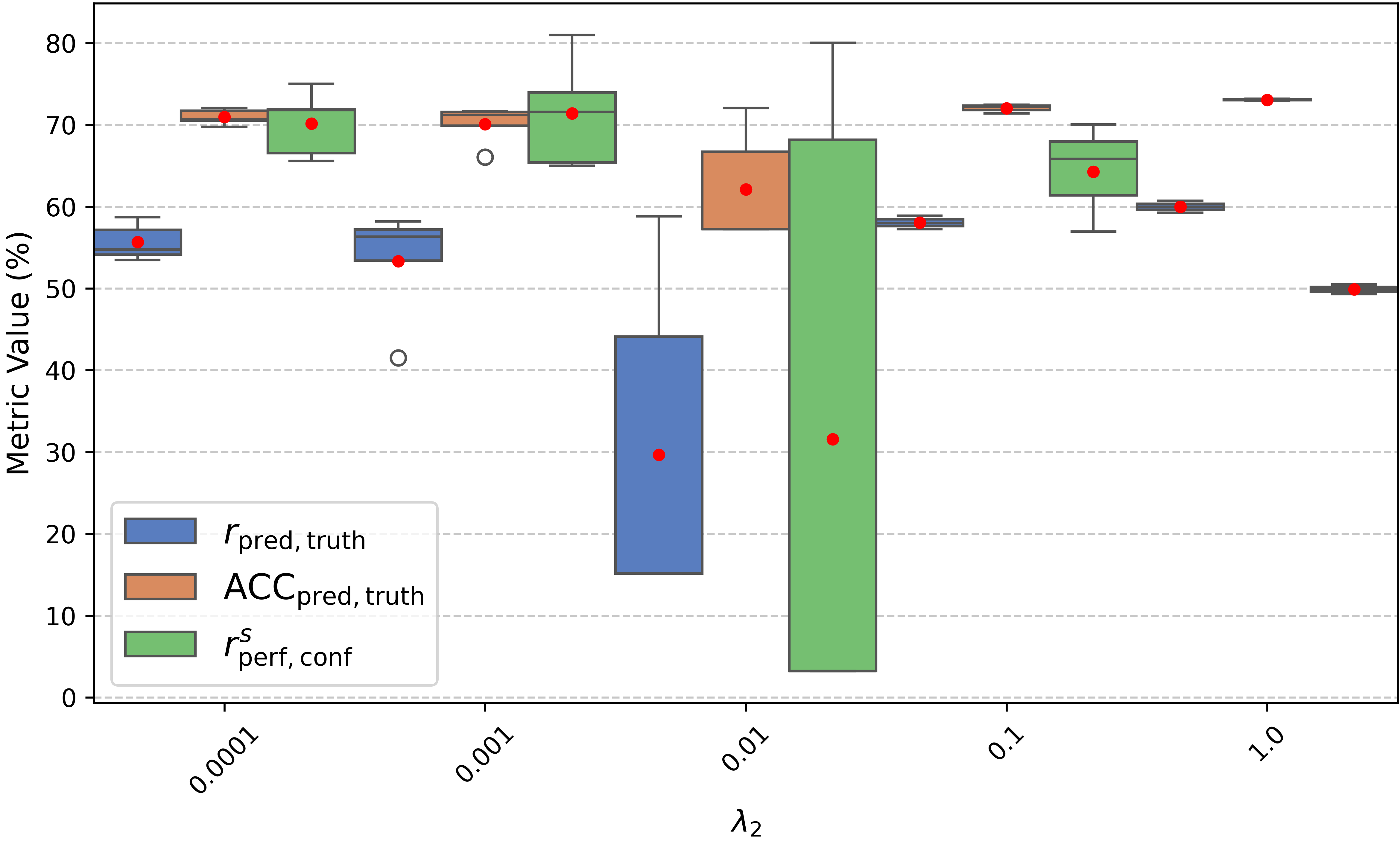}}
    \subfigure[]{
    \label{lambda13}
    \includegraphics[width=0.45\textwidth]{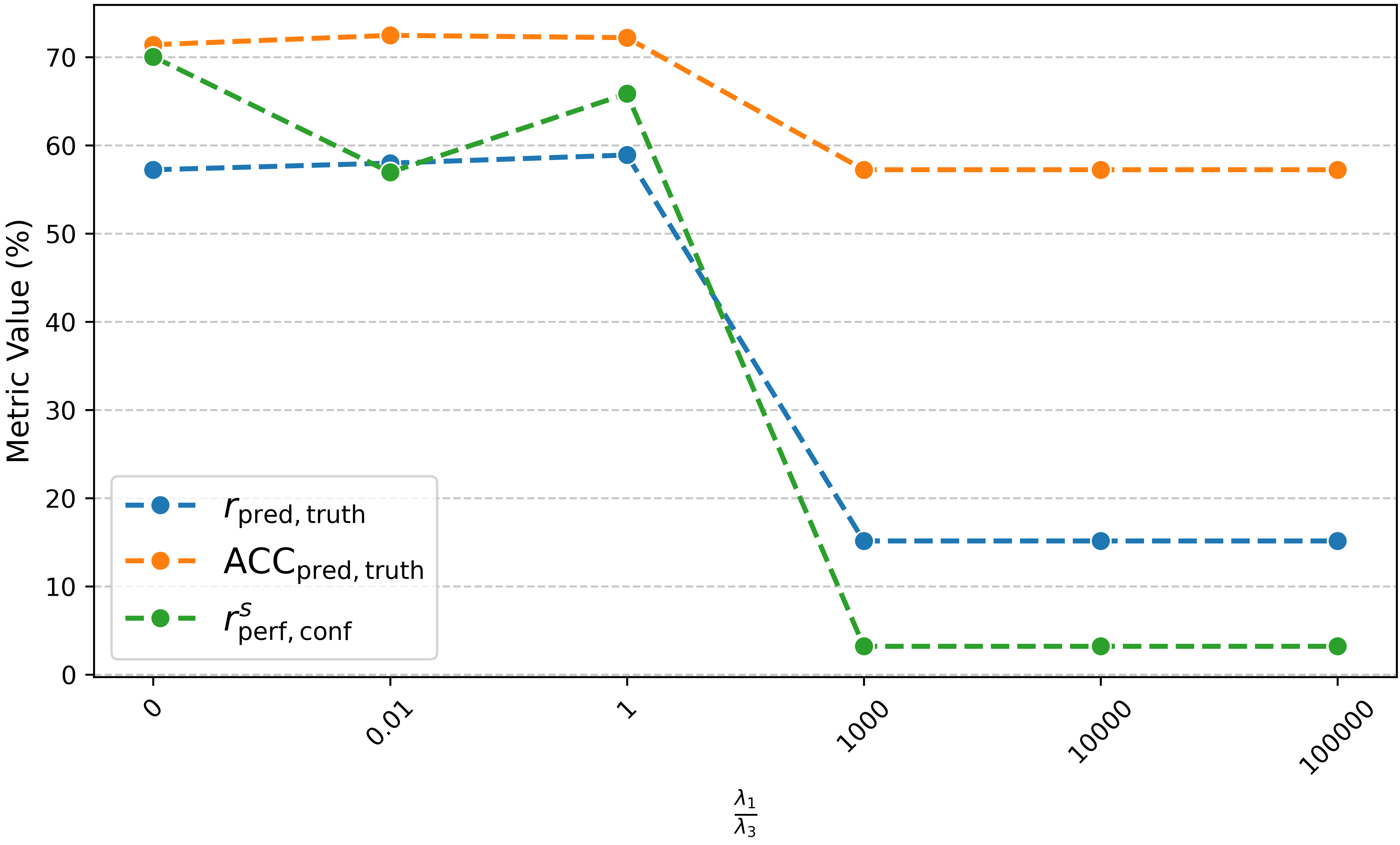}}
    \caption{Hyperparameter Search on Norman Dataset.
(a) Box plots illustrating the relationship between $\lambda_2$ and the distributions of $r_{\text{pred,truth}}$, $\text{ACC}_{\text{pred,truth}}$, and $r^{s}_{\text{perf,conf}}$. (b) Line plots illustrating the relationship between the ratio $\lambda_1/\lambda_3$ and metrics $r_{\text{pred,truth}}$, $\text{ACC}_{\text{pred,truth}}$, and $r^{s}_{\text{perf,conf}}$, when $\lambda_2=0.1$.}
\vspace{-15pt}
\end{figure*}

\subsection{Hyperparameter Search Strategy}\label{hyper search}

We developed an efficient hyperparameter tuning protocol for applying our model to new datasets.

\paragraph{Selection Pipeline.} The selection protocol can be summarized in the following steps:
\begin{itemize}[leftmargin=*]
    \item Phase 1 - $\lambda_2$ Optimization: Conduct a grid or random search for $\lambda_2$ (the weight for $\mathcal{L}_3$).
    \item Phase 2 - $\lambda_1/\lambda_3$ Ratio Tuning: Fix $\lambda_3$ (the weight for $\mathcal{L}_4$) to $10^{-5}$ and perform a focused search for $\lambda_1$ (the weight for $\mathcal{L}_2$). This effectively tunes the ratio $\lambda_1/\lambda_3$.
\end{itemize}

\paragraph{Rationale and Example.}
Our efficient two-phase tuning pipeline reduces trials from $108$ ($3\times6\times6$) to $24$ ($3\times6+6$) by fixing $\lambda_3$ in the second phase. This is justified by the adversarial relationship between $\mathcal{L}_2$ (weighted by $\lambda_1$, promoting prior adherence) and $\mathcal{L}_4$ (weighted by $\lambda_3$, encouraging evidence accumulation). Thus, we fix $\lambda_3=10^{-5}$ and tune $\lambda_1$. For example, on the Norman dataset, an initial random search determined $\lambda_2=0.1$ (Fig.~\ref{lambda2}). Tuning the $\lambda_1/\lambda_3$ ratio (Fig.~\ref{lambda13}) then revealed an inverse relationship between confidence calibration ($r^{s}_{\text{perf,conf}}$) and prediction performance ($r_{\text{pred,truth}}$, $\text{ACC}_{\text{pred,truth}}$), leading us to select $\lambda_1/\lambda_3 = 0.01$ (so $\lambda_1=10^{-7}$ for $\lambda_3=10^{-5}$).

\section{Conclusion and Discussion}
\paragraph{Conclusion.} We introduce PRESCRIBE, a novel uncertainty-aware framework for single-cell response prediction. By employing a multivariate Natural Posterior Network to estimate the pseudo E-distance for each perturbation category, our model delivers well-calibrated, instance-level uncertainty. This metric acts as a unified surrogate for both aleatoric (data) and epistemic (model) uncertainty, and its application demonstrates practical utility by improving overall predictive accuracy.

\paragraph{Future Work.} Recent research~\cite{vinas_torne_systema_2025} highlights the critical impact of the reference state (often the unperturbed cell state) in this task. Therefore, future work will leverage PRESCRIBE's configurable ``null'' state to explore how the choice of this reference influences predicted perturbation responses. The goal is to guide the optimal selection of a reference state, enabling the model to learn the specific effects of perturbations more easily along the pronounced direction of change.

\section{Acknowledgment}

This work is supported by National Natural Science Foundation of China Project (No. 623B2086), TeleAI, STI2030-Major Projects 2022ZD0212400, STCSM grant 24510714300 and 20DZ2254400, GuangDong Basic and Applied Basic Research Foundation grant 2023B1515120006 and SJTU Science-Medicine interdisciplinary grant 24X010301456.

{
\bibliographystyle{vancouver}
\bibliography{neurips_2025}
}
\clearpage

\appendix

\hypersetup{linkcolor=black, citecolor=black, urlcolor=black}
\renewcommand{\contentsname}{Appendix Overall}
\tableofcontents
\addtocontents{toc}{\protect\setcounter{tocdepth}{2}}

\vspace{1em} 
\noindent\rule{\textwidth}{1pt} %

\hypersetup{linkcolor=red, citecolor=blue, urlcolor=black}

\section{Notation Table}

\begin{table}[htbp]
\centering
\caption{Notation and Descriptions for Key Variables in PRESCRIBE}
\label{notation}
\begin{tabular}{ll} 
\toprule 
\textbf{Notation} & \textbf{Description} \\
\midrule 
$\mu$ & Mean value \\
$\Sigma$ & Covariance matrix \\
$M$ & Number of perturbation types \\
$X$ & Genetic perturbation \\
$y$ & Post-perturbed transcriptomics expressions \\
$c$ & Pre-perturbed transcriptomics expressions \\
$\mathbb{P}$ & Exponential family distribution \\
$\mathbb{Q}$ & Conjugate prior \\
$\mathbb{N}$ & Normal distribution \\
$\sigma$ & Variance \\
$\mathbb{W}$ & Wishart distribution \\
$\mathbb{W}^{-1}$ & Inverse Wishart distribution \\
$\Lambda$ & Precision matrix \\
$N$ & Rank of gene-gene interaction matrix \\
$d$ & Hidden dim \\
$D$ & Latent dim \\
$S$ & Token count in scGPT embedding \\
$\mu_0, \nu, \kappa, L$ & Parameters in NIW\textdagger; $L$: lower triangular matrix \\
$\Psi$ & NIW parameter: $\Psi = \frac{1}{\nu} L^{-\top} L^{-1}$ \\
$\omega$ & Simplified NIW parameters: $\omega = \{\mu_0, \kappa, \nu, L\}$ \\
$n$ & Evidence (data observations) \\
$\chi$ & Sufficient statistics for NIW \\
$\hat{E}$ & Pseudo E-distance \\
$E$ & E-distance (energy distance) \\
$B$ & Batch size \\
$\mathbb{H}$ & Entropy \\
$\psi$ & Digamma function \\
$\Gamma$ & Gamma function \\
$\theta_\alpha$ & Encoder learnable parameters \\
$\theta_\beta$ & Decoder learnable parameters \\
$\theta_\psi$ & Flow model learnable parameters \\
$\mathcal{L}_1$ & Expected log-likelihood \\
$\mathcal{L}_2$ & Entropy regularization \\
$\mathcal{L}_3$ & E-distance supervised loss \\
$\mathcal{L}_4$ & Uncertainty-regularized loss \\
$\lambda_{1-3}$ & Weights for $\mathcal{L}_2$, $\mathcal{L}_3$, $\mathcal{L}_4$ \\
$r_{\text{pred,truth}}$ & Pearson correlation (predicted vs. true log fold change) \\
$\text{ACC}_{\text{pred,truth}}$ & Directional accuracy (predicted vs. true log fold change) \\
$r_{\text{perf,conf}}$ & Pearson correlation between $\hat{E}$ and $x$* \\
$\text{ACC}_2$ & Quintile classification accuracy ($\hat{E}$ vs. $x$*) \\
$r^{s}_{\text{perf,conf}}$ & Spearman correlation ($\hat{E}$ vs. $x$*) \\
\multicolumn{2}{l}{\textdagger NIW = Normal Inverse Wishart distribution; *$x$ defaults to PCC\textsubscript{1} unless specified.} \\
\bottomrule 
\end{tabular}
\end{table}
\clearpage

\section{Default Settings}

The PRESCRIBE framework was trained on a single NVIDIA RTX 4090 GPU. We used the Adam~\cite{kingma2017adammethodstochasticoptimization} optimizer with an initial learning rate of $10^{-4}$ and set the weight decay value as $10^{-5}$. The training was carried out for a maximum of \(50\) epochs using batch sizes of \(4096\) samples, with early stopping triggered if $\mathcal{L}_1$ of the validation set did not show improvement for three consecutive epochs. A dynamic learning rate scheduler ``plateau'' reduced the learning rate by 1\% when the relative $\mathcal{L}_1$ validation improvement remained below 0.01\% for two consecutive epochs. The implementation leverages \textsc{PyTorch Lightning}~\cite{Falcon_PyTorch_Lightning_2019} 2.4.0's modular architecture.
\renewcommand{\arraystretch}{1.2}
\begin{table}[htbp]
\centering
\caption{Experiment Configurations.}
\resizebox{0.5\textwidth}{!}{
\small 
\begin{tabular}{ccll} 
\toprule
\multicolumn{2}{l}{} & \textbf{Hyperparameter} & \textbf{Value} \\
\midrule
\multicolumn{2}{c}{\multirow{6}{*}{\rotatebox[origin=c]{90}{\textbf{Model Settings}}}} & Flow layers & 10 \\
& & Flow size & 0.774231384 \\
& & Flow n hidden & 2 \\
& & Bound & 30 \\
& & $D$ & 64 \\
& & $N$ & 10 \\
\midrule
\multicolumn{2}{c}{\multirow{15}{*}{\rotatebox[origin=c]{90}{\textbf{Training Details}}}} & Warm up epochs & 5 \\
& & Warm up learning rate & 0.001 \\
& & Optimizer & Adam \\
& & Scheduler & Plateau \\
& & Scheduler change step & 2 \\
& & Scheduler reduce rate & 0.99 \\
& & Patience & 3 \\
& & Learning rate & 1e-4 \\
& & Total epochs & 50 \\
& & Weight decay & 1e-5 \\
& & $B$ & 4096 \\
& & Gradient Accumulation & 4 \\
& & $\lambda_1$ & 1e-7 \\
& & $\lambda_2$ & 0.1 \\
& & $\lambda_3$ & 1e-5 \\
\bottomrule
\end{tabular}
}
\label{setting}
\end{table}
\clearpage

\section{Algorithm}

The optimization process of PRESCRIBE can be summarized in the following algorithm~\ref{algo}
\begin{algorithm}[h!]
   \caption{Training Process of PRESCRIBE}
\begin{algorithmic}[1]\label{algo}
   \STATE {\bfseries Input:} 
   \STATE \quad Perturbation types: $\boldsymbol{X}$;
   \STATE \quad Condition information: $\boldsymbol{c}$, $\boldsymbol{\chi}^\textit{prior}$;
    \STATE \quad Prior evidence: $\nu^\textit{prior}$;
   \STATE \quad Post-perturbed transcriptomics expressions: $\boldsymbol{y}_{i}$;
   \STATE \quad E-distance of training samples: $E_{i}$;
   \STATE {\bfseries Initialize:} 
   \STATE \quad $\boldsymbol{\theta} = \{\theta_\alpha, \theta_\psi, \theta_\beta\} \leftarrow$ initialize network parameters;
   \REPEAT
       \STATE \quad $x_{i} \leftarrow$ random mini-batch from $\boldsymbol{X}$;
       \STATE \quad $z_{i} \leftarrow f_\alpha(x_{i}, \boldsymbol{c})$; // \texttt{Encoder;}
       \STATE \quad $n_{i} \leftarrow f_\phi(z_{i})$; // \texttt{Flow;}
       \STATE \quad $\boldsymbol{\chi}_{i} =\{\chi_1,\chi_2\} \leftarrow f_\beta(z_{i})$; // \texttt{Decoder;}
        \STATE \quad // \texttt{Bayesian Posterior Update;}
        \STATE \quad $\boldsymbol{\chi}_{i}\leftarrow$ compute through Eq.~\ref{suffic_train};
        \STATE \quad $\nu^{\text{post}}_{i}, \nu_{i}  \leftarrow$ compute through Eq.~\ref{tilde nu};
      \STATE \quad $\kappa^{\text{post}}_{i} \leftarrow 2 \cdot \nu^{\text{post}}_{i}$;
       \STATE \quad $\boldsymbol{\chi}_{i}^{\text{post}} \leftarrow \frac{(\nu^{\text{prior}} \boldsymbol{\chi}^{\text{prior}} + \nu_{i} \boldsymbol{\chi}_{i})}{\nu_{i} + \nu^{\text{prior}}}$;
       \STATE \quad $\boldsymbol{\mu}_{0x_{i}}^{\text{post}} \leftarrow \chi_1^{\text{post}}$;
       \STATE \quad $\boldsymbol{L}^{\text{post}} \leftarrow\operatorname{Cholesky}((\chi_{2}^{\text{post}} - (\chi_{1}^{\text{post}})^{2}) \times (\nu^{\text{post}}_{i})^{2}) \times \nu^{\text{post}}_{i}$;
       \STATE \quad $\mathcal{L} \leftarrow$ compute through Eq.~\ref{loss};
       \STATE \quad // \texttt{Update parameters according to gradients;}
       \STATE \quad $\boldsymbol{\theta} \leftarrow \boldsymbol{\theta} - \nabla_{\boldsymbol{\theta}} \mathcal{L}$;
   \UNTIL {deadline reached}
\end{algorithmic}
\end{algorithm}

\section{Computational Complexity Analysis}

The computational complexity of our model is characterized by three primary components: the Encoder, the Flow, and the Decoder. The complexities for each component are detailed below:
\begin{enumerate}[leftmargin=*] 
    \item \textbf{Encoder}: The overall computational complexity for the Encoder is $\mathcal{O}(d \cdot (S+G+d+D))$. This is derived from the complexities of its constituent functions:
    \begin{itemize}
        \item $f_{11}$: $\mathcal{O}(S \cdot d+d)$
        \item $f_{12}$: $\mathcal{O}(G \cdot d+d)$
        \item $f_{13}$: $\mathcal{O}(d \cdot d+d)$
        \item $f_2$: $\mathcal{O}(d \cdot d+d)$
        \item $f_o$: $\mathcal{O}(d \cdot D+D)$
    \end{itemize}

    \item \textbf{Flow}: The complexity associated with the Flow component is $\mathcal{O}(D)$.

    \item \textbf{Decoder}: For the Decoder, which takes an input of dimension $D$ and produces an output of dimension $(N \cdot (N+1)/2 + N)$, the computational complexity is $\mathcal{O}(D \cdot N^2)$.
\end{enumerate}

\section{Preliminaries}

\subsection{E-Distance}\label{E}

E-distance~\cite{rizzo_energy_2016}, a statistical metric for assessing high-dimensional distributions, offers measures of the signal-to-noise ratio within a dataset. It is popularly referred to as energy distance, originally derived from the concept of gravitational energy in physics. In essence, E-distance determines whether two distributions are distinct by comparing the distance between them -- pairwise distance-- to the variability within each distribution--self-distance''. The illustration can be referred to in Fig.\ref{overview}(\uppercase\expandafter{\romannumeral4}). Applying E-distance in single-cell perturbation prediction task was introduced by scPerturb~\cite{peidli_scperturb_2024}, which utilized E-distance to differentiate between strong and weak perturbations. For two given single-cell statuses, represented as $X$ and $Y$, the E-distance calculation is defined as follows:
\begin{equation}
E(X, Y) := 2 \delta_{XY} - \sigma_X - \sigma_Y,
\end{equation}
where $\sigma_X$ and $\delta_{XY}$ are the self-distance within and pairwise distance between distributions:
\begin{align}
\sigma_X &= \frac{1}{N(N-1)} \sum_{i=1}^N \sum_{j=1}^N \|x_i - x_j\|, \\
\delta_{XY} &= \frac{1}{NM} \sum_{i=1}^M \sum_{j=1}^N \|x_i - y_j\|.
\end{align}
Typically, scPerturb sets the control status as $X$, aligning with most deep learning models' assumptions. Our observation (Fig.~\ref{fig1-b}) suggests an apparent positive correlation between E-distance and predictive quality. However, since E-distance needs to access data, it poses challenges for predicting unseen perturbations, which is out-of-sample data. 

\subsection{Bayesian Posterior Update}\label{bayesian_theory}

In the context of distributions from the exponential family, we can express the conjugate prior and the posterior distribution after observing $M$ target observations as follows:
\begin{equation}
\mathbb{P}(\boldsymbol{y} \mid \boldsymbol{\omega}) = h(\boldsymbol{y}) \exp \left(\boldsymbol{\theta}^T \boldsymbol{u}(\boldsymbol{y}) - A(\boldsymbol{\omega})\right),
\end{equation}
\begin{equation}
\mathbb{Q}(\boldsymbol{\omega} \mid \boldsymbol{\chi}, n) = \eta(\boldsymbol{\chi}, n) \exp \left(n \boldsymbol{\omega}^T \boldsymbol{\chi} - n A(\boldsymbol{\omega})\right),
\end{equation}
{
\begin{equation}
\footnotesize
\mathbb{Q}\left(\boldsymbol{\omega} \mid \boldsymbol{\chi}^{\text{post}}, n^{\text{post}}\right) \propto \exp{\left(n^{\text{post}} \boldsymbol{\omega}^T \boldsymbol{\chi}^{\text{post}} - n^{\text{post}} A(\boldsymbol{\omega})\right)}.
\end{equation}}
The parameters $\mathbf{\omega}$, $\mathbf{\chi}$, and $n$ correspond to the target distribution, prior distribution, and evidence, respectively. The posterior parameters can be updated as follows:
\begin{equation}
\begin{cases}
    \mathbf{\chi}^{\text{post}} = \frac{n^{\text{prior}} \mathbf{\chi^{\text{prior}}} + \sum_j^M \boldsymbol{u}(y_j)}{n^{\text{prior}} + M} \\
    n^{\text{post}} = n^{\text{prior}} + M
\end{cases}.
\end{equation}
The method nature posterior network~\cite{charpentier_natural_2021} has leveraged this framework to predict Bayesian uncertainty. Moreover, it employs normalizing flows~\cite{rezende_variational_2015} to develop a structural latent space, inferring that regions distant from the training data should exhibit higher uncertainty. However, this approach has yet to be expanded to the multivariate Gaussian distribution, which needs to be theoretically complemented in corresponding derivation to predict single-cell perturbation response better.

\subsection{Deep Evidential Regression.}\label{DER}

Evidential learning provides a promising solution to the challenge of overconfidence in deep learning models. This approach is more efficient than ensemble methods~\cite{lakshminarayanan_simple_2016} or Monte Carlo approximations, as it learns high-order prior distributions by gathering evidence from training samples in a single forward pass.
Currently, the research on multivariate regression within this framework is limited, with only one study~\cite{meinert_multivariate_2021} focusing on the parameters of a Normal Inverse-Wishart (NIW) distribution, which serves as a conjugate prior. In this setting, the target $\boldsymbol{y} \in \mathcal{R}^{N}$ is considered to be independently and identically distributed (i.i.d.) from a multivariate Gaussian distribution with an unknown mean $\boldsymbol{\mu}$ and covariance matrix $\boldsymbol{\Sigma}$. The model aims to predict not only the mean $\mathbb{E}[\boldsymbol{\mu}]$, but also the aleatoric uncertainty $\mathbb{E}[\boldsymbol{\Sigma}]$ and epistemic uncertainty $\operatorname{var}[\boldsymbol{\mu}]$. These predictions can be expressed as follows:
{
\begin{equation}
\scriptsize
\mathbb{E}[\boldsymbol{\mu}] = \boldsymbol{\mu}_0, \quad \mathbb{E}[\boldsymbol{\Sigma}] \propto \frac{\nu}{\nu - N - 1} \boldsymbol{L} \boldsymbol{L}^{\top}, \quad \operatorname{var}[\boldsymbol{\mu}] \propto \frac{\mathbb{E}[\boldsymbol{\Sigma}]}{1/2\kappa}.
\end{equation}}
To generate these predictions, this method optimizes the analytical solution of the likelihood function: 
\begin{equation}\label{mder}
    \scalebox{0.9}{\(\mathbb{P}(\boldsymbol{y}_i \mid \omega_i)\)=\(t_{\nu_i-N+1}\left(\boldsymbol{y}_i \mid \boldsymbol{\mu}_{0 i}, \frac{1}{\nu_i-N+1} \frac{1+\kappa_i}{\kappa_i} \nu_i\boldsymbol{L_i}\boldsymbol{L_i}^{T}\right)\).}
\end{equation}
However, it lacks mechanisms to ensure a structured and continuous latent space, which is important to leverage genetic structural similarity. Additionally, some studies~\cite{meinert_unreasonable_2022,Ye_Chen_Wei_Zhan_2024} have noted that the model takes the risk of halting the optimization process in highly uncertain areas, denoted as ``zero-evidence''.

\section{Derivation and Proof}

This section includes theoretical derivations and proofs of PRESCRIBE. First, $\S$~\ref{A1} provides the derivation for the expected log-likelihood (\(\mathcal{L}_1\)) of a multivariate Gaussian distribution under a Normal-Inverse-Wishart prior. The derivation for the differential entropy of the Normal-Wishart distribution (\(\mathcal{L}_2\)) is presented in $\S$~\ref{A2}. $\S$~\ref{A3} offers a justification for how Eq.~\ref{loss4} mitigates the issue of zero gradients in regions of high uncertainty. Furthermore, a multivariate approximation for the Gamma and Digamma functions is derived in $\S$~\ref{appendix A4}. Moreover, this section also presents a proof demonstrating that the pseudo E-distance preserves the ranking order of the true E-distance in $\S$~\ref{rank preseve}. Finally, we review the mechanism that NatPN uses to capture generalization difficulty. $ \S$~\ref {NatPN}

\subsection{Derivation of \(\mathcal{L}_1\)}\label{A1}

\paragraph{Expected Log-Likelihood $\mathcal{L}_1$.} $\mathcal{L}_1$ optimizes expected log-likelihood under predicted posterior $\boldsymbol{\omega}$, improving accuracy and reducing uncertainty. It also affects other data points' evidence due to flow normalization. Its closed form is as follows: 
{
\begin{align}\label{loss1}
\tiny
&-\frac{N}{2}\ln(2\pi) +\frac{1}{2}\left(\ln |2\nu_{i}\boldsymbol{L}_{i}\boldsymbol{L}^{T}_{i}|+\psi_{N}\left(\frac{\nu_{i}}{2}\right)\right)\\
&-\frac{1}{2}  \left(\left(\boldsymbol{y}_{i}-\boldsymbol{\mu}_{0x_{i}}\right)\left(\boldsymbol{y}_{i}-\boldsymbol{\mu}_{0x_{i}}\right)^T\nu_{i}^{2}\boldsymbol{L}_{i}\boldsymbol{L}^{T}_{i} + \frac{\boldsymbol{I}}{2\nu}_{i}\nonumber\right)
\end{align}}
For stability, $\psi_N(x)$ is approximated by $\sum_{n=1}^N\ln(\frac{x-n+1}{2})$ (Appx.~\ref{appendix A4}).

The Normal-Inverse-Wishart (NIW) distribution \(\boldsymbol{\mu}, \boldsymbol\Sigma \sim \mathbb{W}^{-1}(\boldsymbol{\mu}_0, \kappa, \nu, \boldsymbol{\Psi})\) is the conjugate prior of the normal distribution \(\boldsymbol{y} \sim \mathbb{N} (\boldsymbol{\mu}, \boldsymbol\Sigma)\). Note that both parameters \(\kappa\) and \(\nu\) can be viewed as pseudo counts. However, PRESCRIBE enforces a single pseudo-count \(n\) corresponding to \(\kappa = 2\nu\).

\paragraph{Target Distribution.} The density and the entropy of the Multivariate Normal distribution are:
\begin{align}
    &\mathbb{N} (\boldsymbol{\mu}, \boldsymbol\Sigma)=\frac{\exp \left(-\frac{1}{2}(\mathbf{x}-\boldsymbol{\mu})^{\mathrm{T}} \boldsymbol{\Sigma}^{-1}(\mathbf{x}-\boldsymbol{\mu})\right)}{\sqrt{(2 \pi)^N|\boldsymbol{\Sigma}|}}\\
    &\mathbb{H}[\mathbb{N} (\boldsymbol{\mu}, \boldsymbol\Sigma)]=\frac{N}{2} \ln (2\pi e)+\frac{1}{2} \ln \left|\boldsymbol{\Sigma}\right|
\end{align}

\paragraph{Conjugate Prior Distribution.} The density of the NIW distribution is:
\begin{align}
    &\mathbb{W}^{-1}(\boldsymbol{\mu}, \boldsymbol\Sigma\mid\boldsymbol{\mu}_0, \kappa, \nu, \boldsymbol{\Psi})=\mathcal{N}\left(\boldsymbol{\mu} \mid \boldsymbol{\mu}_0, \frac{1}{\kappa} \boldsymbol{\Sigma}\right) \mathbb{W}^{-1}(\boldsymbol{\Sigma} \mid \boldsymbol{\Psi}, \nu),\\
    &\mathbb{W}^{-1}(\boldsymbol{\Sigma} \mid \boldsymbol{\Psi}, \nu) =\frac{|\boldsymbol{\Psi}|^{\nu/2}|\boldsymbol{\Sigma}|^{-\frac{\nu+N+1}{2}}}{2^{\frac{\nu N}{2}} \Gamma_N\left(\frac{\nu}{2}\right)}\exp \left\{-\frac{1}{2} \operatorname{tr}\left(\boldsymbol{\Sigma}^{-1} \boldsymbol{\Psi}\right)\right\}.\label{niw}
\end{align} The full version of the probability density function is:
\begin{equation}
    \mathbb{W}^{-1}(\boldsymbol{\mu}, \boldsymbol\Sigma\mid\boldsymbol{\mu}_0, \kappa, \nu, \boldsymbol{\Psi})=\frac{\kappa^{N / 2}|\boldsymbol{\Psi}|^{\nu / 2}|\boldsymbol{\Sigma}|^{-\frac{\nu+N+2}{2}}}{(2 \pi)^{N / 2} 2^{\frac{\nu N}{2}} \Gamma_N\left(\frac{\nu}{2}\right)} \exp \left\{-\frac{1}{2} \operatorname{Tr}\left(\boldsymbol{\Psi} \boldsymbol{\Sigma}^{-1}\right)-\frac{\kappa}{2}\left(\boldsymbol{\mu}-\boldsymbol{\mu}_0\right)^T \boldsymbol{\Sigma}^{-1}\left(\boldsymbol{\mu}-\boldsymbol{\mu}_0\right)\right\},
\end{equation}
where \(\boldsymbol{\Gamma}_N\) is the multivariate gamma function and \(\operatorname{Tr}(\cdot)\) is the Trace of the given matrix.

\paragraph{Expected Log-Likelihood.}The expected likelihood of the Multivariate Normal distribution \(\mathbb{N} (\boldsymbol{\mu}, \boldsymbol\Sigma)\) under the NIW distribution \(\mathbb{W}^{-1}(\boldsymbol{\mu}_0, \kappa, \nu, \boldsymbol{\Psi})\) is:
\begin{align}
    \mathbb{E}_{(\boldsymbol{\mu}, \boldsymbol\Sigma) \sim \mathbb{W}^{-1}(\boldsymbol{\mu}_0, \kappa, \nu, \boldsymbol{\Psi})}&[\ln \mathbb{N}(y \mid \boldsymbol{\mu}, \boldsymbol\Sigma)] \\
    &=  \mathbb{E}\left[-\frac{N}{2}\ln(2\pi)-\frac{1}{2} \ln \left|\boldsymbol{\Sigma}\right|-\frac{1}{2} \left(\mathbf{y}_{i}-\boldsymbol{\mu}\right)^T \boldsymbol{\Sigma}^{-1}\left(\mathbf{y}_{i}-\boldsymbol{\mu}\right)\right]\\
    &= \frac{1}{2}\left(-\mathbb{E}\left[\left(\mathbf{y}_{i}-\boldsymbol{\mu}\right)^T \boldsymbol{\Sigma}^{-1}\left(\mathbf{y}_{i}-\boldsymbol{\mu}\right)\right]-\mathbb{E}\left[\ln \left|\boldsymbol{\Sigma}\right|\right]-N\ln(2\pi)\right)\\
    &= \frac{1}{2}\left(-\mathbb{E}\left[\operatorname{Tr}\left[\left(\mathbf{y}_{i}-\boldsymbol{\mu}\right)^T \left(\mathbf{y}_{i}-\boldsymbol{\mu}\right)\boldsymbol{\Sigma}^{-1}\right]\right]+\mathbb{E}\left[\ln \left|\boldsymbol{\Sigma^{-1}}\right|\right]-N\ln(2\pi)\right)\\
    &= \frac{1}{2}\left(-\mathbf{y}_{i}^{T}\mathbf{y}_{i}\mathbb{E}\left[\boldsymbol{\Sigma}^{-1}\right]+\mathbf{y}_{i}^{T}\mathbb{E}\left[\boldsymbol{\mu}\boldsymbol{\Sigma}^{-1}\right]+\mathbf{y}_{i}\mathbb{E}\left[\boldsymbol{\mu}^{T}\boldsymbol{\Sigma}^{-1}\right]- \mathbb{E}\left[\boldsymbol{\mu}^{T}\boldsymbol{\mu}\boldsymbol{\Sigma}^{-1}\right]\right) \nonumber\\
    &+\frac{1}{2}\left(\left(N \ln 2-\ln |\boldsymbol{\Psi}|+\psi_{N}\left(\frac{\nu}{2}\right)\right)-N\ln\left(2\pi\right)\right)\\
    &=-\frac{1}{2} \left(\left(\boldsymbol{y}_{i}-\boldsymbol{\mu}_0\right)^{T}\left(\boldsymbol{y}_{i}-\boldsymbol{\mu}_0\right)\nu^2 \boldsymbol{L}\boldsymbol{L}^{T} + \frac{1}{2\nu} \boldsymbol{I} + \ln |2\nu \boldsymbol{L}\boldsymbol{L}^{T}|+\psi_{N}\left(\frac{\nu}{2}\right)-N\ln(2\pi)\right).
\end{align}Here \(\psi_{N}\left(\cdot\right)\) denotes the multivariate digamma function. In PRESCRIBE's formulation, we can obtain \(\boldsymbol{\Psi}^{-1} = \nu\boldsymbol{L}\boldsymbol{L}^{T}\) and the moment of the NIW distribution \(\mathbb{E}\left[\boldsymbol{\mu}\boldsymbol{\Sigma}^{-1}\right] = \boldsymbol{\mu}_0\cdot\nu\boldsymbol{L}\boldsymbol{L}^{T}\), \(\mathbb{E}\left[\boldsymbol{\Sigma}^{-1}\right]=\nu\boldsymbol{L}\boldsymbol{L}^{T}\), \(\mathbb{E}\left[\boldsymbol{\mu}^{T}\boldsymbol{\mu}\boldsymbol{\Sigma}^{-1}\right] = \boldsymbol{\mu}_0\cdot\nu\boldsymbol{L}\boldsymbol{L}^{T} +\frac{1}{\kappa}\boldsymbol{I}\), and the moment of the Inverse Wishart distribution is
\begin{equation}\label{moment}
    \mathbb{E}\left[\ln\left|\boldsymbol{\Sigma}\right|\right] = -N \ln 2+\ln |\boldsymbol{\Psi}|-\psi_{N}\left(\frac{\nu}{2}\right)=\ln\left|\frac{\boldsymbol{\Psi}}{2}\right|-\psi_{N}\left(\frac{\nu}{2}\right).
\end{equation}

\subsection{Derivation of \(\mathcal{L}_2\)}\label{A2}

\paragraph{Entropy Regularization $\mathcal{L}_2$.} $\mathcal{L}_2$ is an entropy prior favoring high-entropy posterior distributions $\mathbb{W}^{\text{post}}_{i}$. Its closed form:
\begin{align}\label{loss2}
 &-\frac{N+1}{2} \ln |2\nu_{i}\boldsymbol{L}_{i}\boldsymbol{L}^{T}_{i}| +\ln \Gamma_N\left(\frac{\nu_{i}}{2}\right)\nonumber\\
 &-\frac{\nu_{i}+N+1}{2} \psi_N\left(\frac{\nu_{i}}{2}\right)+\frac{\nu_{i} N}{2}
\end{align}
Approximations for stability (Appx.~\ref{appendix A4}): $x\psi(x) \approx x\ln(x) - \frac{1}{2}$ and $\ln\Gamma_N(x) \approx\frac{N(N-1)}{4} \ln (2 \pi)+\frac{1}{2}\sum_{n=1}^N \left(\ln 2\pi -(x+1-n) +(x-n)\ln(\frac{x+1-n}{2})\right)$.
In this section, \ref{A1} gives the derivation of the expected log-likelihood of a multivariate Gaussian distribution under Normal Inverse Wishart prior Eq.\ref{loss1}. \ref{A2} presents the derivation of the differential entropy of the normal Wishart distribution Eq.\ref{loss2}. In \ref{A3}, we justify that Eq.\ref{loss4} can avoid zero gradient in highly uncertain areas. Lastly, we derive an approximation of Gamma and Digamma in a multivariate version in \ref{appendix A4}.
Before deriving Eq.\ref{loss2}, we first introduce two propositions as follows: 
\begin{proposition}\label{pro1}
For $\boldsymbol{\Lambda} \sim \mathbb{W}(\boldsymbol{\Psi}, \nu)$ and any positive definite matrix \(A\in\mathbb{R}^{N \times N}\),
\begin{equation}
    \mathbb{E}[\operatorname{tr}(\boldsymbol{\Lambda} A)]=\nu \operatorname{tr}(\boldsymbol{\Lambda} A).
\end{equation}
\end{proposition}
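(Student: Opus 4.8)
The plan is to reduce the expected scalar $\operatorname{tr}(\boldsymbol{\Lambda} A)$ to the first moment of the Wishart-distributed matrix $\boldsymbol{\Lambda}$ by commuting expectation with the (linear) trace operator. First I would write $\mathbb{E}[\operatorname{tr}(\boldsymbol{\Lambda} A)] = \operatorname{tr}(\mathbb{E}[\boldsymbol{\Lambda}]\, A)$, which is justified because each entry of $\boldsymbol{\Lambda} A$ is a finite linear combination of the entries of $\boldsymbol{\Lambda}$ weighted by the fixed entries of $A$, and the Wishart distribution has finite first moments. This step isolates the only random object, namely $\mathbb{E}[\boldsymbol{\Lambda}]$, and reduces the claim to a single known expectation.

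Next I would invoke the standard first moment of the Wishart: for $\boldsymbol{\Lambda} \sim \mathbb{W}(\boldsymbol{\Psi}, \nu)$ with scale matrix $\boldsymbol{\Psi}$ and $\nu$ degrees of freedom, $\mathbb{E}[\boldsymbol{\Lambda}] = \nu \boldsymbol{\Psi}$. If a self-contained derivation is preferred over a citation, I would use the representation $\boldsymbol{\Lambda} = \sum_{k=1}^{\nu} \boldsymbol{g}_k \boldsymbol{g}_k^{\top}$ with i.i.d.\ $\boldsymbol{g}_k \sim \mathbb{N}(\boldsymbol{0}, \boldsymbol{\Psi})$; since $\mathbb{E}[\boldsymbol{g}_k \boldsymbol{g}_k^{\top}] = \boldsymbol{\Psi}$, summing $\nu$ identical terms gives $\mathbb{E}[\boldsymbol{\Lambda}] = \nu \boldsymbol{\Psi}$. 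Substituting and pulling the scalar $\nu$ outside the trace then yields $\operatorname{tr}(\nu \boldsymbol{\Psi} A) = \nu \operatorname{tr}(\boldsymbol{\Psi} A)$, which closes the argument.

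The main obstacle is not analytical but notational: the right-hand side as typeset reads $\nu\operatorname{tr}(\boldsymbol{\Lambda} A)$, which cannot be the intended target because $\boldsymbol{\Lambda}$ is precisely the random matrix that has just been integrated out; the correct deterministic expression is $\nu\operatorname{tr}(\boldsymbol{\Psi} A)$. The only point genuinely requiring care is the parametrization convention, since Eq.~\ref{eq:Sigma_prior_single_label} writes the Wishart as $\mathbb{W}(\nu,\boldsymbol{\Psi}^{-1})$ whereas the proposition writes $\mathbb{W}(\boldsymbol{\Psi},\nu)$; I would fix the scale-matrix convention so that $\mathbb{E}[\boldsymbol{\Lambda}] = \nu\boldsymbol{\Psi}$ remains consistent with its downstream use in the $\mathcal{L}_2$ entropy computation of $\S$~\ref{A2}.
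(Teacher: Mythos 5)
Your proof is correct and follows essentially the same route as the paper's one-line argument: commute the expectation with the linear trace operator and apply the Wishart first moment $\mathbb{E}[\boldsymbol{\Lambda}]=\nu\boldsymbol{\Psi}$, yielding $\nu\operatorname{tr}(\boldsymbol{\Psi}A)$ --- which is also exactly what the paper's proof concludes, confirming your observation that the $\nu\operatorname{tr}(\boldsymbol{\Lambda}A)$ on the right-hand side of the stated proposition is a typo for $\nu\operatorname{tr}(\boldsymbol{\Psi}A)$. Your optional self-contained derivation via $\boldsymbol{\Lambda}=\sum_{k=1}^{\nu}\boldsymbol{g}_k\boldsymbol{g}_k^{\top}$ is a fine addition but does not change the substance of the argument.
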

\begin{proof}
\(\mathbb{E}[\operatorname{tr}(\boldsymbol{\Lambda} A)]=\operatorname{tr}(\mathbb{E}[\boldsymbol{\Lambda}] A)=\nu \operatorname{tr}(\boldsymbol{\Psi} A).\)
\end{proof}
\begin{proposition}\label{pro2}
For $\boldsymbol{\Sigma} \sim \mathbb{W}^{-1}(\boldsymbol{\Psi}, \nu)$ and any positive definite matrix \(A\in\mathbb{R}^{N \times N}\),
\begin{equation}
    \mathbb{E}[\operatorname{tr}(\boldsymbol{\Sigma}^{-1} A)]=\nu \operatorname{tr}(\boldsymbol{\Psi}^{-1} A).
\end{equation}
\end{proposition}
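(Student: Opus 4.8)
The plan is to reduce Proposition~\ref{pro2} to the already-established Proposition~\ref{pro1} by invoking the duality between the Wishart and inverse-Wishart families. The single structural fact I would use is that if $\boldsymbol{\Sigma} \sim \mathbb{W}^{-1}(\boldsymbol{\Psi}, \nu)$, then the precision matrix $\boldsymbol{\Lambda} := \boldsymbol{\Sigma}^{-1}$ is Wishart distributed with the inverted scale matrix, i.e.\ $\boldsymbol{\Lambda} \sim \mathbb{W}(\boldsymbol{\Psi}^{-1}, \nu)$. This is exactly the defining relationship between the two distributions and can be read off from the density in Eq.~\ref{niw} via the change of variables $\boldsymbol{\Sigma} \mapsto \boldsymbol{\Sigma}^{-1}$, so it requires no new computation.

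First I would rewrite the target functional in terms of $\boldsymbol{\Lambda}$, using $\operatorname{tr}(\boldsymbol{\Sigma}^{-1} A) = \operatorname{tr}(\boldsymbol{\Lambda} A)$. Since $\boldsymbol{\Lambda}$ is Wishart with scale $\boldsymbol{\Psi}^{-1}$ and $\nu$ degrees of freedom, Proposition~\ref{pro1} applies directly, yielding $\mathbb{E}[\operatorname{tr}(\boldsymbol{\Lambda} A)] = \nu \operatorname{tr}(\boldsymbol{\Psi}^{-1} A)$, which is precisely the claimed identity. Equivalently, without naming the lemma, I would note that by linearity of trace and expectation $\mathbb{E}[\operatorname{tr}(\boldsymbol{\Sigma}^{-1} A)] = \operatorname{tr}(\mathbb{E}[\boldsymbol{\Sigma}^{-1}] A)$ and substitute the Wishart mean $\mathbb{E}[\boldsymbol{\Sigma}^{-1}] = \mathbb{E}[\boldsymbol{\Lambda}] = \nu \boldsymbol{\Psi}^{-1}$. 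Note that positive definiteness of $A$ is not actually needed; the argument rests only on linearity and goes through for any fixed matrix.

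The only genuine point of care---and the nearest thing to an obstacle---is keeping the scale matrix oriented correctly as $\boldsymbol{\Psi}^{-1}$ rather than $\boldsymbol{\Psi}$, and carrying the single factor of $\nu$ from the Wishart mean. I would pin both down against Eq.~\ref{niw}: the exponential term $\exp(-\tfrac{1}{2}\operatorname{tr}(\boldsymbol{\Sigma}^{-1}\boldsymbol{\Psi}))$ identifies the inverse scale of $\boldsymbol{\Sigma}^{-1}$ as $\boldsymbol{\Psi}$, hence its Wishart scale as $\boldsymbol{\Psi}^{-1}$, confirming $\mathbb{E}[\boldsymbol{\Sigma}^{-1}] = \nu\boldsymbol{\Psi}^{-1}$. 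Once this orientation is fixed, the proof collapses to a one-line application of linearity, mirroring the argument given for Proposition~\ref{pro1}.
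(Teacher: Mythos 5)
Your proposal is correct and follows essentially the same route as the paper: the paper's proof likewise invokes the duality $\boldsymbol{\Sigma}^{-1} \sim \mathbb{W}(\boldsymbol{\Psi}^{-1}, \nu)$ and then applies Proposition~1 directly. Your added checks---pinning the scale orientation against the density and observing that positive definiteness of $A$ is not actually needed---are sound but go beyond what the paper's one-line argument states.
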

\begin{proof}
By definition, $\boldsymbol{\Sigma}^{-1} \sim \mathbb{W}(\boldsymbol{\Psi}^{-1}, \nu)$, so according to Proposition \ref{pro1}, we can yield: \(\mathbb{E}[\operatorname{tr}(\boldsymbol{\Sigma}^{-1} A)]=\nu \operatorname{tr}(\boldsymbol{\Psi}^{-1} A).\)
\end{proof}
\paragraph{The Differential Entropy of the Inverse Wishart Distribution.}
Using the Inverse Wishart density given in Eq.\ref{niw}, the Inverse Wishart differential entropy is:
\begin{align}
\mathbb{H}(\boldsymbol{\Sigma})&= -\mathbb{E}[\ln\mathbb{W}^{-1}(\boldsymbol{\Sigma} \mid \boldsymbol{\Psi}, \nu)] \\
& =-\frac{\nu}{2} \ln |\boldsymbol{\Psi}|+\frac{\mathbb{E}\left[\operatorname{tr}\left(\boldsymbol{\Sigma}^{-1} \boldsymbol{\Psi}\right)\right]}{2}+\frac{\nu N}{2} \ln 2+\ln \Gamma_N\left(\frac{\nu}{2}\right)+\frac{\nu+N+1}{2} \mathbb{E}[\ln |\boldsymbol{\Psi}|] \\
& \stackrel{(a)}{=}-\frac{\nu}{2} \ln |\boldsymbol{\Psi}|+\frac{\nu \operatorname{tr}\left(\boldsymbol{\Psi}^{-1} \boldsymbol{\Psi}\right)}{2}+\frac{\nu N}{2} \ln 2+\ln \Gamma_N\left(\frac{\nu}{2}\right)+\frac{\nu+N+1}{2}\left( \ln |\boldsymbol{\Psi}|-N \ln 2-\psi_{N}\left(\frac{\nu}{2}\right)\right) \\
& \stackrel{(b)}{=} \frac{N+1}{2} \ln \left|\frac{\boldsymbol{\Psi}}{2}\right|+\frac{\nu N}{2}+\ln \Gamma_N\left(\frac{\nu}{2}\right)-\frac{\nu+N+1}{2} \psi_{N}\left(\frac{\nu}{2}\right),
\end{align}where \((a)\) uses Proposition \ref{pro2} and Eq.\ref{moment}, and in \((b)\) used \(\operatorname{tr}(\boldsymbol{\Psi}^{-1}\boldsymbol{\Psi})=\operatorname{tr}(\boldsymbol{I})=N\) and simplified.
\subsection{Derivation of \(\mathcal{L}_4\)}\label{A3}
\paragraph{Zero Gradient Regions.} Research has shown that when the evidence approaches zero, the gradient will also become zero, resulting in stopping optimisation. 
\begin{proposition}
    The model cannot learn from samples in high uncertainty areas by optimizing \(\mathcal{L}_1\).
\end{proposition}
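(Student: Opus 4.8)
The plan is to show that the only part of $\mathcal{L}_1$ that couples the loss to the observed target $\boldsymbol{y}_i$ is scaled by the evidence $\nu_i$, so that the learning signal it provides collapses exactly in the high-uncertainty (zero-evidence) regime $\nu_i \to 0$. First I would inspect the closed form of $\mathcal{L}_1$ in Eq.~\ref{loss1}: the terms $\tfrac{1}{2}\ln|2\nu_i\boldsymbol{L}_i\boldsymbol{L}_i^{T}|$, $\tfrac{1}{2}\psi_N(\nu_i/2)$, and $\tfrac{1}{4\nu_i}\operatorname{tr}(\boldsymbol{I})$ depend only on $\nu_i$ and $\boldsymbol{L}_i$, never on $\boldsymbol{y}_i$. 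Hence the entire dependence on the target is carried by the single quadratic term
\begin{equation}
Q_i = -\tfrac{1}{2}\,(\boldsymbol{y}_i-\boldsymbol{\mu}_{0x_i})^{T}\,\nu_i^{2}\,\boldsymbol{L}_i\boldsymbol{L}_i^{T}\,(\boldsymbol{y}_i-\boldsymbol{\mu}_{0x_i}),
\end{equation}
which is the mean-fitting part of the expected log-likelihood derived in Appx.~\ref{A1}.

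Next I would differentiate $Q_i$ with respect to every predicted parameter. Writing the residual $r_i = \boldsymbol{y}_i-\boldsymbol{\mu}_{0x_i}$, a direct calculation gives
\begin{equation}
\nabla_{\boldsymbol{\mu}_{0x_i}}Q_i = \nu_i^{2}\,\boldsymbol{L}_i\boldsymbol{L}_i^{T}\,r_i,\qquad
\partial_{\nu_i}Q_i = -\,\nu_i\,r_i^{T}\boldsymbol{L}_i\boldsymbol{L}_i^{T}r_i,\qquad
\nabla_{\boldsymbol{L}_i}Q_i = -\,\nu_i^{2}\,r_i r_i^{T}\boldsymbol{L}_i.
\end{equation}
The key structural observation is that \emph{every} residual-dependent gradient carries a prefactor of at least $\nu_i$. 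Consequently, as $\nu_i \to 0^{+}$ with $\boldsymbol{L}_i$ and $r_i$ bounded, each of these gradients tends to $0$ \emph{uniformly in} $\|r_i\|$: no matter how far the predicted mean $\boldsymbol{\mu}_{0x_i}$ lies from the observed $\boldsymbol{y}_i$, the correcting gradient vanishes.

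To finish the argument I would invoke the chain rule: the gradient of $\mathcal{L}_1$ with respect to any network parameter $\theta$ is a linear combination of $\nabla_{\boldsymbol{\mu}_{0x_i}}\mathcal{L}_1$, $\partial_{\nu_i}\mathcal{L}_1$, and $\nabla_{\boldsymbol{L}_i}\mathcal{L}_1$ through the decoder/flow Jacobians. In the limit $\nu_i\to 0$ the only surviving contributions come from the $\boldsymbol{y}_i$-independent terms ($\ln$-determinant and digamma); these are \emph{target-blind}, since they encode no information about the sample's observation and therefore cannot reduce prediction error. Thus the whole residual-carrying signal disappears, proving that optimizing $\mathcal{L}_1$ alone cannot teach the model to fit samples in high-uncertainty regions — which is precisely the ``zero-evidence'' pathology that $\mathcal{L}_4$ (Appx.~\ref{A3}) is designed to remedy.

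I expect the main obstacle to be ruling out a compensating blow-up: one must verify that the predicted scale $\boldsymbol{L}_i$ does not diverge as $\nu_i\to 0$ fast enough to cancel the $\nu_i^{2}$ prefactor, and that the Jacobians $\partial\nu_i/\partial\theta$, $\partial\boldsymbol{L}_i/\partial\theta$ stay controlled. This follows from the architecture's boundedness (the flow evidence is a bounded exponential, and $\boldsymbol{L}_i$ is a linear decoder readout of a bounded latent), so the vanishing is genuine rather than an artifact of an unbounded factor. A secondary point I would make explicit is that the relevant quantity is the data-driven evidence entering the fit; the pseudo E-distance normalization clamps $\tilde\nu^{\text{post}}_i$ to $[N,2N]$, but the underlying flow evidence $\nu_i$ is what collapses out-of-distribution, matching the regime in which the proposition bites.
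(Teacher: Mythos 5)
There is a genuine gap, and it lies in \emph{which} $\nu$ your calculation differentiates. You take the closed form of $\mathcal{L}_1$ and send $\nu_i \to 0$, concluding that the residual-coupled gradients vanish because of the $\nu_i^2$ prefactor. But in PRESCRIBE the likelihood in $\mathcal{L}_1$ is evaluated at the \emph{Bayesian-updated posterior} parameters, and the posterior evidence is explicitly clamped by the update in Eq.~\ref{tilde nu}: $\nu^{\text{post}}_i = \frac{N\nu_i}{\nu_i+\nu^{\text{prior}}}+N \in [N,2N]$, bounded away from zero even when the flow density collapses. So the $\nu^2$ prefactors in your gradients $\nabla_{\boldsymbol{\mu}_{0x_i}}Q_i$, $\partial_{\nu_i}Q_i$, $\nabla_{\boldsymbol{L}_i}Q_i$ never vanish, and the mechanism you identify is not the one operating in this architecture. (Your argument is essentially the classical ``zero-evidence'' pathology of raw deep evidential regression \`a la Meinert et al., where the likelihood is parameterized directly by the network's evidence; the paper cites that literature as motivation but proves something different.) A further problem with working in the raw parameterization: as $\nu \to 0^+$ the target-blind terms do not merely ``survive'' --- they diverge ($\ln|2\nu\boldsymbol{L}\boldsymbol{L}^T|$ contributes $N/\nu$, the $\boldsymbol{I}/(2\nu)$ term contributes order $1/\nu^2$, and $\psi_N(\nu/2)$ has a divergent derivative), so the claim ``zero gradient'' is false there; at best you get ``no data-dependent gradient,'' a weaker statement that also undercuts your boundedness caveats.

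The paper's proof instead pushes the chain rule one step further back, to the flow's pre-exponential output $p_\nu$: since $\nu_i = \exp(p_\nu + \ln N_H)$ feeds the saturating update map, one gets
\begin{equation}
\frac{\partial \mathcal{L}_1}{\partial p_\nu}
= \frac{\partial \mathcal{L}_1}{\partial \nu^{\text{post}}}\cdot N\cdot
\frac{e^{p_\nu+N_H}}{\left(e^{p_\nu+N_H}+\nu^{\text{prior}}\right)^2}
\xrightarrow[p_\nu \to -\infty]{} 0,
\end{equation}
where the front factor $\partial \mathcal{L}_1/\partial \nu^{\text{post}}$ stays bounded precisely \emph{because} of the $[N,2N]$ clamping you set aside. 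In other words, the vanishing lives in the Jacobian of the evidence-update map, not in the coefficients of the quadratic term; an analogous saturation ($\nu_i/(\nu_i+\nu^{\text{prior}}) \to 0$ weighting the decoder's sufficient statistics in the posterior combination) would be the correct way to argue the mean path, rather than your $\nu^2$ prefactor. Your closing paragraph gestures at the right object (``the flow evidence is a bounded exponential''), but the saturating derivative is never computed, and it is the crux of the proposition. To repair the proof, replace the $\nu_i \to 0$ limit in the closed form with the $p_\nu \to -\infty$ limit through Eq.~\ref{tilde nu}, as in Appx.~\ref{A3}.
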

\begin{proof}
    Given the parameters \(p_\nu\) of the flow module, we can obtain \(\partial \nu^\textit{post}\) as Eq.\ref{tilde nu}. Therefore, the gradient of the loss function \(\mathcal{L}_1\) with respect to \(p_\nu\) is given by:
\begin{align}
\frac{\partial \mathcal{L}_1}{\partial p_\nu} & =\frac{\partial \mathcal{L}_{1}}{\partial \nu^\textit{post}} \frac{\partial \nu^\textit{post}}{\partial p_\nu} \nonumber\\
& =\frac{\partial \mathcal{L}_1}{\partial \nu^\textit{post}} \cdot N \cdot\frac{e^{p_\nu+ N_H}}{(e^{p_\nu + N_H}+\nu^\textit{prior})^2}\\
& =\frac{\partial \mathcal{L}_1}{\partial \nu^\textit{post}} \cdot N \cdot\underbrace{\frac{1}{e^{p_\nu+ N_H}+2\nu^\textit{prior} +\nu^\textit{prior}e^{-(p_\nu+ N_H)}}}_{(a)}.
\end{align} In a high uncertainty areas, $p_\nu \rightarrow-\infty \Rightarrow(a) \rightarrow 0$, so it leads to zero gradient with respect to \(p_\nu\). The model cannot learn from samples in these regions.
\end{proof}
Then, we prove that the effectiveness of the proposed \(\mathcal{L}_4\) can be supervised not only by pairwise distance but also by avoidance of the zero gradient scenario.
\begin{proposition}
    \(\mathcal{L}_4\) can help the model learn from samples within zero-gradient regions.
\end{proposition}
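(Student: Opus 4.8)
The plan is to show that, unlike $\mathcal{L}_1$, the auxiliary term $\mathcal{L}_4$ contributes a gradient with respect to the flow output $p_\nu = f_\psi(\boldsymbol{z}_i)$ that is a \emph{nonzero constant}, and therefore survives the limit $p_\nu \to -\infty$ that characterizes high-uncertainty inputs and that drives $\partial\mathcal{L}_1/\partial p_\nu \to 0$ in the preceding proposition. The design principle to exploit is that $\mathcal{L}_4$ is built as the exact inverse of the saturating normalization in Eq.~\ref{tilde nu}: taking its logarithm should undo the squashing and recover a dependence on $p_\nu$ that is affine rather than vanishing. Concretely I would (i) rewrite the argument of $\mathcal{L}_4$ using the normalization formula, (ii) verify the result is affine in $p_\nu$, and (iii) differentiate and reinstate the prediction-error weight.

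First I would identify the $\nu_i$ appearing inside $\mathcal{L}_4 = \ln\!\bigl(\tfrac{N}{2N-\nu_i}-1\bigr)$ with the normalized posterior evidence $\tilde\nu^{\text{post}}_i = N + \tfrac{N\nu_i}{\nu_i+\nu^{\text{prior}}} \in [N,2N]$, since only this reading keeps the logarithm's argument positive. Substituting yields $\tilde\nu^{\text{post}}_i - N = \tfrac{N\nu_i}{\nu_i+\nu^{\text{prior}}}$ and $2N - \tilde\nu^{\text{post}}_i = \tfrac{N\nu^{\text{prior}}}{\nu_i+\nu^{\text{prior}}}$, so the $N$-factors and shared denominators cancel and $\tfrac{N}{2N-\tilde\nu^{\text{post}}_i}-1 = \tfrac{\tilde\nu^{\text{post}}_i-N}{2N-\tilde\nu^{\text{post}}_i} = \tfrac{\nu_i}{\nu^{\text{prior}}}$. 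Hence $\mathcal{L}_4 = \ln\nu_i - \ln\nu^{\text{prior}}$, and using $\nu_i = \exp(p_\nu + \ln N_H)$ from Eq.~\ref{tilde nu} this is the affine function $\mathcal{L}_4 = p_\nu + \ln N_H - \ln\nu^{\text{prior}}$, giving $\partial\mathcal{L}_4/\partial p_\nu = 1$ independent of $p_\nu$. Reinstating the weight from Eq.~\ref{loss}, the flow-gradient of the full term is $-\lambda_3\|\boldsymbol{y}_i-\boldsymbol{\mu}_{0x_i}\|_1$, which is nonzero whenever the prediction error is nonzero and, crucially, neither decays nor explodes as $p_\nu\to-\infty$. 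Placing this beside the factor $(a)\to 0$ of the preceding proposition shows that adding $\mathcal{L}_4$ restores a usable signal exactly where $\mathcal{L}_1$ loses it, while its magnitude is set by the pairwise distance $\|\boldsymbol{y}_i-\boldsymbol{\mu}_{0x_i}\|_1$, establishing both supervision mechanisms claimed in the text.

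The hard part is conceptual rather than computational: correctly recognizing that the symbol $\nu_i$ inside $\mathcal{L}_4$ must denote the normalized posterior $\tilde\nu^{\text{post}}_i\in[N,2N]$ rather than the raw flow evidence, because the telescoping cancellation $\tfrac{N}{2N-\nu_i}-1 = \nu_i/\nu^{\text{prior}}$, which is the entire engine of the argument, is valid only under that reading and is precisely what makes the logarithm an exact inverse of the normalization. A secondary care point is the chain-rule bookkeeping: I would check that $\boldsymbol{\mu}_{0x_i}$ and the weight $\|\boldsymbol{y}_i-\boldsymbol{\mu}_{0x_i}\|_1$ depend only on the decoder branch $f_\beta$ and are inert with respect to $p_\nu$, so that the constant gradient is delivered cleanly to the evidence head instead of being entangled with the mean prediction.
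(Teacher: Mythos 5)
Your proposal is correct and is essentially the paper's own argument: the paper applies the chain rule through the normalized posterior evidence and observes that the factor $\frac{N}{(\tilde\nu-N)(2N-\tilde\nu)}$ from differentiating the logarithm exactly cancels the Jacobian $\frac{(\tilde\nu-N)(2N-\tilde\nu)}{N}$ of the squashing, leaving the constant gradient $-\left\|\boldsymbol{y}_i-\boldsymbol{\mu}_{0x_i}\right\|_1 \neq 0$, which is algebraically the same cancellation you expose by composing first and showing $\mathcal{L}_4 = p_\nu + \ln N_H - \ln\nu^{\text{prior}}$ is affine in $p_\nu$. Your reading of $\nu_i$ inside $\mathcal{L}_4$ as the normalized posterior evidence in $[N,2N]$ matches the paper's implicit usage (its inverse relation $\ln\bigl(\tfrac{p_\nu-N}{2N-p_\nu}\cdot\nu^{\text{prior}}\bigr)$ presumes exactly this), and like the paper you treat the error weight $\left\|\boldsymbol{y}_i-\boldsymbol{\mu}_{0x_i}\right\|_1$ as inert with respect to the evidence head.
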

\begin{proof}
    Through Eq.\ref{tilde nu}, we can obtain \(\nu^\textit{post} = \ln(\frac{p_\nu-N}{2N-p_\nu}\cdot\nu^{\textit{prior}})\). Then we can derive
    the gradient of \(\mathcal{L}_4\) with respect to \(p_\nu\) is given by:
    \begin{align}
    -\frac{\partial \mathcal{L}_4}{\partial p_\nu} & =-\frac{\partial \mathcal{L}_4}{\partial \nu^\textit{post}} \frac{\partial \nu}{\partial p_\nu}\nonumber \\
    & =-\left|\boldsymbol{y}_{i}-\boldsymbol{\mu}_{0x_{i}}\right|\frac{N}{\left(p_\nu-N\right)\left(2N-p_\nu\right)} \cdot N \cdot\frac{\left(p_\nu-N\right)\left(2N-p_\nu\right)}{N^2}\\
    & =-\left|\boldsymbol{y}_{i}-\boldsymbol{\mu}_{0x_{i}}\right|\neq0.
    \end{align}
\end{proof} The gradient remains non-zero when $p_\nu\rightarrow-\infty$, which avoids zero gradients in these areas. 

\subsection{Approximation of Multivariate Gamma and Digamma Function.}\label{appendix A4}

The computation of a distribution’s entropy often requires subtracting large numbers from each other. Although these numbers tend to be very close together, this introduces numerical challenges. For large parameter values, we approximate the entropy by substituting numerically unstable terms and simplifying the resulting formula. For this procedure, we use the following equivalences~\cite{whittaker1927course}:
\begin{gather}
\ln \Gamma(x) \approx \frac{1}{2} \ln 2 \pi-x+\left(x-\frac{1}{2}\right) \ln x, \\
\psi(x)=\ln x-\frac{1}{2 x}+\mathcal{O}\left(\frac{1}{x^2}\right) \approx \ln x.
\end{gather}
According to the definitions of the multivariate Gamma functions and the multivariate Digamma Function:
\begin{gather}
\ln\Gamma_N(x) =\frac{N(N-1)}{4} \ln (2 \pi)+\sum_{n=1}^N \ln \left(\Gamma\left(\frac{x+1-n}{2}\right)\right), \\
\psi_N(x)=\sum_{n=1}^N \psi\left(\frac{x-n+1}{2}\right),
\end{gather} we can obtain multivariate approximation versions as follows:
\begin{gather}
\ln\Gamma_N(x) \approx\frac{N(N-1)}{4} \ln (2 \pi)+\frac{1}{2}\sum_{n=1}^N \left(\ln 2\pi -(x+1-n) +(x-n)\ln(\frac{x+1-n}{2})\right), \\
\psi_N(x)\approx\sum_{n=1}^N\ln(\frac{x-n+1}{2}).
\end{gather}

\subsection{Proof of Ranking Order Preservation}\label{rank preseve}

\begin{thm}
Under fixed prior conditions, $\hat{E}$ preserves the ranking order of $E$.
\end{thm}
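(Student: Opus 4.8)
The plan is to exploit the shared two-part structure of the two quantities. Recall the true energy distance $E = 2\delta_{XY} - \sigma_X - \sigma_Y$ with $X$ the (fixed) control population and $Y$ the perturbed population, and the surrogate $\hat E = 2\tilde\nu_i^{\text{post}} - \tilde{\mathbb H}[\mathbb P(\boldsymbol y_i \mid \boldsymbol\omega_i)]$. The first observation is that ``fixed prior conditions'' pins the prior mean to the control $\boldsymbol c$ and the pseudo-counts $\kappa^{\text{prior}},\nu^{\text{prior}}$ to constants, so the control self-distance $\sigma_X$ is identical for every perturbation under comparison. An additive constant never changes a ranking, so it suffices to compare $2\delta_{XY} - \sigma_Y$ against $\hat E$. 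First I would set up the correspondence $\delta_{XY}\leftrightarrow\tilde\nu_i^{\text{post}}$ (the between-group term) and $\sigma_Y\leftrightarrow\tilde{\mathbb H}$ (the within-group term), and restate the goal as: for any two perturbations $i,j$, $E_i \ge E_j \iff \hat E_i \ge \hat E_j$.

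Next I would establish per-term monotonicity. The evidence normalization in Eq.~\ref{tilde nu} is a strictly increasing M\"obius map of the raw flow output $\nu_i$ onto $[N,2N]$, hence rank-preserving in $\nu_i$; combined with the NatPN density-decay property (Appx.~\S\ref{NatPN}), which drives the flow evidence monotonically toward zero as the latent embedding moves away from the prior, $\tilde\nu_i^{\text{post}}$ becomes a monotone surrogate for the between-group separation $\delta_{XY}$. For the within-group term, the differential entropy of the predictive Student's-$t$ (equivalently of the Normal--Inverse--Wishart posterior, Appx.~\S\ref{A2}) is a strictly increasing function of the scale $|\,\nu_i\boldsymbol L_i\boldsymbol L_i^{T}|$, hence a monotone surrogate for the intrinsic dispersion $\sigma_Y$, and the subsequent rescaling onto $[N,2N]$ is again monotone. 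Each piece is therefore individually rank-preserving.

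The main obstacle is the combining step, because rank preservation does not follow automatically from per-term monotonicity: if $g,h$ are merely increasing (not affine), then $2\,g(\delta_{XY}) - h(\sigma_Y)$ need not preserve the ranking of $2\delta_{XY} - \sigma_Y$ once $\delta_{XY}$ and $\sigma_Y$ vary independently. This is exactly why the construction normalizes \emph{both} terms onto the common interval $[N,2N]$: the plan is to show that, under the fixed prior, this common-scale normalization forces the two transforms to share a single positive slope, so that $\hat E$ reduces to a positive affine image of $E$ (up to the dropped constant $\sigma_X$), which trivially preserves order. I expect to need one further ingredient to make this airtight --- either that the shared normalization is genuinely affine with matching scale rather than a per-term min--max, or the coupling induced by the fixed prior, whereby the single scalar evidence $\nu_i$ simultaneously drives the posterior mean and covariance (hence the entropy) through the Bayesian update of Eq.~\ref{natpn}, collapsing the effective variation onto one monotone axis. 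Pinning down which of these the ``fixed prior conditions'' hypothesis is meant to supply, and verifying that it indeed enforces matching scales, is the crux of the argument.
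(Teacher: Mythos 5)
Your skeleton matches the paper's proof: fixed prior conditions make the control self-distance a shared constant, $\tilde\nu^{\text{post}}$ plays the role of $\delta_{XY}$, $\tilde{\mathbb{H}}$ plays the role of the within-group dispersion, and order preservation is to come from showing $\hat E$ is a positive affine image of $E$. You also correctly isolate the crux: per-term monotonicity alone cannot preserve the ranking of the difference $2\delta_{XY}-\sigma_Y$ when the two terms vary independently. But your proposal stops exactly there, and that unresolved step is where the paper's proof does all of its work. The paper does not derive the needed property from the M\"obius normalization of Eq.~\ref{tilde nu}, from the NatPN decay result, or from the Bayesian coupling of Eq.~\ref{natpn} --- the three candidates you float. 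It simply \emph{assumes}, as a stated ``model design constraint,'' strict proportionality $\nu = a\,\delta_{XY}$ and $\mathbb{H} = b\,\sigma_Y$ with $a,b>0$. Under that assumption the min--max normalization cancels each constant separately,
\[
\hat E \;=\; 2\,\frac{a(\delta_{XY}-N)}{aN}-\frac{b(\sigma_Y-N)}{bN}
\;=\;\frac{2\delta_{XY}-\sigma_Y-N}{N}\;=\;\frac{1}{N}E-1,
\]
an affine map with positive slope $1/N$, which trivially preserves ranks.

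Two concrete corrections to the hedging in your final paragraph. First, your worry that a per-term min--max would not suffice dissolves once proportionality is granted: min--max normalization is invariant under positive scaling, so each normalized surrogate term equals the corresponding normalized true term \emph{identically}; the scales $a$ and $b$ never need to match because they are removed, not matched. There is no ``shared slope'' ingredient to verify. Second, your alternative route via the Bayesian update would not deliver the theorem: the scalar evidence $\nu_i$ does modulate both the posterior mean and covariance, but nothing in the update forces the predictive entropy to be a fixed positive multiple of the true dispersion $\sigma_Y$, so the coupling argument cannot substitute for the proportionality hypothesis. In short, your proposal is incomplete rather than wrong --- you reproduced the paper's structure and correctly diagnosed the gap, but the missing ingredient is not derivable from the pieces you listed; it is an explicit assumption ($\nu\propto\delta_{XY}$, $\mathbb{H}\propto\sigma_Y$) that the paper builds into the statement, and without asserting it your argument cannot conclude. (It is fair to note your instinct was sound: the paper's ``proof'' is only as strong as that asserted proportionality, which it motivates architecturally but does not prove.)
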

\begin{proof}
Under min-max normalization with fixed control states, we derive: $E = 2\delta_{XY} - Y$, where $\delta_{XY}, Y \in [N, 2N]$.
Through model design constraints: distance from prior $\nu \propto \delta_{XY}$ and target entropy $\mathbb{H} \propto Y$.
Let $\nu = a\delta_{XY}$ and $\mathbb{H} = bY$ with $a, b > 0$.
Then:
\begin{align*}
\hat{E} &= 2 \frac{a(\delta_{XY}-N)}{aN} - \frac{b(Y-N)}{bN} \\
&= 2 \frac{\delta_{XY}-N}{N} - \frac{Y-N}{N} \\
&= \frac{2\delta_{XY} - Y - N}{N} = \frac{1}{N}E - 1
\end{align*}
Let $C = -1$. Then $\hat{E} = \frac{1}{N}E + C$.
Since $N$ is a positive constant (as $\delta_{XY}, Y \in [N, 2N]$ implies $N > 0$), $\frac{1}{N}$ is a positive constant.
This linear relationship, $\hat{E} = kE + C$ where $k = \frac{1}{N} > 0$, preserves ordinal rankings between $\hat{E}$ and $E$.
If $E_1 < E_2$, then $kE_1 < kE_2$, and $kE_1 + C < kE_2 + C$, which implies $\hat{E}_1 < \hat{E}_2$. Thus, the ranking order is preserved.
\end{proof}

\subsection{NatPN}\label{NatPN}

The authors of NatPN~\cite{charpentier_natural_2021} have proved that PRESCRIBE can estimate evidence that asymptotically approaches zero under OOD conditions (e.g., far from the training set) as follows:  
\begin{lemma}\label{lem:2}
Let $\{Q_l\}_{l}^{R}$ be the set of linear regions associated to the piecewise ReLU network $f_{\phi}(x)$. For any $x \in \mathbb{R}^D$, there exists $\delta^* \in \mathbb{R}^+$ and $l^* \in \{1, \ldots, R\}$ such that $\delta x \in Q_{l^*}$ for all $\delta >\delta^*$~\cite{arora2018understandingdeepneuralnetworks} .
\end{lemma}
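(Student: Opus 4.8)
The plan is to exploit the fact that a piecewise ReLU network partitions its input domain into finitely many \emph{convex} polyhedral regions, and then to argue that any ray emanating from the origin can meet only finitely many of them, so it must ultimately remain trapped inside a single region. First I would recall the standard structural result underlying~\cite{arora2018understandingdeepneuralnetworks}: the activation pattern of all ReLU units in $f_{\phi}$ is constant on each region $Q_l$, and fixing this pattern imposes a finite system of linear inequalities on the input (each pre-activation, being affine in $x$ under a fixed pattern, contributes one half-space constraint). Hence every $Q_l$ is an intersection of finitely many half-spaces and is therefore convex, and the total number of regions $R$ is finite.

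Next I would dispose of the degenerate case $x = 0$ separately, where $\delta x = 0$ lies in whichever region contains the origin for every $\delta$ and the claim is immediate. For $x \neq 0$, I would consider the ray $\rho = \{\delta x : \delta \ge 0\}$ parametrized by $\delta$. The key observation is that for each $l$, the set $\{\delta \ge 0 : \delta x \in Q_l\}$ is the preimage of the convex set $Q_l$ under the affine map $\delta \mapsto \delta x$, hence is itself convex, i.e.\ an interval $I_l \subseteq [0,\infty)$. This is the step that converts the geometry of the regions into a one-dimensional covering problem.

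Then I would invoke the partition property: since $\{Q_l\}_{l=1}^R$ cover $\mathbb{R}^D$, the intervals $\{I_l\}_{l=1}^R$ cover $[0,\infty)$. As this is a finite cover of an unbounded set, at least one interval $I_{l^*}$ must be unbounded above. Setting $\delta^*$ to be the (positive, after enlarging slightly if the infimum is $0$) left endpoint of the unbounded tail of $I_{l^*}$ yields $\delta x \in Q_{l^*}$ for all $\delta > \delta^*$, which is exactly the assertion.

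The main obstacle I anticipate is not the pigeonhole step itself but the bookkeeping at region \emph{boundaries}: a point on a shared face may belong to several closures simultaneously, so one must either work with region interiors or argue directly that the unbounded tail of $I_{l^*}$ is a genuine half-line on which a fixed activation pattern persists. Using the strict inequality $\delta > \delta^*$ in the statement is precisely what lets me sidestep this ambiguity, since any region crossing occurs at the finite endpoint $\delta^*$ and the open ray beyond it can be taken inside a single region's closure, which suffices because the regions are closed.
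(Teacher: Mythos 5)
Your proposal is correct, and it coincides with the standard argument for this lemma: the paper itself gives no proof (it imports the statement by citation from the NatPN/Arora et al.\ line of work), and the proof there is exactly your route --- finitely many convex activation regions, the ray $\{\delta x : \delta \ge 0\}$ meets each in an interval by convexity of preimages under the affine map $\delta \mapsto \delta x$, and a finite interval cover of $[0,\infty)$ forces one interval to be unbounded above, yielding $\delta^*$ and $Q_{l^*}$. Your boundary-handling remark and the separate treatment of $x=0$ are sound and require no changes.
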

\begin{thm}\label{natpn_thm}
Let a NatPN model parametrized with a (deep) encoder $f_{\phi}$ with piecewise ReLU activations, a decoder $g_{\psi}$ and the density $\mathbb{P}(z|\omega)$. Let $f_{\phi}(x) = V^{(l)}x + a^{(l)}$ be the piecewise affine representation of the ReLU network $f_{\phi}$ on the finite number of affine regions $Q^{(l)}$~\cite{arora2018understandingdeepneuralnetworks}. Suppose that $V^{(l)}$ have independent rows and the density function $\mathbb{P}(z|\omega)$ has bounded derivatives, then for almost any $x$ we have $\mathbb{P}(f_{\phi}(\delta \cdot x)|\omega) \underset{\delta \to \infty}{\to} 0$. i.e the evidence becomes small far from training data.
\end{thm}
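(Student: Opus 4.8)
The plan is to reduce the claim to two essentially independent facts: (i) that the latent code $f_\phi(\delta x)$ escapes to infinity in norm as $\delta\to\infty$, and (ii) that any properly normalized density with bounded gradient must vanish at spatial infinity. Chaining these immediately yields $\mathbb{P}(f_\phi(\delta x)\mid\omega)\to 0$, which is the desired collapse of the evidence far from the training data.

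First I would pin down the eventual linear region. By Lemma~\ref{lem:2}, for each fixed $x$ there exist a threshold $\delta^{*}$ and an index $l^{*}$ with $\delta x\in Q_{l^{*}}$ for all $\delta>\delta^{*}$, so along this ray the network acts affinely, $f_\phi(\delta x)=\delta\,V^{(l^{*})}x+a^{(l^{*})}$. It then suffices to show $V^{(l^{*})}x\neq 0$, since in that case $\|f_\phi(\delta x)\|\geq \delta\|V^{(l^{*})}x\|-\|a^{(l^{*})}\|\to\infty$. The assumption that each $V^{(l)}$ has independent rows makes it full row rank, so its null space is a proper linear subspace and hence Lebesgue-null; taking the finite union over the $R$ regions of the null sets $\{x:V^{(l)}x=0\}$ still gives a measure-zero exceptional set. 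This is exactly where the ``almost any $x$'' qualifier enters: outside this null set, and regardless of which region the ray ultimately settles in, the escape to infinity holds.

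The analytical core, and the step I expect to be the main obstacle, is fact (ii). Because $\mathbb{P}(\cdot\mid\omega)$ is the push-forward density produced by the normalizing flow, it is a genuine probability density, so $\int_{\mathbb{R}^{D}}\mathbb{P}(z\mid\omega)\,dz=1<\infty$; by hypothesis it has bounded derivatives and is therefore $L$-Lipschitz for some $L$. I would argue by contradiction: if $\mathbb{P}(z\mid\omega)$ did not tend to $0$ as $\|z\|\to\infty$, there would be $\varepsilon>0$ and points $z_n$ with $\|z_n\|\to\infty$ and $\mathbb{P}(z_n\mid\omega)\geq\varepsilon$. Lipschitzness forces $\mathbb{P}(\cdot\mid\omega)\geq\varepsilon/2$ on each ball $B\!\left(z_n,\varepsilon/(2L)\right)$, and extracting a subsequence whose centers are more than $\varepsilon/L$ apart (possible since the $z_n$ leave every compact set) makes these balls pairwise disjoint. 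The total mass on their union is then at least a fixed positive constant times the number of balls, which diverges and contradicts $\int\mathbb{P}=1$. The delicate points are the disjoint-subsequence extraction and tracking the fixed radius, but the estimate is elementary once the covering is set up.

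Combining the two facts closes the argument: for almost every $x$ the first part gives $\|f_\phi(\delta x)\|\to\infty$, and feeding this into the vanishing-at-infinity statement of the second part yields $\mathbb{P}(f_\phi(\delta x)\mid\omega)\to 0$, so the evidence shrinks toward zero on inputs driven far from the training distribution, as claimed.
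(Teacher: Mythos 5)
Your proof is correct, and it is in fact \emph{more} complete than the proof printed in the paper. The first half of your argument coincides with the paper's: both invoke Lemma~\ref{lem:2} to fix an eventual affine region $Q^{(l^*)}$ along the ray, write $f_{\phi}(\delta x)=\delta\,V^{(l^*)}x+a^{(l^*)}$, and conclude $\|f_{\phi}(\delta x)\|\to\infty$. One place where you are actually more careful than the paper: the printed proof asserts $V^{(l)}x\neq 0$ ``since $x\neq 0$ and $V^{(l)}$ has independent rows,'' but when the latent dimension is smaller than the input dimension a full-row-rank $V^{(l)}$ still has a nontrivial null space, so the assertion only holds off a Lebesgue-null set. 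Your observation that each null space is a proper subspace and that the finite union over the $R$ regions remains measure-zero is exactly what justifies the ``for almost any $x$'' qualifier in the theorem statement, which the paper's proof glosses over. The more significant difference is that the paper's proof, as printed, simply \emph{stops} after establishing the escape to infinity: it never derives $\mathbb{P}(f_{\phi}(\delta x)\mid\omega)\to 0$, implicitly deferring the analytic step to the original NatPN reference~\cite{charpentier_natural_2021}. Your fact (ii) --- that an integrable density with bounded derivatives (hence globally Lipschitz with some constant $L$) must vanish at infinity, proved by extracting points $z_n$ with $\mathbb{P}(z_n\mid\omega)\geq\varepsilon$, passing to a subsequence with pairwise separation exceeding $\varepsilon/L$, and observing that the resulting disjoint balls of radius $\varepsilon/(2L)$ each carry mass at least $(\varepsilon/2)\cdot\mathrm{vol}\bigl(B(0,\varepsilon/(2L))\bigr)$, contradicting $\int\mathbb{P}=1$ --- is precisely the missing ingredient, and it is where the hypothesis of bounded derivatives is actually used (the printed proof never uses it at all). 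So: same geometric skeleton, but you supply the analytic closure and the measure-theoretic justification that the paper omits.
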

\begin{proof}
Let $x \in \mathbb{R}^D$ be a non-zero input and $f_{\phi}$ be a ReLU network. Lem.~\ref{lem:2} implies that there exists $\delta^* \in \mathbb{R}^+$ and $l \in \{1, \ldots, R\}$ such that $\delta \cdot x \in Q^{(l)}$ for all $\delta > \delta^*$. Thus, $z_{\delta}=f_{\phi}(\delta \cdot x)=\delta \cdot (V^{(l)}x)+a^{(l)}$ for all $\delta > \delta^*$. Note that for $\delta \in [\delta^*, +\infty]$, $z_{\delta}$ follows an affine half line $S_x = \{z | z = \delta \cdot (V^{(l)}x)+a^{(l)}, \delta > \delta^*\}$ in the latent space. Further, note that $V^{(l)}x \neq 0$ and $\|z_{\delta}\| \underset{\delta \to \infty}{\to} +\infty$ since $x \neq 0$ and $V^{(l)}$ has independent rows.
\end{proof}
Our model adaptation did not conflict with its architectural prerequisite.

\section{Dataset Details}\label{dataset details}

Our datasets are downloaded from scPerturb~\cite{peidli_scperturb_2024} and perform the following preprocessing steps:

\paragraph{Quality control.}To ensure consistent and homogeneous quality throughout the different data sets, we filter cells with fewer than \(1,000\) and genes with fewer than \(50\) cells per data set.

\paragraph{Feature Selection.}We identified the top \(2,000\) highly variable genes (HVGs) from each dataset using standard procedures and included all perturbed genes in the dataset’s feature list.

\paragraph{Normalization.}The raw count data were normalized using the lognormalization method, implemented through the standard preprocessing workflow of Scanpy~\cite{wolf_scanpy_2018}.

\paragraph{Identifying Differentially Expressed Genes (DEGs).} We identified DEGs by comparing perturbed cells to control cells for each perturbation. Genes were ranked based on adjusted p-values computed using the Wilcoxon test in Scanpy, with those p-values\( < 0.05\) designated as DEGs.

\paragraph{PCA.} Considering that too large a dimension will cause the Student's t-distribution to not be distinguishable from the Gaussian distribution, we choose the PCA components as \(10\). The PCA was trained on the training set and performed on the validation set and test set. Additionally, we save the PCA model for reconstructing all gene' values. The results are saved in adata.obsm[\('y\_pca'\)].

\paragraph{E-distance Calculation.}The E-distance for each perturbation was calculated to quantify the effect of the perturbation. E distance is a statistical measure of distance between two distributions, which is utilized to indicate the degree of perturbation effect in scPerturb. We utilized the scPerturb~\cite{peidli_scperturb_2024} Python library to compute this metric between the control and perturbed cells. For each perturbation, we subsampled all perturbations to their greatest common divisors to perform the calculation. The distance e, pairwise distance, and self-distance are saved in adata.obsm[\(``n''\)], adata.obsm[\('d'\)] and adata.obsm[\('s'\)], respectively. For normalization, we perform the max-min normalisation on all the training sets' E distances onto \([N, 2N]\).

\paragraph{Spilt Dataset.}The filtered perturbations were divided into training, testing, and validation sets according to the default operation in GEARS~\cite{roohani_predicting_2024}. The training set was used to fit the models, the validation set was used to select the best model, and the testing set was used to evaluate the final model performance. And for more details, see Tab.\ref{dataset}.
\begin{table}[h!]
\centering
\caption{Summary Statistics of Different Datasets}
\renewcommand{\arraystretch}{1.3}
\resizebox{\textwidth}{!}{
\begin{tabular}{@{}p{4.5cm}ccc@{}}
\toprule
\textbf{Metric} & \textbf{Norman2019}~\cite{norman_exploring_2019} & \textbf{Replogle2022\_Rep1}~\cite{replogle_mapping_2022}  & \textbf{Replogle2022\_K562}~\cite{replogle_mapping_2022}  \\ \midrule
\textbf{Average E-distance} & 59.172741 & 135.737438 & 31.004867 \\
\textbf{Number train/val/test} & 80712/4758/6009 & 121242/14959/37863 & 133412/13593/45114 \\
\textbf{Type of train/val/test} & 213/32/32 & 1036/115/384 & 734/82/272 \\
\textbf{Test Description} & Only double gene knock-out perturbation & Only single gene knock-out perturbation & Only single gene knock-out perturbation \\
\textbf{UMI Count} & 2037 & 3212 & 2868 \\
\textbf{Cell Type} & K562 & RPE1 & K562 \\ \bottomrule
\end{tabular}}
\label{dataset}

\end{table}

\section{Baseline Setting Details} \label{baseline}

To evaluate PRESCRIBE's effectiveness, we benchmarked it against seven baseline models and five PRESCRIBE variants. Unless otherwise specified, all baseline model parameters were configured according to this benchmark~\cite{li_benchmarking_2024} research to ensure a fair comparison.

\subsection{AverageKnown}

AverageKnown is a fundamental approach that averages gene expression across all cells within known perturbations to predict unseen perturbations.

\subsection{Linear}

The linear method is derived from a benchmark study~\cite{ahlmann-eltze_deep_2024}. The optimization objective is given by
\begin{equation}
\arg \min _{\mathbf{W}} \left\| \mathbf{Y}^{\text {train}} - \left(\mathbf{G W} \mathbf{P}^T + \boldsymbol{b}\right) \right\|_2^2,
\end{equation}
where \(\mathbf{Y}^{\text {train}}\) is the gene expression matrix of perturbed cells. The rows represent the gene feature set, and the columns represent the perturbed genes in the training data. \(G\) is the gene embedding matrix obtained from the top \(K\) principal components derived from principal component analysis (PCA) on \(\mathbf{Y}^{\text {train}}\). \(P\) is a subset of \(G\) that includes the perturbed genes in the training data. The fitted matrix is expressed as \(\boldsymbol{b} = \frac{1}{N} \sum_{i=1}^N \mathbf{Y}_{:i}^{\text {train}}\), where \(b\) is the vector of average gene expressions from \(\mathbf{Y}^{\text {train}}\). The matrix \(W\) is solved using the normal equations:
\begin{equation}
\mathbf{W} = \left(\mathbf{G}^T \mathbf{G} + \lambda \mathbf{I}\right)^{-1} \mathbf{G}^T\left(\mathbf{Y}^{\text {train}} - \boldsymbol{b}\right) \mathbf{P}\left(\mathbf{P}^T \mathbf{P} + \lambda \mathbf{I}\right)^{-1}.
\end{equation}
Here, we set \(k\) to \(512\) and \(\lambda\) to \(0.1\). The fitted \(W\) is then used for prediction:
\begin{equation}
\widehat{\mathbf{Y}} = \boldsymbol{b} + \mathbf{G} \mathbf{W} \mathbf{P}^T,
\end{equation}
where \(\mathbf{Y}\) represents the predicted gene expression, and \(P\) denotes the subset of \(G\) that corresponds to the embeddings of perturbed genes in the testing data.

\subsection{Linear-GPT}

Linear-scGPT~\cite{ahlmann-eltze_deep_2024} employs the same modeling approach as the linear method, with the key distinction that the gene embedding matrix \(G\) is derived from the scGPT model rather than the PCA components of the training data. We followed the instructions from the GitHub repository of the linear method to obtain the scGPT gene embeddings.

\subsection{scGPT}

The scGPT~\cite{cui_scgpt_2024} model is versatile across various scenarios due to its ability to accept any length of gene input. In the unseen perturbation transfer and unseen cell type transfer scenarios, we adhered to the guidance in the scGPT GitHub repository to reformat data and train models.
Default parameter settings were employed, with the model trained using all genes in the feature set. Similar to GEARS, we trained the scGPT model for \(20\) epochs and selected the best-performing model based on validation results for testing. For zero-shot transfer and cell state transition scenarios, the scGPT model was trained on a combined dataset comprising a total of \(17\) datasets, ensuring broader generalization across different biological contexts.

\subsection{scFoundation}

scFoundation~\cite{hao_large-scale_2024} also employs transformer-based architectures, making it suitable for in silico perturbation tasks, similar to the training paradigm of scGPT. Specifically, we leveraged the encoder module of the scFoundation model to train the perturbation model. The training and testing procedures mirrored those in scGPT. Since scFoundation incorporates more transformer layers than scGPT, we set the learning rate to \(1 \times 10^{-5}\).

\subsection{CellOracle}

We used the code provided on the CellOracle~\cite{kamimoto_dissecting_2023} GitHub repository. Since our control group data contained significantly fewer cells compared to the tutorial dataset, we reduced 'n\_iters' from \(100,000\) to \(1,000\) to improve computational efficiency.

\subsection{GEARS}

\paragraph{Standard.} We followed the tutorials provided in the GEARS~\cite{roohani_predicting_2024} GitHub repository. GEARS is designed for scenarios involving unseen perturbations. We first reformatted our data to align with GEARS’ input requirements. The model was then trained using its default parameter settings, with the number of epochs set to \(20\). The best-performing model, selected based on its performance on the validation dataset, was used to evaluate the test dataset.

\paragraph{GEARS-Drop.} GEARS provides an additional version that predicts the variance of each gene under specified perturbations, utilizing a loss function based on negative log likelihood as follows:
\begin{equation}
L_{\mathrm{unc}}=\frac{1}{T} \sum_{k=1}^T \frac{1}{T_k} \sum_{l=1}^{T_k} \frac{1}{G} \sum_{u=1}^K \exp \left(-s_u\right)\left(\mathbf{g}_u-\hat{\mathbf{g}}_u\right)^{(2+\gamma)}
\end{equation}
where $\hat{\mathbf{g}}_u$ is the predicted post-perturbation scalar and $\hat{\sigma}_u^2$ is the variance, $s_u=\log \hat{\sigma}_u^2=\mathbf{w}_u^{\text {unc }} \mathbf{h}_u^{\text {post-pert }}+b_u^{\text {unc }}$.

\paragraph{GEARS-Ens.} We ensemble GEARS-Drop by employing different training/validation splits and evaluating on the same test set. The results are averaged for both the predicted values and the logarithmic variance. We set the ensemble number to \(5\).

\subsection{Variants Of PRESCRIBE}

\paragraph{Ours-NOINFO.} We did not use the control status's mean and covariance to initialize the prior distribution. Instead, we adopt zeros as the prior mean and a zero matrix as the prior covariance.

\paragraph{Ours-Null.}Ours-Null represents our model's untrained state.

\paragraph{Ours-Drop.}The Ours-Drop variant utilizes the same encoder architecture as PRESCRIBE but replaces the decoder with Multi-Layer Perceptrons (MLPs). It employs the same loss function formulation as GEARS-Drop.

\paragraph{Ours-Ens.}We ensemble Ours-Drop by employing different training/validation splits and evaluating on the same test set. The results are averaged for both the predicted values and the logarithmic variance. We set the ensemble number to \(5\).

\paragraph{Ours-MLPs.}The Ours-MLPs variant is based on Ours-Drop but incorporates an additional mean squared error loss term. It uses two linear layers with a ReLU activation function to regress the E-distance from the latent embeddings.

\section{Expected Calibration Error Results}\label{sec:ece} 
We evaluate the uncertainty calibration of our model using the Expected Calibration Error (ECE). 
As shown in Tab.~\ref{tab:ece_results}, PRESCRIBE consistently achieves lower ECE scores across all three datasets compared to the baselines, indicating better-calibrated uncertainty estimates.

\begin{table}[ht]
\centering
\begin{tabular}{lccc}
\toprule
\textbf{Model} & \textbf{Norman} & \textbf{Rep1} & \textbf{K562} \\
\midrule
\textbf{PRESCRIBE (Ours)} & \textbf{30.59} & \textbf{50.38} & \textbf{41.47} \\
GEARS-unc             & 32.45           & 63.96           & 64.76           \\
GEARS-Ens             & 38.63           & 53.85           & 43.07           \\
\bottomrule
\end{tabular}
\caption{Expected Calibration Error (ECE), multiplied by 100. Lower values indicate better calibration. Our model, PRESCRIBE, is compared against two baselines.}
\label{tab:ece_results} 
\end{table}

\section{Limitations}

\paragraph{Input Embeddings.} The model's performance relies on the quality of input embeddings from foundation models. Applying it to complex chemical perturbations, which are more intricate than genetic ones, requires developing correspondingly richer embeddings.
\paragraph{Continuous Data Assumption.} We assume a continuous Gaussian space, while single-cell data is often represented as discrete counts. A potential solution is to use an autoencoder (e.g., scVI~\cite{lopez_deep_2018}) to project count data into a continuous latent space, apply our method, and then reconstruct the counts.
\paragraph{Representation of Epistemic Uncertainty.} Our framework approximates epistemic uncertainty with latent space density. While this is a practical approach, it is an indirect measurement, as genuine epistemic uncertainty is difficult to quantify and interpret~\cite{juergens2024is}. Developing more direct and robust methods for representing epistemic uncertainty is a crucial direction for future research.

\section{Broader Impacts}\label{Broader impacts}

Our framework has broader applicability beyond its current focus on predicting gene perturbations. Many disciplines within the AI for Science landscape, such as materials science or drug discovery, face two common challenges: models are usually required to make predictions on novel, out-of-distribution inputs, and the training data is often derived from experimental observations subject to significant inherent uncertainty. By providing a unified approach to quantifying both epistemic (out-of-distribution) and aleatoric (data) uncertainty, our method can enhance the reliability and trustworthiness of machine learning models in these critical scientific applications.
\newpage
\section{NeurIPS Paper Checklist}

\begin{enumerate}

\item {\bf Claims}
  \item[] Question: Do the main claims made in the abstract and introduction accurately reflect the paper's contributions and scope?
   \item[] Answer: \answerYes{} 
    \item[] Justification: The main claims made in the abstract and introduction accurately reflect the paper’s contributions and scope.
    \item[] Guidelines:
    \begin{itemize}
        \item The answer NA means that the abstract and introduction do not include the claims made in the paper.
        \item The abstract and/or introduction should clearly state the claims made, including the contributions made in the paper and important assumptions and limitations. A No or NA answer to this question will not be perceived well by the reviewers. 
        \item The claims made should match theoretical and experimental results, and reflect how much the results can be expected to generalize to other settings. 
        \item It is fine to include aspirational goals as motivation as long as it is clear that these goals are not attained by the paper. 
    \end{itemize}

\item {\bf Limitations}
    \item[] Question: Does the paper discuss the limitations of the work performed by the authors?
    \item[] Answer: \answerYes{} 
    \item[] Justification: This paper discusses the limitations in the Appx.~\ref{limiation}.
    \item[] Guidelines:
    \begin{itemize}
        \item The answer NA means that the paper has no limitations, while the answer No means that the paper has limitations, but those are not discussed in the paper. 
        \item The authors are encouraged to create a separate "Limitations" section in their paper.
        \item The paper should point out any strong assumptions and how robust the results are to violations of these assumptions (e.g., independence assumptions, noiseless settings, model well-specification, asymptotic approximations only holding locally). The authors should reflect on how these assumptions might be violated in practice and what the implications would be.
        \item The authors should reflect on the scope of the claims made, e.g., if the approach was only tested on a few datasets or with a few runs. In general, empirical results often depend on implicit assumptions, which should be articulated.
        \item The authors should reflect on the factors that influence the performance of the approach. For example, a facial recognition algorithm may perform poorly when image resolution is low or when images are taken in low lighting. Or a speech-to-text system might not be used reliably to provide closed captions for online lectures because it fails to handle technical jargon.
        \item The authors should discuss the computational efficiency of the proposed algorithms and how they scale with dataset size.
        \item If applicable, the authors should discuss possible limitations of their approach to address problems of privacy and fairness.
        \item While the authors might fear that complete honesty about limitations might be used by reviewers as grounds for rejection, a worse outcome might be that reviewers discover limitations that aren't acknowledged in the paper. The authors should use their best judgment and recognize that individual actions in favor of transparency play an important role in developing norms that preserve the integrity of the community. Reviewers will be specifically instructed to not penalize honesty concerning limitations.
    \end{itemize}

\item {\bf Theory assumptions and proofs}
    \item[] Question: For each theoretical result, does the paper provide the full set of assumptions and a complete (and correct) proof?
    \item[] Answer: \answerYes{} 
    \item[] Justification: For each theoretical result, the paper provides the full set of assumptions and a complete (and correct) proof.
    \item[] Guidelines:
    \begin{itemize}
        \item The answer NA means that the paper does not include theoretical results. 
        \item All the theorems, formulas, and proofs in the paper should be numbered and cross-referenced.
        \item All assumptions should be clearly stated or referenced in the statement of any theorems.
        \item The proofs can either appear in the main paper or the supplemental material, but if they appear in the supplemental material, the authors are encouraged to provide a short proof sketch to provide intuition. 
        \item Inversely, any informal proof provided in the core of the paper should be complemented by formal proofs provided in appendix or supplemental material.
        \item Theorems and Lemmas that the proof relies upon should be properly referenced. 
    \end{itemize}

    \item {\bf Experimental result reproducibility}
    \item[] Question: Does the paper fully disclose all the information needed to reproduce the main experimental results of the paper to the extent that it affects the main claims and/or conclusions of the paper (regardless of whether the code and data are provided or not)?
    \item[] Answer: \answerYes{} 
    \item[] Justification: The paper fully discloses all the information needed to reproduce the main experimental results of the paper to the extent that it affects the main claims and/or conclusions of the paper.
    \item[] Guidelines:
    \begin{itemize}
        \item The answer NA means that the paper does not include experiments.
        \item If the paper includes experiments, a No answer to this question will not be perceived well by the reviewers: Making the paper reproducible is important, regardless of whether the code and data are provided or not.
        \item If the contribution is a dataset and/or model, the authors should describe the steps taken to make their results reproducible or verifiable. 
        \item Depending on the contribution, reproducibility can be accomplished in various ways. For example, if the contribution is a novel architecture, describing the architecture fully might suffice, or if the contribution is a specific model and empirical evaluation, it may be necessary to either make it possible for others to replicate the model with the same dataset or provide access to the model. In general. Releasing code and data is often one good way to accomplish this, but reproducibility can also be provided via detailed instructions for how to replicate the results, access to a hosted model (e.g., in the case of a large language model), releasing of a model checkpoint, or other means that are appropriate to the research performed.
        \item While NeurIPS does not require releasing code, the conference does require all submissions to provide some reasonable avenue for reproducibility, which may depend on the nature of the contribution. For example
        \begin{enumerate}
            \item If the contribution is primarily a new algorithm, the paper should make it clear how to reproduce that algorithm.
            \item If the contribution is primarily a new model architecture, the paper should describe the architecture clearly and fully.
            \item If the contribution is a new model (e.g., a large language model), then there should either be a way to access this model for reproducing the results or a way to reproduce the model (e.g., with an open-source dataset or instructions for how to construct the dataset).
            \item We recognize that reproducibility may be tricky in some cases, in which case authors are welcome to describe the particular way they provide for reproducibility. In the case of closed-source models, it may be that access to the model is limited in some way (e.g., to registered users), but it should be possible for other researchers to have some path to reproducing or verifying the results.
        \end{enumerate}
    \end{itemize}

\item {\bf Open access to data and code}
    \item[] Question: Does the paper provide open access to the data and code, with sufficient instructions to faithfully reproduce the main experimental results, as described in the supplemental material?
    \item[] Answer: \answerYes{} 
    \item[] Justification: The paper provides open access to the data and code, with sufficient instructions to faithfully reproduce the main experimental results.
    \item[] Guidelines:
    \begin{itemize}
        \item The answer NA means that the paper does not include experiments requiring code.
        \item Please see the NeurIPS code and data submission guidelines (\url{https://nips.cc/public/guides/CodeSubmissionPolicy}) for more details.
        \item While we encourage the release of code and data, we understand that this might not be possible, so “No” is an acceptable answer. Papers cannot be rejected simply for not including code, unless this is central to the contribution (e.g., for a new open-source benchmark).
        \item The instructions should contain the exact command and environment needed to run to reproduce the results. See the NeurIPS code and data submission guidelines (\url{https://nips.cc/public/guides/CodeSubmissionPolicy}) for more details.
        \item The authors should provide instructions on data access and preparation, including how to access the raw data, preprocessed data, intermediate data, and generated data, etc.
        \item The authors should provide scripts to reproduce all experimental results for the new proposed method and baselines. If only a subset of experiments are reproducible, they should state which ones are omitted from the script and why.
        \item At submission time, to preserve anonymity, the authors should release anonymized versions (if applicable).
        \item Providing as much information as possible in supplemental material (appended to the paper) is recommended, but including URLs to data and code is permitted.
    \end{itemize}

\item {\bf Experimental setting/details}
    \item[] Question: Does the paper specify all the training and test details (e.g., data splits, hyperparameters, how they were chosen, type of optimizer, etc.) necessary to understand the results?
    \item[] Answer: \answerYes{} 
    \item[] Justification: The paper elaborates on all the details of training and testing (such as data partitioning, hyperparameters, how they are selected, types of optimizers, etc.), as can be seen in $\S$~\ref{Experimental Setup} and the Appx.~$\S$~\ref{dataset details}.
    \item[] Guidelines:
    \begin{itemize}
        \item The answer NA means that the paper does not include experiments.
        \item The experimental setting should be presented in the core of the paper to a level of detail that is necessary to appreciate the results and make sense of them.
        \item The full details can be provided either with the code, in the appendix, or as supplemental material.
    \end{itemize}

\item {\bf Experiment statistical significance}
    \item[] Question: Does the paper report error bars suitably and correctly defined, or other appropriate information about the statistical significance of the experiments?
    \item[] Answer: \answerYes{} 
    \item[] Justification: The paper reports error bars suitably and correctly defined.
    \item[] Guidelines:
    \begin{itemize}
        \item The answer NA means that the paper does not include experiments.
        \item The authors should answer "Yes" if the results are accompanied by error bars, confidence intervals, or statistical significance tests, at least for the experiments that support the main claims of the paper.
        \item The factors of variability that the error bars are capturing should be clearly stated (for example, train/test split, initialization, random drawing of some parameter, or overall run with given experimental conditions).
        \item The method for calculating the error bars should be explained (closed form formula, call to a library function, bootstrap, etc.)
        \item The assumptions made should be given (e.g., normally distributed errors).
        \item It should be clear whether the error bar is the standard deviation or the standard error of the mean.
        \item It is OK to report 1-sigma error bars, but one should state it. The authors should preferably report a 2-sigma error bar rather than state that they have a 96\% CI if the hypothesis of Normality of errors is not verified.
        \item For asymmetric distributions, the authors should be careful not to show in tables or figures symmetric error bars that would yield results that are out of range (e.g., negative error rates).
        \item If error bars are reported in tables or plots, the authors should explain in the text how they were calculated and reference the corresponding figures or tables in the text.
    \end{itemize}

\item {\bf Experiments compute resources}
    \item[] Question: For each experiment, does the paper provide sufficient information on the computer resources (type of compute workers, memory, time of execution) needed to reproduce the experiments?
    \item[] Answer: \answerYes{} 
    \item[] Justification: The paper provides sufficient information on the computer resources needed to reproduce the experiments.
    \item[] Guidelines:
    \begin{itemize}
        \item The answer NA means that the paper does not include experiments.
        \item The paper should indicate the type of compute workers CPU or GPU, internal cluster, or cloud provider, including relevant memory and storage.
        \item The paper should provide the amount of compute required for each of the individual experimental runs as well as estimate the total compute. 
        \item The paper should disclose whether the full research project required more compute than the experiments reported in the paper (e.g., preliminary or failed experiments that didn't make it into the paper). 
    \end{itemize}
    
\item {\bf Code of ethics}
    \item[] Question: Does the research conducted in the paper conform, in every respect, with the NeurIPS Code of Ethics \url{https://neurips.cc/public/EthicsGuidelines}?
    \item[] Answer: \answerYes{} 
    \item[] Justification: The research conducted in the paper conform, in every respect, with the NeurIPS Code of Ethics.
    \item[] Guidelines:
    \begin{itemize}
        \item The answer NA means that the authors have not reviewed the NeurIPS Code of Ethics.
        \item If the authors answer No, they should explain the special circumstances that require a deviation from the Code of Ethics.
        \item The authors should make sure to preserve anonymity (e.g., if there is a special consideration due to laws or regulations in their jurisdiction).
    \end{itemize}

\item {\bf Broader impacts}
    \item[] Question: Does the paper discuss both potential positive societal impacts and negative societal impacts of the work performed?
    \item[] Answer: \answerYes{} 
    \item[] Justification: This paper discusses this work's possible positive and negative social impacts in the sections in the Appx.~\ref{Broader impacts}.
    \item[] Guidelines:
    \begin{itemize}
        \item The answer NA means that there is no societal impact of the work performed.
        \item If the authors answer NA or No, they should explain why their work has no societal impact or why the paper does not address societal impact.
        \item Examples of negative societal impacts include potential malicious or unintended uses (e.g., disinformation, generating fake profiles, surveillance), fairness considerations (e.g., deployment of technologies that could make decisions that unfairly impact specific groups), privacy considerations, and security considerations.
        \item The conference expects that many papers will be foundational research and not tied to particular applications, let alone deployments. However, if there is a direct path to any negative applications, the authors should point it out. For example, it is legitimate to point out that an improvement in the quality of generative models could be used to generate deepfakes for disinformation. On the other hand, it is not needed to point out that a generic algorithm for optimizing neural networks could enable people to train models that generate Deepfakes faster.
        \item The authors should consider possible harms that could arise when the technology is being used as intended and functioning correctly, harms that could arise when the technology is being used as intended but gives incorrect results, and harms following from (intentional or unintentional) misuse of the technology.
        \item If there are negative societal impacts, the authors could also discuss possible mitigation strategies (e.g., gated release of models, providing defenses in addition to attacks, mechanisms for monitoring misuse, mechanisms to monitor how a system learns from feedback over time, improving the efficiency and accessibility of ML).
    \end{itemize}
    
\item {\bf Safeguards}
    \item[] Question: Does the paper describe safeguards that have been put in place for responsible release of data or models that have a high risk for misuse (e.g., pretrained language models, image generators, or scraped datasets)?
    \item[] Answer: \answerNA{} 
    \item[] Justification: The paper poses no such risks.
    \item[] Guidelines:
    \begin{itemize}
        \item The answer NA means that the paper poses no such risks.
        \item Released models that have a high risk for misuse or dual-use should be released with necessary safeguards to allow for controlled use of the model, for example by requiring that users adhere to usage guidelines or restrictions to access the model or implementing safety filters. 
        \item Datasets that have been scraped from the Internet could pose safety risks. The authors should describe how they avoided releasing unsafe images.
        \item We recognize that providing effective safeguards is challenging, and many papers do not require this, but we encourage authors to take this into account and make a best faith effort.
    \end{itemize}

\item {\bf Licenses for existing assets}
    \item[] Question: Are the creators or original owners of assets (e.g., code, data, models), used in the paper, properly credited and are the license and terms of use explicitly mentioned and properly respected?
    \item[] Answer: \answerYes{} 
    \item[] Justification: This paper has complied with all relevant licenses and usage terms, and has credited all original authors.
    \item[] Guidelines:
    \begin{itemize}
        \item The answer NA means that the paper does not use existing assets.
        \item The authors should cite the original paper that produced the code package or dataset.
        \item The authors should state which version of the asset is used and, if possible, include a URL.
        \item The name of the license (e.g., CC-BY 4.0) should be included for each asset.
        \item For scraped data from a particular source (e.g., website), the copyright and terms of service of that source should be provided.
        \item If assets are released, the license, copyright information, and terms of use in the package should be provided. For popular datasets, \url{paperswithcode.com/datasets} has curated licenses for some datasets. Their licensing guide can help determine the license of a dataset.
        \item For existing datasets that are re-packaged, both the original license and the license of the derived asset (if it has changed) should be provided.
        \item If this information is not available online, the authors are encouraged to reach out to the asset's creators.
    \end{itemize}

\item {\bf New assets}
    \item[] Question: Are new assets introduced in the paper well documented and is the documentation provided alongside the assets?
    \item[] Answer: \answerYes{} 
    \item[] Justification: New assets introduced in the paper well documented and is the documentation provided alongside the assets.
    \item[] Guidelines:
    \begin{itemize}
        \item The answer NA means that the paper does not release new assets.
        \item Researchers should communicate the details of the dataset/code/model as part of their submissions via structured templates. This includes details about training, license, limitations, etc. 
        \item The paper should discuss whether and how consent was obtained from people whose asset is used.
        \item At submission time, remember to anonymize your assets (if applicable). You can either create an anonymized URL or include an anonymized zip file.
    \end{itemize}

\item {\bf Crowdsourcing and research with human subjects}
    \item[] Question: For crowdsourcing experiments and research with human subjects, does the paper include the full text of instructions given to participants and screenshots, if applicable, as well as details about compensation (if any)? 
    \item[] Answer: \answerNA{} 
    \item[] Justification: The paper does not involve crowdsourcing nor research with human subjects.
    \item[] Guidelines:
    \begin{itemize}
        \item The answer NA means that the paper does not involve crowdsourcing nor research with human subjects.
        \item Including this information in the supplemental material is fine, but if the main contribution of the paper involves human subjects, then as much detail as possible should be included in the main paper. 
        \item According to the NeurIPS Code of Ethics, workers involved in data collection, curation, or other labor should be paid at least the minimum wage in the country of the data collector. 
    \end{itemize}

\item {\bf Institutional review board (IRB) approvals or equivalent for research with human subjects}
    \item[] Question: Does the paper describe potential risks incurred by study participants, whether such risks were disclosed to the subjects, and whether Institutional Review Board (IRB) approvals (or an equivalent approval/review based on the requirements of your country or institution) were obtained?
    \item[] Answer: \answerNA{} 
    \item[] Justification: The paper does not involve crowdsourcing nor research with human subjects.
    \item[] Guidelines:
    \begin{itemize}
        \item The answer NA means that the paper does not involve crowdsourcing nor research with human subjects.
        \item Depending on the country in which research is conducted, IRB approval (or equivalent) may be required for any human subjects research. If you obtained IRB approval, you should clearly state this in the paper. 
        \item We recognize that the procedures for this may vary significantly between institutions and locations, and we expect authors to adhere to the NeurIPS Code of Ethics and the guidelines for their institution. 
        \item For initial submissions, do not include any information that would break anonymity (if applicable), such as the institution conducting the review.
    \end{itemize}

\item {\bf Declaration of LLM usage}
    \item[] Question: Does the paper describe the usage of LLMs if it is an important, original, or non-standard component of the core methods in this research? Note that if the LLM is used only for writing, editing, or formatting purposes and does not impact the core methodology, scientific rigorousness, or originality of the research, declaration is not required.
    \item[] Answer: \answerNA{} 
    \item[] Justification: The core method development in this research does not
involve LLMs as any important, original, or non-standard components.
    \item[] Guidelines:
    \begin{itemize}
        \item The answer NA means that the core method development in this research does not involve LLMs as any important, original, or non-standard components.
        \item Please refer to our LLM policy (\url{https://neurips.cc/Conferences/2025/LLM}) for what should or should not be described.
    \end{itemize}

\end{enumerate}

\end{document}